\documentclass{article}

\usepackage{graphicx}
\usepackage{amsmath, amssymb, color}
\usepackage{natbib}
\usepackage{bbm}
\usepackage{url}
\usepackage[lined,figure]{algorithm2e}

\oddsidemargin .25in
\evensidemargin .25in
\marginparwidth 0.07 true in
\topmargin -0.5in
\addtolength{\headsep}{0.25in}
\textheight 8.5 true in
\textwidth 6.0 true in


\newcommand{\BlackBox}{\rule{1.5ex}{1.5ex}}  
\newenvironment{proof}{\par\noindent{\bf Proof\ }}{\hfill\BlackBox\\[2mm]}
 
\newtheorem{theorem}{Theorem}
\newtheorem{lemma}[theorem]{Lemma} 
\newtheorem{remark}[theorem]{Remark}
\newtheorem{corollary}[theorem]{Corollary}

\newtheorem{problem}{Problem}

\begin{document}

\title{\Large\textsc{Sparse Learning over Infinite Subgraph Features}}

\author{
 Ichigaku Takigawa\footnote{Creative Research Institution, Hokkaido
 University. Email: takigawa@cris.hokudai.ac.jp.} 
 \hspace{1pt} and \hspace{1pt}
 Hiroshi Mamitsuka\footnote{Institute for Chemical Research, Kyoto
 University. Email: mami@kuicr.kyoto-u.ac.jp.}
}
\maketitle

\begin{abstract}
 We present a supervised-learning algorithm from graph data---a set of
 graphs---for arbitrary twice-differentiable loss functions and sparse
 linear models over all possible subgraph features. 
 To date, it has been shown that under all possible
 subgraph features, several types of sparse learning, such as
 Adaboost, LPBoost, LARS/LASSO, and sparse PLS regression, can be
 performed.
Particularly emphasis is placed on simultaneous learning of relevant
 features from an infinite set of candidates.
We first generalize techniques used in all these preceding studies to
 derive an unifying bounding technique for arbitrary separable functions. 
We then carefully use this bounding to make block
 coordinate gradient descent feasible over infinite subgraph features,
 resulting in a fast converging algorithm that can solve a wider class of
 sparse learning problems over graph data.
 We also empirically study the differences from the existing approaches in
 convergence property, selected subgraph features, and search-space
 sizes. We further discuss several unnoticed issues in sparse learning
 over all possible subgraph features.
\end{abstract}

\section{Introduction}


We consider the problem of modeling the response $y \in \mathcal{Y}$ to an input graph
$g \in \mathcal{G}$ as $y=\mu(g)$ with a model function $\mu$ from $n$ given observations
\begin{equation}\label{setting}
 \{(g_1,y_1),(g_2,y_2),\dots,(g_n,y_n)\}, \quad g_i \in
 \mathcal{G}, \quad y_i \in \mathcal{Y},
\end{equation}
where $\mathcal{G}$ is a set of all finite-size, connected, node-and-edge-labeled,
undirected graphs, and $\mathcal{Y}$ is a label space, i.e. a set
of real numbers $\mathbb{R}$ for regression and a set of nominal or
binary values such as $\{T,F\}$ or $\{-1,1\}$ for classification
\citep{Kudo:2005,Tsuda:2007,Saigo:2009,Saigo:2008a}.




Problem \eqref{setting} often arises in life sciences to understand
the relationship between the function and structure of biomolecules. A
typical example is the Quantitative Structure-Activity Relationship (QSAR)
analysis of lead chemical compounds in drug design, where the topology
of chemical structures is encoded as molecular
graphs~\citep{Takigawa:2013}. The properties of
pharmaceutical compounds in human body---typically of interest---such as
safety, efficacy, ADME, and
toxicity involve many contributing factors at various levels:
molecules, cells, cell population, tissues, organs and individuals
(entire body).
This complexity makes physico-chemical simulation hard, and hence,
statistical modeling using observed data becomes a more powerful
approach to quantify such complex properties.
Other examples of objects encodable as graphs include nucleotide or amino-acid
sequences \citep{Vert:2006}, sugar chains or glycans
\citep{Yamanishi:2007,Hashimoto:2008}, RNA secondary structures
\citep{Karklin:2005,Hamada:2006}, protein 3D structures
\citep{Borgwardt:2005}, and biological networks \citep{Vert:2007}. 
Graphs are pervasive data structures in not only life sciences but also
a variety of other fields, especially computer sciences, where typical
examples are images \citep{Harchaoui:2007,Nowozin:2007} and natural
language \citep{Kudo:2005}. 

\subsection{Problem Setting}

\begin{figure}[t]
\centering
\includegraphics[width=140mm]{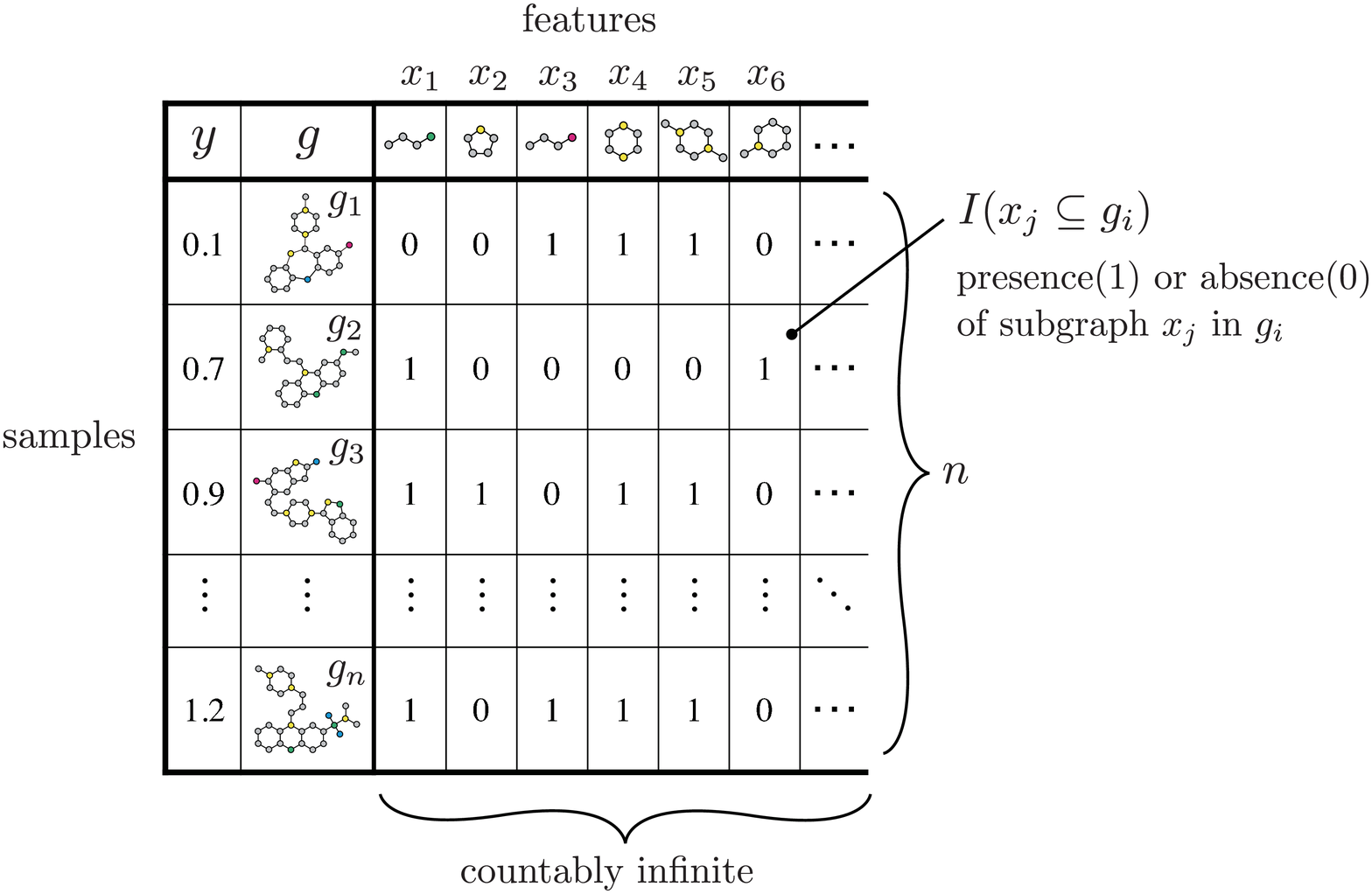}
\caption{The design matrix for $g_1,g_2,\dots,g_n$ defined by subgraph
 indicators. This matrix can be obtained explicitly if the size of
 subgraph features is limited beforehand. 
In general, however, typical subgraph features are countably infinite,
 resulting in that we cannot obtain this matrix explicitly.}
 \label{fig:feat_table}
\end{figure}

Our interest in Problem \eqref{setting} is to statistically model the
response $y$ to an input graph $g$ as $y=\mu(g)$ with a model function $\mu$.
Whether explicitly or not, existing methods use
\textit{subgraph indicators} as explanatory variables. A subgraph
indicator $I(x\subseteq g) \in \{0,1\}$ indicates the presence or
absence of a subgraph feature $x \in \mathcal{G}$ in the input graph $g
\in \mathcal{G}$.  

Once if we fix explanatory variables to use, this problem takes a typical
form of supervised-learning problems. We can then think of the
sample-by-feature design matrix 
that is characterized by the set of all possible subgraph indicators.
Figure \ref{fig:feat_table} shows an example of the design matrix.
However, we cannot explicitly obtain this matrix in practical
situations. The number of possible subgraphs is countably infinite, and
it is impossible in general to enumerate all possible
subgraphs beforehand. Moreover, we need to test subgraph isomorphism
to obtain each of $0$ or $1$ in the matrix. Only after we query a
particular subgraph $x_i \in \mathcal{G}$ to a given set of $n$ graphs, 
$g_1,g_2,\dots,g_n$, we can obtain the corresponding column vector to
$x_i$. At the beginning of the learning phase, we have no column vectors
at all.

Hence the main difficulty of statistical learning problems from a set of
graphs is \textit{simultaneous learning} of the model function $\mu$ and
a \textit{finite} set of relevant features $\{x_1, x_2, x_3, \dots\} \subset
\mathcal{G}$. In practice we need to select some features among all possible
subgraph features, but we do not know in advance subgraph features that
are most relevant to a given problem. We thus need to search all
possible subgraph features \textit{during} the learning process. 

In this paper, we consider a standard linear model over all possible subgraph
indicators 
\[
 \mu(g; \beta,\beta_0) := \beta_0 + \sum_{j=1}^\infty \beta_j
  I(x_j \subseteq g), \quad \beta = (\beta_1,\beta_2,\dots),
\]
where we assume the \textit{sparsity} on coefficient parameters: Most
coefficients $\beta_1,\beta_2,\dots$ are zero and a few of
them, at least a finite number of them, are nonzero. We write $\mu(g;
\beta,\beta_0)$ simply as a function of $g$, $\mu(g)$, whenever there is
no possibility of confusion.

\subsection{Contributions of This Work}

\begin{itemize}
 \item \textbf{Main Result:} We present a generic algorithm that
       solves the following class of supervised-learning
       problems 
\begin{equation}\label{maintarg}
  \min_{\beta,\beta_0} \sum_{i=1}^n
L\bigl(y_i,\mu(g_i;\beta,\beta_0)\bigr) + \lambda_1 \|\beta\|_1 + \frac{\lambda_2}{2}
\|\beta\|_2^2, 
\end{equation}
where $L$ is a twice differentiable loss function and
       $\lambda_1,\lambda_2 \geqslant 0$. We here pose the elastic-net
       type regularizer: The third term of 2-norm $\|\beta\|_2^2$
       encourages highly correlated
       features to be averaged, while the second term of 1-norm $\|\beta\|_1$, a
       sparsity-inducing regularizer, encourages a sparse solution in
       the coefficients of these averaged features. Subgraph features
       have a highly correlated nature, due to the inclusion relation
       that $x_i$ can be a subgraph of another feature $x_j$, and due to
       the tendency that structural similarity between $x_i$ and $x_j$
       implies strong correlation between their indicators
       $I(x_i\subseteq g)$ and $I(x_j\subseteq g)$. Our framework
       targets a wider class of sparse learning problems than the
       preceding studies such as
       Adaboost \citep{Kudo:2005}, LARS/LASSO \citep{Tsuda:2007}, sparse
       partial-least-squares (PLS) regression
\citep{Saigo:2008a}, sparse principal component analysis (PCA)
\citep{Saigo:2008b}, and LPBoost \citep{Saigo:2009}.
\item The proposed algorithm directly optimizes the objective function
      by making \textbf{Block
      Coordinate Gradient Descent} 
      \citep{Tseng:2009,Yun:2011} feasible over all possible subgraph
      indicators. It updates
      \textit{multiple} nonzero features at each iteration with
      simultaneous subgraph feature learning, while existing methods are
      iterative procedures to find the best \textit{single} feature at each
      iteration that improves the current model most by a
      branch-and-bound technique. The
      proposed algorithm shows faster and more stable convergence.
\item We formally generalize the underlying bounding techniques behind
      existing methods to what we call \textbf{Morishita-Kudo
      Bounds} by extending the technique used for Adaboost
      \citep{Morishita:2002,Kudo:2005} to that for arbitrary \textit{separable}
      objective functions. This bounding is not only one of the keys to
      develop the proposed algorithm, but also gives a coherent and unifying
      view to understand how the existing methods work. We see this in
      several examples where this bounding easily and consistently gives
      the pruning bounds of existing methods. 
\item We also present another important technique, which we call
      \textbf{Depth-First Dictionary Passing}. It allows to establish the
      efficient search-space pruning of our proposed algorithm. This
       technique is important, because unlike
      existing methods, search-space pruning by the Morishita-Kudo bounds is not
      enough to making block coordinate gradient descent feasible over
      infinite subgraph features. 
\item We studied the convergence behavior of our method and
      the difference from two major existing approaches
      \citep{Kudo:2005,Saigo:2009} through numerical
      experiments. We used a typical example of 1-norm-penalized
      logistic regression estimated by the proposed algorithm. For
      systematic evaluations, we developed a random graph generator to
      generate a controllable set of graphs for binary
      classification. As a result,
      in addition to understanding the difference in the convergence
      behaviors, we pointed out and discussed several unnoticed issues including
      the size bias of subgraph features and the equivalence class of
      subgraph indicators.
\end{itemize}

\subsection{Related Work}\label{relatedwork}

Under the setting of Figure \ref{fig:feat_table}, the most well-used
machine learning approach for solving Problem \eqref{setting} would be
\textit{graph kernel method}. 
So far various types of graph kernels have been developed and also
achieved success in applications such as bioinformatics and
chemoinformatics \citep{Kashima:2003,Gartner:2003,Ralaivola:2005,
Frohlich:2006, Mahe:2005, Mahe:2006, Mahe:2009, Kondor:2008,
Kondor:2009, Vishwanathan:2010,Shrvashidze:2011}. For example, we can
compute the inner product of feature vectors of $g_i$ and $g_j$ in
Figure \ref{fig:feat_table}, indirectly through a kernel trick as
\[
k(g_i,g_j):=\text{\# of common subgraphs between $g_i$ and $g_j$},
\]
which is, as well as many other proper graph kernels, a special case
of \textit{R-convolution kernels} by \cite{Haussler:1999}.
However \cite{Gartner:2003} showed that (i) computing any complete graph
kernel is at least as hard as deciding whether two graphs are
isomorphic, and (ii) computing the inner product in the subgraph feature
space is NP-hard. Hence, any practical graph kernel
restricts the subgraph features to some limited types such as paths and
trees, bounded-size subgraphs, or
heuristically inspired subgraph features in application-specific
situations. 

In chemoinformatics, there have been many methods for computing
the fingerprint descriptor (feature vector) of a given
molecular graph. The predefined set of subgraph features engineered with
the domain knowledge would be still often used, such as the existence of
aromatic-ring or a particular type of functional groups. 
The success of this approach highly depends on whether
the choice of features well fits the intrinsic property of data. 
To address this lack of
generality, chemoinformatics community has developed data-driven
fingerprints, for example, hashed fingerprints (such as
Daylight\footnote{Daylight 
Theory Manual. Chapter 6: Fingerprints---Screening and
Similarity. \url{http://www.daylight.com/dayhtml/doc/theory/}} and
ChemAxon\footnote{Chemical Hashed Fingerprints.
\url{https://www.chemaxon.com/jchem/doc/user/fingerprint.html}}),
extended connectivity fingerprints (ECFP) \citep{Rogers:2010}, frequent
subgraphs \citep{Dashpande:2005}, and bounded-size graph fingerprint
\citep{Wale:2008}. The data-driven
fingerprints adaptively choose the subgraph features from a limited
type of subgraphs, resulting in that these fingerprints are similar to
practical graph kernels.

These two existing, popular approaches are based on subgraph
features, which are limited to specific type ones only.
In contrast, a series of inspiring studies for \textit{simultaneous
feature learning via sparse modeling} has been made
\citep{Kudo:2005,Tsuda:2006,Tsuda:2007,Tsuda:2008,Saigo:2008a,Saigo:2008b,Saigo:2009}.
These approaches involve automatic selection of relevant features from
\textit{all possible subgraphs} during the learning process.
Triggered by the seminal paper by \cite{Kudo:2005}, it has been shown
that we can perform simultaneous learning of not only the model
parameters but also relevant
subgraph features from all possible subgraphs in several
machine-learning problems such as Adaboost
\citep{Kudo:2005}, LARS/LASSO \citep{Tsuda:2007}, 
sparse partial-least-squares (PLS) regression
\citep{Saigo:2008a}, sparse principal component analysis (PCA)
\citep{Saigo:2008b}, and LPBoost \citep{Saigo:2009}. This paper aims to
give a coherent and unifying view to understand how these existing
methods work well, and then present a more general framework, which is
applicable to a wider class of learning problems. 

\section{Preliminaries}

In this section, we formally define our problem and notations~(Section
\ref{notation}), and summarize basic known results needed for the
subsequent sections~(Section \ref{freqmine}).

\subsection{Problem Formulation and Notations}\label{notation}

First, we formulate our problem described briefly in Introduction. 
We follow the 
traditional setting of learning from graph data.
Let $\mathcal{G}$ be a set of all finite-size, node-and-edge-labeled,
connected, undirected graphs with finite discrete label sets for nodes
and labels. We assume that data graphs $g_1,g_2,\dots,g_n$ and their subgraph
features $x_1,x_2,\dots$ are all in $\mathcal{G}$. We write the given
 set of $n$ graphs as
\[
 \mathcal{G}_n := \{g_1,g_2,\dots,g_n\},
\]
and the label space as $\mathcal{Y}$, for example,
$\mathcal{Y}=\mathbb{R}$ for regression and 
$\mathcal{Y}=\{T,F\}$ or $\{-1,1\}$ for binary classification. 

Then we study the following problem where the 1-norm regularizer
$\|\beta\|_1$ induces a sparse solution for $\beta$, i.e. a solution, in
which only a few of $\beta_1,\beta_2,\dots$ are nonzero. 
The nonzero coefficients bring the effect of \textit{automatic
selection} of relevant features among all possible subgraphs.
\begin{problem}\label{prob1}
For a given pair of observed graphs and their responses
\[
  \{(g_1,y_1),(g_2,y_2),\dots,(g_n,y_n)\}, \quad g_i \in
 \mathcal{G}, \quad y_i \in \mathcal{Y},
\]
solve the following supervised-learning problem:
\begin{equation}\label{learningproblem}
  \min_{\beta,\beta_0} \sum_{i=1}^n
L\bigl(y_i,\mu(g_i;\beta,\beta_0)\bigr) + \lambda_1 \|\beta\|_1 + \frac{\lambda_2}{2}
\|\beta\|_2^2, 
\end{equation}
where $L$ is a twice-differentiable loss function, $\lambda_1,\lambda_2
 \geqslant 0$, and the model function $\mu:
 \mathcal{G}\to\mathcal{Y}$ is defined as 
\begin{equation}\label{model}
 \mu(g; \beta,\beta_0) := \beta_0 + \sum_{j=1}^\infty \beta_j
  I(x_j \subseteq g), \quad \beta = (\beta_1,\beta_2,\dots),
\end{equation}
where $x_1,x_2,\dots \in \mathcal{G} $.
\end{problem}

Next, we define several notations that we use throughout the paper.
$I(A)$ is a binary indicator function of an event $A$,
meaning that $I(A)=1$ if $A$ is true; otherwise $I(A)=0$. The notation
$x\subseteq g$ denotes the subgraph isomorphism that $g$ contains a
subgraph that is isomorphic to $x$. Hence the subgraph indicator
$I(x\subseteq g) = 1$ if $x\subseteq g$; otherwise $0$.

Given $\mathcal{G}_n$, we define the union of all subgraphs of
$g\in\mathcal{G}_n$ as
\[
 \mathcal{X}(\mathcal{G}_n) := \{ x \in \mathcal{G}\mid x \subseteq g, g
 \in \mathcal{G}_n \}.
\]
It is important to note that $\mathcal{X}(\mathcal{G}_n)$ is a finite
set and is equal to the set of subgraphs each
of which is contained in at least one of the given graphs in $\mathcal{G}_n$ as
\[
 \mathcal{X}(\mathcal{G}_n) = \{ x \in \mathcal{G}\mid 
\text{$\exists g\in\mathcal{G}_n$ such that $x \subseteq g$} \}.
\] 
For given $\mathcal{X}(\mathcal{G}_n)$, we can construct an
\textit{enumeration tree} over $
\mathcal{X}(\mathcal{G}_n)$ that is fully defined and explained 
in Section \ref{freqmine}. Also we write a subtree of 
$\mathcal{T}(\mathcal{G}_n)$ rooted at $x \in
\mathcal{X}(\mathcal{G}_n)$ as $\mathcal{T}(x)$ (See Figure
\ref{fig:boolean}).

For given $\mathcal{G}_n$ and a subgraph feature $x$, we define the
\textit{observed indicator vector} of $x$ over $\mathcal{G}_n$ as 
\begin{equation}\label{regvec}
I_{\mathcal{G}_n}(x) :=
 (I(x\subseteq g_1),I(x\subseteq g_2),\dots,I(x\subseteq g_n)).
\end{equation}
From the definition, $I_{\mathcal{G}_n}(x)$ is an $n$-dimensional
Boolean vector, that is, $I_{\mathcal{G}_n}(x) \in \{0,1\}^n$, which
corresponds to each column vector of the design matrix in Figure \ref{fig:feat_table}.

For an $n$-dimensional Boolean vector $u := (u_1,u_2,\dots,u_n)\in\{0,1\}^n$, we write the
index set of nonzero elements and that of zero elements as
\[
1(u) := \{ i \mid u_i =1\} \subseteq \{1,2,\dots,n\},\qquad
0(u) := \{ i \mid u_i =0\} \subseteq \{1,2,\dots,n\}.
\]
From the definition, we have $1(u)\cup 0(u) = \{1,2,\dots,n\}$ and
$1(u)\cap 0(u) = \varnothing$. For simplicity, we also use the same notation
for the observed indicator vector of $x$ as
\begin{align*}
 1(x)&:=1(I_{\mathcal{G}_n}(x)) = \{i\mid x \subseteq g_i, g_i \in \mathcal{G}_n\},\\
 0(x)&:= 0(I_{\mathcal{G}_n}(x)) =\{i \mid x \not\subseteq g_i, g_i \in \mathcal{G}_n\}.
\end{align*}
whenever there is no possibility of confusion.

\subsection{Structuring the Search Space}\label{freqmine}

Our target model of \eqref{model} includes an infinite number of terms that
directly come from the infinite number of possible subgraph features $x_1,x_2,\dots$.
We first briefly describe the algorithmic trick of reducing a set of countably
infinite subgraph features to a well-structured \textit{finite} search
space which
 forms the common backbone of many preceding studies
\citep{Kudo:2005,Tsuda:2007,Saigo:2008a,Saigo:2009,Takigawa:2011a}.

In Problem \eqref{learningproblem}, we observe that the
model function $\mu(g)$ appears only as the function values at given
graphs, $\mu(g_1), \mu(g_2), \dots, \mu(g_n)$. Correspondingly, the term
$I(x_j \subseteq g)$ in the summation of the model \eqref{model} also only
appears as the values at $\mathcal{G}_n$, that is, as 
$I(x_j \subseteq g_1),I(x_j \subseteq g_2),\dots,I(x_j \subseteq g_n)$ for each $x_j$.
Hence, as far as Problem \eqref{learningproblem} concerns, 
we can ignore all subgraphs $x'$ that never occur in $g \in
\mathcal{G}_n$ because $I(x' \subseteq g_i)=0$ for
$i=1,2,\dots,n$ and they do not contribute to any final value of
$\mu(g_1),\mu(g_2),\dots,\mu(g_n)$. Therefore, for any $g \in \mathcal{G}_n$, we have
\[
 \mu(g; \beta,\beta_0) := \beta_0 + \sum_{j=1}^\infty \beta_j
  I(x_j \subseteq g) = \beta_0 + \sum_{x_j \in
  \mathcal{X}(\mathcal{G}_n)}
\beta_j
  I(x_j \subseteq g).
\]
Furthermore, it will be noticed that $\mathcal{X}(\mathcal{G}_n)$ equals
to a set of frequent subgraphs in $\mathcal{G}_n$ whose frequency
$\geqslant 1$, and the points so far can be summarized as follows.

\begin{lemma}\label{finiteness}
Without loss of generality, when we consider Problem \eqref{learningproblem},
we can limit the search space of subgraphs
 $x_j$ to the finite set
 $\mathcal{X}(\mathcal{G}_n)$ that is equivalent to a set of all
 frequent subgraphs in $\mathcal{G}_n$ with the minimum support
 threshold of one.
\end{lemma}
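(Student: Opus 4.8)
The plan is to prove the statement in two independent pieces: first the \emph{without loss of generality} reduction, which is an optimization argument, and then the identification of $\mathcal{X}(\mathcal{G}_n)$ as a finite set of frequent subgraphs, which is essentially combinatorial and definitional. I would begin by isolating exactly how the decision variable $\beta$ enters the objective of \eqref{learningproblem}: through the loss term it enters only via the scalar values $\mu(g_1;\beta,\beta_0),\dots,\mu(g_n;\beta,\beta_0)$, and separately it enters through the two regularizers $\|\beta\|_1$ and $\|\beta\|_2^2$.

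For the reduction, the key observation (already made in the text preceding the lemma) is that any subgraph $x_j \notin \mathcal{X}(\mathcal{G}_n)$ satisfies $I(x_j \subseteq g_i) = 0$ for every $i \in \{1,\dots,n\}$, so the corresponding coefficient $\beta_j$ never appears in any $\mu(g_i)$ and hence not in the loss term at all. Therefore such a $\beta_j$ influences the objective only through the regularizers. Since both $\|\beta\|_1 = \sum_j |\beta_j|$ and $\|\beta\|_2^2 = \sum_j \beta_j^2$ are separable across coordinates and each coordinate term is minimized at $\beta_j = 0$, replacing any feasible $\beta$ by the vector $\tilde\beta$ that agrees with $\beta$ on $\mathcal{X}(\mathcal{G}_n)$ and is zero off it leaves the loss unchanged and does not increase either regularizer. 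Thus the objective at $\tilde\beta$ is no larger than at $\beta$, which shows the infimum over all $\beta$ equals the infimum over $\beta$ supported on $\mathcal{X}(\mathcal{G}_n)$, establishing the reduction.

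For the second piece, I would note that each $g_i$ is a finite graph and hence contains only finitely many distinct subgraphs (up to isomorphism); the finite union $\mathcal{X}(\mathcal{G}_n) = \bigcup_i \{x \mid x \subseteq g_i\}$ is therefore finite. Finally, recalling that the support count of $x$ over $\mathcal{G}_n$ is $|1(x)| = |\{i \mid x \subseteq g_i\}|$, the alternative set-builder description of $\mathcal{X}(\mathcal{G}_n)$ given immediately above the lemma states precisely that $x \in \mathcal{X}(\mathcal{G}_n)$ iff $x \subseteq g_i$ for at least one $i$, i.e. iff $|1(x)| \geqslant 1$, which is by definition the set of frequent subgraphs at minimum support threshold one.

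The step I expect to require the most care is the monotonicity argument for the regularizer, and in particular the degenerate case $\lambda_1 = \lambda_2 = 0$: there zeroing the off-support coefficients leaves the objective exactly equal rather than strictly smaller, so the reduction must be phrased existentially (there \emph{exists} a minimizer supported on $\mathcal{X}(\mathcal{G}_n)$) rather than as a claim that every minimizer has this form. One should also confirm that the sums defining $\mu(g_i)$ and the regularizers are well defined, which follows from the assumed sparsity of $\beta$, so no convergence issues arise.
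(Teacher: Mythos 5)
Your proof is correct and follows essentially the same reduction as the paper: features outside $\mathcal{X}(\mathcal{G}_n)$ have identically zero observed indicator vectors on $\mathcal{G}_n$, hence contribute nothing to the loss, so their coefficients may be set to zero, and $\mathcal{X}(\mathcal{G}_n)$ is finite and coincides by definition with the frequent subgraphs at minimum support one. You are in fact somewhat more careful than the paper's informal argument, which only notes the vanishing contribution to $\mu(g_1),\dots,\mu(g_n)$ and leaves implicit that zeroing the off-support coefficients also weakly decreases both regularizers; your existential phrasing for the degenerate case $\lambda_1=\lambda_2=0$ is a correct refinement.
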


This fact connects Problem \eqref{learningproblem} to the
problem of enumerating all frequent subgraphs in the given set of
graphs---\textit{frequent subgraph pattern mining}---that has been
extensively studied in the data mining field. At this point,
since $\mathcal{X}(\mathcal{G}_n)$ is finite, 
we can say that Problem \eqref{learningproblem} with the linear
model \eqref{model} is solvable in theory if we have infinite time and
memory space. However, the set of $\mathcal{X}(\mathcal{G}_n)$ is
still intractably huge in general, and it is still practically impossible to
enumerate all subgraphs in $\mathcal{X}(\mathcal{G}_n)$ beforehand.

However the research on mining algorithms of frequent subgraph patterns 
brings a well-structured search space for $\mathcal{X}(\mathcal{G}_n)$,
called \textit{an enumeration tree}, which has two nice properties, to be
explained below.
The enumeration tree can be generated from a so-called Hasse diagram,
which is a rooted graph, each node being labeled by a subgraph, where
parent-child relationships over labels correspond to isomorphic
relationships and each layer has subgraphs with the same size.
Figure \ref{fig:enumtree} is schematic pictures of (a) a set of
subgraphs, $\mathcal{X}(\mathcal{G}_n)$, (b) a Hasse diagram over 
$\mathcal{X}(\mathcal{G}_n)$, and (c) an enumeration tree over
$\mathcal{X}(\mathcal{G}_n)$.
\begin{figure}[t]
\centering
\includegraphics[width=150mm]{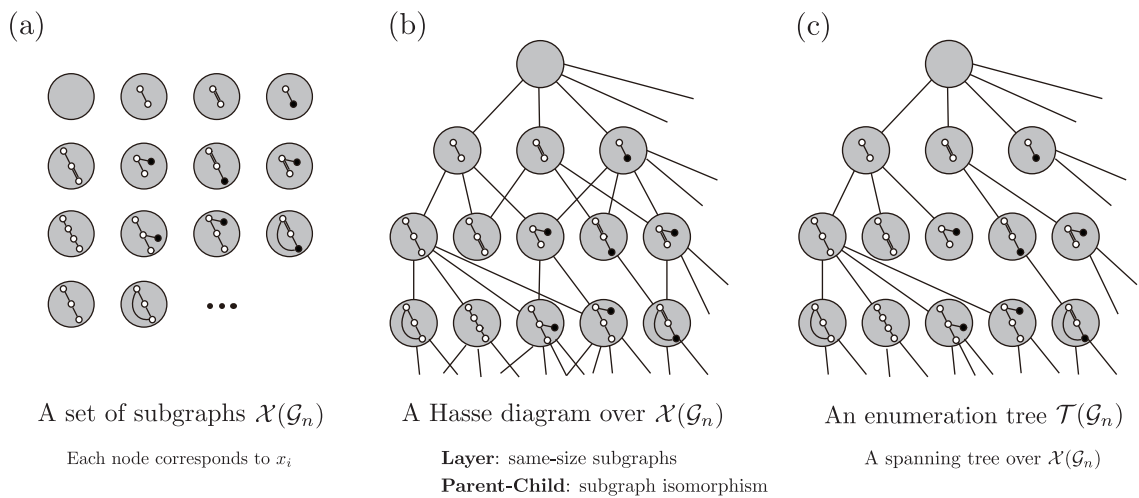}
\caption{Non-redundant check of all elements in
 set $\mathcal{X}(\mathcal{G}_n)$. (a) set $\mathcal{X}(\mathcal{G}_n)$,
 (b) Hasse diagram for the set $\mathcal{X}(\mathcal{G}_n)$, and (c)
a tree-shaped search space, enumeration tree
 $\mathcal{T}(\mathcal{G}_n)$ for the set 
 $\mathcal{X}(\mathcal{G}_n)$ which allows to search over a finite set
 of relevant subgraphs, $\mathcal{X}(\mathcal{G}_n)$, without any redundancy.
} \label{fig:enumtree}
\end{figure}

\begin{enumerate}
 \item Isomorphic parent-child relationship: Smaller subgraphs are
       assigned to the shallower levels, larger subgraphs to the deeper
       levels. The edge from $x_i$ to $x_j$ implies that $x_i$ and $x_j$
       are different by only one edge, and $x_i$ at the shallower level
       is isomorphic to the subgraph of $x_i$ at the deeper level. 
       This property is inherited from the Hasse diagram.
 \item Spanning tree: Traversal over the entire
       enumeration tree gives us a set of all subgraphs $x_j$ in
       $\mathcal{X}(\mathcal{G}_n)$, avoiding any redundancy in checking
       subgraphs, meaning that the same subgraph is not checked more
       than once.
\end{enumerate}

\noindent For example, the gSpan algorithm, a well-established algorithm for
frequent subgraph pattern mining, uses an enumeration tree by an
algorithmic technique called \textit{right-most extension}, where the
enumeration tree is traversed in a depth-first manner
\citep{Yan:2002,Takigawa:2011a}.  

Our main interest is to efficiently search the subgraphs that
should be checked, using enumeration tree
$\mathcal{T}(\mathcal{G}_n)$ to solve Problem 
\eqref{learningproblem} in terms of Model function
\eqref{model}. Throughout this paper, we use the
enumeration tree by the gSpan algorithm as a search space for 
$\mathcal{X}(\mathcal{G}_n)$.

The above fact can be formally summarized as follows.

\begin{lemma}\label{enumprop1}
Let $G:=(V,E)$ be a graph with a node set
 $V=\mathcal{X}(\mathcal{G}_n)\cap\{\varnothing\}$ and an edge set
$E=\{(x,x')\mid x\subseteq x',x \in V, x' \in
 V, \text{$x$ and $x'$ are different by only one edge}\}$, where
 $\varnothing$ denotes the empty graph. Then, this graph $G$ corresponds
 to the Hasse diagram in Figure \ref{fig:enumtree} (b). Moreover
 we can construct a spanning tree $\mathcal{T}(\mathcal{G}_n)$ rooted at
 $\varnothing$ over $G$, that is, an \textit{enumeration tree} for
 $\mathcal{X}(\mathcal{G}_n)$ that has the following properties.
\begin{enumerate}
 \item It covers all of $x \in \mathcal{X}(\mathcal{G}_n)$, where any of
       $x \in \mathcal{X}(\mathcal{G}_n)$ is reachable from the root
       $\varnothing$.
 \item For a subtree $\mathcal{T}(x)$ rooted at node $x$, we have $x
       \subseteq x'$ for any $x' \in \mathcal{T}(x)$.
\end{enumerate}
\end{lemma}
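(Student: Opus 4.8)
The plan is to recognize $G$ as the Hasse diagram of the finite poset obtained by ordering $\mathcal{X}(\mathcal{G}_n)$ together with the empty graph $\varnothing$ under subgraph isomorphism $\subseteq$, and then to extract from it a rank-respecting spanning tree. First I would equip this poset with the rank function $\rho(\varnothing)=0$ and $\rho(x)=|E(x)|+1$ for $x\in\mathcal{X}(\mathcal{G}_n)$; this is well defined because every element of $\mathcal{X}(\mathcal{G}_n)$ is a nonempty connected subgraph of some $g_i$. To identify the stated edge set $E$ with the covering relation of $\subseteq$ I would argue both directions. If $x$ and $x'$ differ by exactly one edge, then their ranks are consecutive, so no element can sit strictly between them and the pair is a cover. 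Conversely, if $x\subsetneq x'$ is a covering pair with $x'$ having two or more extra edges, then connectedness of $x'$ lets me pick an edge of $x'$ incident to $x$ and adjoin it (with its possibly new endpoint) to form a connected $y$ with $x\subsetneq y\subseteq x'$; since $y\subseteq g_i$ we have $y\in\mathcal{X}(\mathcal{G}_n)$, contradicting the covering assumption. The only covers not captured by a one-edge difference are those joining $\varnothing$ to the single-vertex subgraphs, which the statement absorbs by convention. This identifies $G$ with the Hasse diagram of Figure \ref{fig:enumtree}(b).

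Second, and this is the step I would treat most carefully, I would \emph{not} take an arbitrary spanning tree of $G$: property~2 can fail for one, since a tree path leaving the root may rise and fall in rank. Instead I would build a rank-monotone tree by choosing, for each $x\neq\varnothing$, a parent $p(x)$ among the $G$-neighbors of $x$ with $\rho(p(x))=\rho(x)-1$. Such a neighbor always exists: a single-vertex subgraph has $\varnothing$ as its rank-$0$ neighbor, while any $x$ carrying at least one edge admits a connected subgraph with one fewer edge---delete a non-bridge edge, or, if $x$ is a tree, delete a leaf together with its pendant edge---and this smaller subgraph again lies in $\mathcal{X}(\mathcal{G}_n)$ because it is contained in the same $g_i$. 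Since $p$ strictly decreases $\rho$, iterating it from any node reaches $\varnothing$ without repeating a vertex, so $p$ defines a spanning tree $\mathcal{T}(\mathcal{G}_n)$ rooted at $\varnothing$; equivalently one may take any breadth-first tree from $\varnothing$, as $\rho$ coincides with the $G$-distance from $\varnothing$. This order is exactly what gSpan's right-most extension realizes concretely in the cited mining literature.

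Finally I would read off the two properties. Property~1 is immediate, since a spanning tree covers all of $V\supseteq\mathcal{X}(\mathcal{G}_n)$ and every vertex of a rooted tree is reachable from its root. Property~2 follows from transitivity of $\subseteq$: by construction every tree edge joins a rank-$k$ parent to a rank-$(k+1)$ child with parent $\subseteq$ child, so for any descendant $x'$ of $x$ the path $x=z_0\subseteq z_1\subseteq\cdots\subseteq z_m=x'$ yields $x\subseteq x'$, hence $x\subseteq x'$ for every $x'\in\mathcal{T}(x)$. The substantive work is therefore concentrated in the grading argument of the first paragraph and the rank-monotone choice of parent in the second; once these are in place the two listed properties are bookkeeping, and the finiteness guaranteed by Lemma~\ref{finiteness} ensures the construction lives over a finite vertex set.
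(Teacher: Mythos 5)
Your argument is correct, and it is a genuinely different---in fact, more self-contained---route than the one the paper takes: the paper offers no proof of Lemma~\ref{enumprop1} at all, presenting it as a formal summary of the enumeration-tree construction already available in the frequent-subgraph-mining literature (the gSpan right-most extension of \cite{Yan:2002}, discussed just above the lemma in Section~\ref{freqmine}). You instead prove existence abstractly: you verify that $\subseteq$ grades $\mathcal{X}(\mathcal{G}_n)\cup\{\varnothing\}$ by edge count, identify the one-edge-difference relation with the covering relation (correctly noting that a cover with two or more extra edges would admit a connected intermediate subgraph, and that the $\varnothing$-to-single-vertex covers are absorbed by convention), and then extract a rank-monotone spanning tree by always choosing a parent of rank one less---rightly warning that an arbitrary spanning tree of $G$ would not satisfy property~2. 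The observation that deleting a non-bridge edge or a pendant leaf keeps a connected graph connected, and that the result stays inside $\mathcal{X}(\mathcal{G}_n)$ by transitivity, is exactly the point the paper leaves implicit. What your approach buys is a rigorous, citation-free existence proof of the graded Hasse-diagram structure; what the paper's approach buys is the constructive realization that the algorithm actually needs---gSpan's minimum DFS codes give a canonical parent choice that can be computed on the fly and guarantees each subgraph is generated exactly once without materializing $G$, something a pure existence argument (or a BFS tree, which would require holding all of $V$ in memory) does not provide. Your proof is a valid justification of the lemma as stated; it just should not be mistaken for a description of the tree the implementation traverses.
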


It is important to note that, from the second property in Lemma
\ref{enumprop1}, we have \begin{lemma}\label{enumprop2}
\[
 I(x \subseteq g) = 0 \Longrightarrow  I(x' \subseteq g) = 0, x' \in \mathcal{T}(x).
\]
\end{lemma}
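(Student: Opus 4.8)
The plan is to recognize this statement as the standard anti-monotonicity (downward-closure) property of subgraph isomorphism, and to derive it directly from the second property of the enumeration tree established in Lemma~\ref{enumprop1}. The cleanest route is by contraposition: instead of proving $I(x\subseteq g)=0 \Rightarrow I(x'\subseteq g)=0$, I would prove the logically equivalent implication that, for every $x'\in\mathcal{T}(x)$, we have $I(x'\subseteq g)=1 \Rightarrow I(x\subseteq g)=1$. Since both indicators are Boolean, these two formulations coincide, and the contrapositive turns the claim into a statement about extending a containment upward along the tree, which is where Lemma~\ref{enumprop1} applies most naturally.

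First I would fix an arbitrary $x'\in\mathcal{T}(x)$ and assume $x'\subseteq g$. By property~2 of Lemma~\ref{enumprop1}, the root $x$ of the subtree satisfies $x\subseteq x'$; this is the only structural fact about $\mathcal{T}(x)$ that the argument requires. The goal then reduces to chaining the two relations $x\subseteq x'$ and $x'\subseteq g$ into a single relation $x\subseteq g$, i.e. to the transitivity of the subgraph-isomorphism relation $\subseteq$ on $\mathcal{G}$. Unwinding the definition of $\subseteq$, the relation $x\subseteq x'$ supplies an injective, node- and edge-label-preserving embedding $\varphi$ of $x$ into $x'$, while $x'\subseteq g$ supplies a similar embedding $\psi$ of $x'$ into $g$. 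I would then verify that the composition $\psi\circ\varphi$ is again such an embedding of $x$ into $g$, so that $x\subseteq g$, that is, $I(x\subseteq g)=1$, which is exactly the conclusion of the contrapositive. As $x'\in\mathcal{T}(x)$ was arbitrary, this yields the lemma.

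The only place requiring any genuine care --- and the closest thing to an obstacle --- is confirming that the composite $\psi\circ\varphi$ simultaneously remains injective and preserves both the node labeling and the edge labeling. Each of these follows immediately from the corresponding property of $\varphi$ and $\psi$ (injectivity is closed under composition, and a map that preserves labels composed with another such map preserves labels), but it is worth stating explicitly, since this composition is precisely what makes $\subseteq$ a transitive relation and hence underwrites the entire search-space pruning scheme built on $\mathcal{T}(\mathcal{G}_n)$. No case analysis or appeal to the specific gSpan construction is needed; the result holds for any spanning tree satisfying property~2 of Lemma~\ref{enumprop1}.
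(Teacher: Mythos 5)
Your proof is correct and follows essentially the same route as the paper, which derives the lemma directly from property~2 of Lemma~\ref{enumprop1} (that $x\subseteq x'$ for all $x'\in\mathcal{T}(x)$) together with the transitivity of $\subseteq$. The paper treats the transitivity step as immediate and gives no further argument, whereas you spell it out via composition of embeddings; the substance is identical.
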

In other words, once if we know $x \not\subseteq g$, then we can skip
checking of all $x' \in \mathcal{T}(x)$ because $x' \not\subseteq g$.
This fact serves as the foundation for many efficient algorithms of frequent
subgraph pattern mining, including the gSpan algorithm. 

\section{Morishita-Kudo Bounds for Separable Functions}

In this section, we formally generalize the underlying bounding
technique, which was originally developed for using Adaboost
      \citep{Morishita:2002,Kudo:2005} for subgraph features, to the
      one that we call \textbf{Morishita-Kudo Bounds}. 
Our generalization is an extension to that for arbitrary
\textit{separable} objective functions.
This bounding is not only one of the keys to
      develop the proposed algorithm but also gives a coherent and unifying
      view to understand how the existing methods work. 
In Section~\ref{mkb_sec}, we show several examples, in which this
bounding easily and consistently gives the pruning bounds of existing
methods. 

\subsection{Bounding for Branch and Bound}

Consider the problem of finding the best subgraph feature $x$ by using
given training data $(y_1,g_1)$, $(y_2,g_2), \dots, (y_n,g_n)$, when we
predict the response $y$ by \textit{only one} subgraph indicator $I(x
\subseteq g)$. This subproblem constitutes the basis for the existing
methods such as Adaboost \citep{Kudo:2005} and LPBoost \citep{Saigo:2009}.
The evaluation criteria can be various as follows, which
characterize individual methods:
\begin{enumerate}
 \item Maximizer of the weighted gain 
\[
 \textstyle\sum_{i=1}^n w_i y_i (2 I(x\subseteq g_i)-1)\,\,\text{for $y_i\in\{-1,1\}$}
\]
 \item Minimizer of the weighted classification error
\[
 \textstyle\sum_{i=1}^n w_i I( I(x\subseteq g_i) \neq y_i)\,\,\text{for $y_i\in\{0,1\}$}
\]
 \item Maximizer of the correlation to the response 
\[
 \textstyle{\Big|}\sum_{i=1}^n
       y_i I(x\subseteq g_i) {\Big|}\,\,\text{for $y_i\in\mathbb{R}$}
\]
\end{enumerate}
It should be noted that there exist multiple optimal subgraphs, and
any one of such subgraphs can be the best subgraph feature.

All of these problems take a form of finding the \textit{single}
subgraph $x^*$ that minimizes or maximizes some function $f(x)$
dependent on the given graphs $\mathcal{G}_n$. 
A brute-force way to obtain the best solution $x^* = \arg\min_x f(x)$
would be to first traverse all $x_j$ in
the enumeration tree $\mathcal{T}(\mathcal{G}_n)$, compute the value of
$f(x_j)$ at each node $x_j$, and then take the best one among
$\{f(x_j)\mid x_j \in \mathcal{T}(\mathcal{G}_n)\}$
after all $x_j$
are checked. Unfortunately this brute-force enumeration does not work in
reality because the size of the set $\mathcal{X}(\mathcal{G}_n)$ is
intractably large in most practical situations.

In existing work, this problem has been addressed by using \textit{branch
and bound}. Due to the nice property of an enumeration tree shown in Lemma
\ref{enumprop1} and \ref{enumprop2}, 
we can often know the upper and lower bounds of $f(x')$ for any
subgraph $x'$ in the subtree below $x$, i.e. $x'\in\mathcal{T}(x)$,
\textit{without} checking the values of every $f(x')$.
In other words, when we are at node $x$ of an enumeration tree during
traversal over the tree, we can obtain two values $\overline{c}(x)$ and
$\underline{c}(x)$ dependent on $x$ that shows
\[
 \underline{c}(x) \leqslant f(x') \leqslant
 \overline{c}(x)\,\,\text{for any $x' \in \mathcal{T}(x)$}
\]
\textit{without} actually checking $f(x')$ for $x' \in \mathcal{T}(x)$.

Then, if the tentative best solution $\hat{x}$ among already visited nodes
before the current $x$ is better than the best possible value
in the subtree $\mathcal{T}(x)$, i.e. $f(\hat{x}) < \underline{c}(x)$
(for finding the minimum $x^*$) or $f(\hat{x}) > \overline{c}(x)$ (for
finding the maximum $x^*$), we can prune the subtree $\mathcal{T}(x)$
and skip all checking of unseen subgraphs $x' \in \mathcal{T}(x)$.

In order to perform this branch and bound strategy, we need a systematic way to
get the upper and lower bounds for a given objective function. In 
existing work, these bounds have been derived separately for each specific
objective function such as Adaboost \citep{Kudo:2005}, LPBoost
\citep{Saigo:2009}, LARS/LASSO \citep{Tsuda:2007} and
PLS regression \citep{Saigo:2008a}. In the following subsections, we show that 
we can generalize the common technique behind these existing approaches
to what we call \textit{Morishita-Kudo Bounds}, which was developed for the
preceding studies based on Adaboost \citep{Morishita:2002,Kudo:2005}.

\begin{figure}[t]
\centering
\includegraphics[width=100mm]{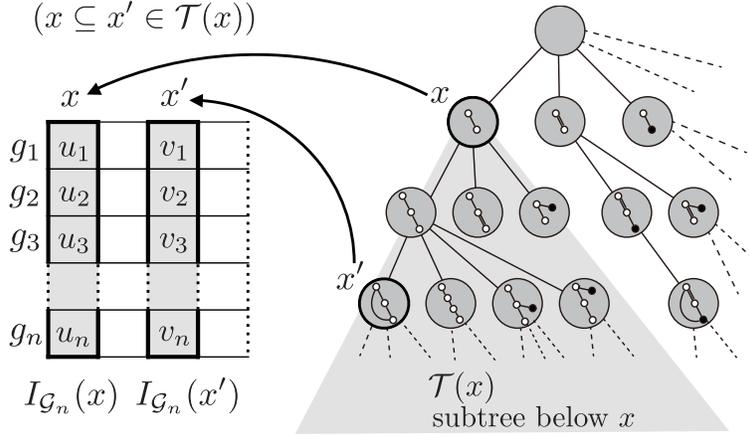}
\caption{Boolean vectors $I_{\mathcal{G}_n}(x)$ associated with $x \in
 \mathcal{T}(\mathcal{G}_n)$. In this example, $x \subseteq x'$. Hence
 $v_i=0$ for $u_i=0$. Only coordinates $v_i$ for $u_i=1$ can be either
 $0$ or $1$, as stated in Lemma \ref{boolprop}. 
} \label{fig:boolean}
\end{figure}

\subsection{Property of Boolean Vectors Associated with $\mathcal{G}_n$}

First, we observe that the following result holds from Lemma
\ref{enumprop1} and \ref{enumprop2}.

\begin{lemma}\label{boolprop}
$1(I_{\mathcal{G}_n}(x'))\subseteq 1(I_{\mathcal{G}_n}(x))$ for $x' \in
 \mathcal{T}(x)$. For short, $1(x') \subseteq 1(x)$ for $x' \in
 \mathcal{T}(x)$.
\end{lemma}

\begin{remark}
 Recall that $I_{\mathcal{G}_n}(x)$, previously defined as
 \eqref{regvec},
is the \textit{observed indicator vector}, an $n$-dimensional Boolean
 vector, indicating if $x$ is
 contained in each  of $\mathcal{G}_n$. The
 number of $1$s in the vector $I_{\mathcal{G}_n}(x)$, that is
 $|1(I_{\mathcal{G}_n}(x))|$, is identical to
 the ``support'' of $x$ in $\mathcal{G}_n$ in the standard data
 mining terminology.
\end{remark}

Lemma \ref{boolprop}, as well as Lemma \ref{enumprop2}, claims that when
we traverse an enumeration tree
down to the deeper level from $x$ to $x'$, the elements taking $1$ in
$I_{\mathcal{G}_n}(x)$ can change to $1$ or $0$ but the elements taking
$0$ must remain as $0$ in $I_{\mathcal{G}_n}(x')$. 
Figure \ref{fig:boolean} schematically shows this property of
enumeration trees.
More explicitly, let $I_{\mathcal{G}_n}(x)=(u_1,u_2,\dots,u_n)$
and $I_{\mathcal{G}_n}(x')=(v_1,v_2,\dots,v_n)$. Then,
\[
 v_i = \begin{cases}
	0 & \text{$(u_i = 0)$}\\
	\text{$0$ or $1$} & \text{$(u_i = 1)$}.
       \end{cases}
\]
This fact also implies that
the number of $1$s in the associated Boolean
vector $I_{\mathcal{G}_n}(x)$ at node $x$ monotonically decreases if we
proceed to any deeper node $x' (\supseteq x)$ in the enumeration tree. The \textit{anti-monotone
property} of the support, which is a fundamental technique in
frequent pattern mining, can also be obtained as a corollary of Lemma \ref{boolprop} as 
\[
x \subseteq x' \Longrightarrow |1(I_{\mathcal{G}_n}(x))| \geqslant |1(I_{\mathcal{G}_n}(x'))|.
\]

We also observe the following simple facts for arbitrary bounded
real-valued function on $n$-dimensional Boolean vector space,
$f:\{0,1\}^n \to \mathbb{R}$.

\begin{theorem}\label{maxmin}
Assume that $u=(u_1,u_2,\dots,u_n)\in\{0,1\}^n$ are given. Then, for any
 $v=(v_1,v_2,\dots,v_n)\in\{0,1\}^n$ such that $1(v)\subseteq 1(u)$, we
 have
\[
 \overline{f}(u) \geqslant f(v) \geqslant \underline{f}(u),
\]
where
\begin{align*}
\overline{f}(u)&=\max_{\alpha_i \in\{0,1\}, i\in 1(u)} \{f(\alpha_1,\alpha_2,\dots,\alpha_n) \mid \alpha_j=0, j \in 0(u)\},\\
\underline{f}(u)&=\min_{\alpha_i\in\{0,1\}, i\in 1(u)} \{f(\alpha_1,\alpha_2,\dots,\alpha_n) \mid \alpha_j=0, j \in 0(u)\}. 
\end{align*}
These $\overline{f}(u)$ and
 $\underline{f}(u)$ are non-trivial bounds of $f(v)$ when they respectively satisfy
\[
 \overline{f}(u) < \max_{w\in\{0,1\}^n} f(w) \quad \text{and}\quad
 \underline{f}(u) > \min_{w\in\{0,1\}^n} f(w).
\]
\end{theorem}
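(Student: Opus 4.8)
The plan is to reduce the claim to the observation that $v$ itself is a feasible point of the two optimization problems that define $\overline{f}(u)$ and $\underline{f}(u)$; once that is established, both inequalities are immediate. The only step requiring care is the translation of the hypothesis $1(v)\subseteq 1(u)$ into the coordinate constraint appearing in those definitions. First I would take contrapositives: $1(v)\subseteq 1(u)$ is logically equivalent to $0(u)\subseteq 0(v)$, and hence to the statement that $v_j=0$ for every $j\in 0(u)$. Since $v\in\{0,1\}^n$ already, this says precisely that $v$ lies in the feasible set
\[
S(u) := \{\alpha\in\{0,1\}^n \mid \alpha_j=0 \text{ for all } j \in 0(u)\},
\]
which is exactly the set over which the maximum $\overline{f}(u)$ and minimum $\underline{f}(u)$ are taken: the free coordinates $\alpha_i$ with $i\in 1(u)$ ranging over $\{0,1\}$, together with $\alpha_j=0$ fixed for $j\in 0(u)$, parametrize $S(u)$ exactly. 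Thus $\{v : 1(v)\subseteq 1(u)\}=S(u)$.

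With this identification in hand, the two inequalities follow at once. The set $S(u)$ is a nonempty finite subset of $\{0,1\}^n$, so the maximum and minimum are attained and $\overline{f}(u)$, $\underline{f}(u)$ are well defined; since $v\in S(u)$ we have
\[
\underline{f}(u) = \min_{\alpha\in S(u)} f(\alpha) \leqslant f(v) \leqslant \max_{\alpha\in S(u)} f(\alpha) = \overline{f}(u).
\]
For the final non-triviality claim I would simply use the inclusion $S(u)\subseteq\{0,1\}^n$, which yields the always-valid global bounds $\overline{f}(u)\leqslant \max_{w\in\{0,1\}^n} f(w)$ and $\underline{f}(u)\geqslant \min_{w\in\{0,1\}^n} f(w)$. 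The restricted extrema $\overline{f}(u)$ and $\underline{f}(u)$ strictly improve on these trivial global values---and are therefore informative for pruning a subtree---exactly when the displayed strict inequalities hold, which is what is meant by calling them non-trivial.

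I expect no genuine obstacle here, as the content is essentially definitional. The single point demanding precision is the contrapositive translation of the index-set inclusion $1(v)\subseteq 1(u)$ into the coordinatewise constraint $\alpha_j=0$ for $j\in 0(u)$, so that the feasible region of the two optimization problems is seen to coincide exactly with the set of admissible $v$. Everything else reduces to the elementary fact that a point of a finite set lies between the minimum and maximum of any real-valued function on that set. It is worth emphasizing, for use in later sections, that this argument makes no assumption on $f$ beyond boundedness (automatic on a finite domain), so the bounds apply to arbitrary $f:\{0,1\}^n\to\mathbb{R}$ evaluated along any descending chain of supports in the enumeration tree, via Lemma \ref{boolprop}.
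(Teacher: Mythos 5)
Your proof is correct and follows essentially the same route as the paper's: translate $1(v)\subseteq 1(u)$ into $0(u)\subseteq 0(v)$ so that the admissible $v$ are exactly the vectors with $v_j=0$ for $j\in 0(u)$, identify this set with the finite feasible set of the two optimization problems, and conclude that $f(v)$ lies between the attained minimum and maximum. No gap.
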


\begin{proof}
Since $1(v)\subseteq 1(u) \Leftrightarrow 0(u) \subseteq 0(v)$, we have
$u_i=0 \Rightarrow v_i=0$ for any $v$ such that $1(v)\subseteq 1(u)$.
Thus, fixing all $i\in 0(u)$ as $v_i=0$ and setting the remaining $v_i, i\in 1(u)$ free as
 \[
  \{f(v)\mid 1(v)\subseteq 1(u)\} =
  \{f(\alpha_1,\alpha_2,\dots,\alpha_n)\mid \alpha_i=0, i\in 0(u)\},
 \]
and taking the maximum and minimum in this set leads to the result in
 the theorem. Note that the above set is finite since $\alpha_i \in
 \{0,1\}, i \in 1(u)$.
\end{proof}

\begin{corollary}\label{gbnd}
 From Theorem \ref{maxmin}, we can have the
 upper and lower bounds of $f(I_{\mathcal{G}_n}(x'))$ at any $x' \in
 \mathcal{T}(x)$ for any function $f:\{0,1\}^n \to \mathbb{R}$ as 
\[
 \overline{f}(I_{\mathcal{G}_n}(x)) \geqslant f(I_{\mathcal{G}_n}(x')) \geqslant \underline{f}(I_{\mathcal{G}_n}(x)),
\]
because 
$1(I_{\mathcal{G}_n}(x'))\subseteq 1(I_{\mathcal{G}_n}(x))$ for $x' \in
 \mathcal{T}(x)$ from Lemma \ref{boolprop}.
\end{corollary}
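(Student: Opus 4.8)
The plan is to recognize that this corollary is simply a specialization of Theorem~\ref{maxmin} to the pair of observed indicator vectors attached to a node and one of its descendants in the enumeration tree, so the argument should be a short instantiation rather than a fresh derivation. First I would fix an arbitrary node $x$ of the enumeration tree $\mathcal{T}(\mathcal{G}_n)$ together with an arbitrary descendant $x' \in \mathcal{T}(x)$, and then set $u := I_{\mathcal{G}_n}(x)$ and $v := I_{\mathcal{G}_n}(x')$. By the definition~\eqref{regvec} of the observed indicator vector, both $u$ and $v$ lie in $\{0,1\}^n$, which is exactly the ambient space on which Theorem~\ref{maxmin} operates.

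Second, I would check the single hypothesis that Theorem~\ref{maxmin} requires of the pair $(u,v)$, namely the containment $1(v) \subseteq 1(u)$. This is precisely the conclusion of Lemma~\ref{boolprop}, which gives $1(I_{\mathcal{G}_n}(x')) \subseteq 1(I_{\mathcal{G}_n}(x))$ for every $x' \in \mathcal{T}(x)$. Since $x'$ was chosen in $\mathcal{T}(x)$, the required containment $1(v) \subseteq 1(u)$ holds for our particular $u$ and $v$, so the hypothesis of Theorem~\ref{maxmin} is satisfied.

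Third, I would invoke Theorem~\ref{maxmin} with these $u$ and $v$, which immediately yields $\overline{f}(u) \geqslant f(v) \geqslant \underline{f}(u)$. Substituting back $u = I_{\mathcal{G}_n}(x)$ and $v = I_{\mathcal{G}_n}(x')$ gives $\overline{f}(I_{\mathcal{G}_n}(x)) \geqslant f(I_{\mathcal{G}_n}(x')) \geqslant \underline{f}(I_{\mathcal{G}_n}(x))$, the claimed inequality. Because Theorem~\ref{maxmin} places no condition on $f$ beyond $f:\{0,1\}^n \to \mathbb{R}$, the bound holds for every such $f$ with no further restriction.

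The main obstacle here is conceptual rather than technical: there is genuinely no hard step, and the only point demanding care is the handling of quantifiers. The bounds $\overline{f}(u)$ and $\underline{f}(u)$ depend only on $u = I_{\mathcal{G}_n}(x)$, that is, on the \emph{current} node $x$ and not on the descendant $x'$; hence a single pair of bounds computed at $x$ is valid simultaneously for \emph{all} $x' \in \mathcal{T}(x)$. This uniformity over the entire subtree is exactly what makes the corollary useful for branch-and-bound pruning, and it is available precisely because Lemma~\ref{boolprop} supplies the containment $1(x') \subseteq 1(x)$ uniformly for every $x' \in \mathcal{T}(x)$.
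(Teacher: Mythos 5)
Your proposal is correct and follows exactly the paper's own (implicit) argument: instantiate Theorem~\ref{maxmin} with $u = I_{\mathcal{G}_n}(x)$ and $v = I_{\mathcal{G}_n}(x')$, using Lemma~\ref{boolprop} to supply the hypothesis $1(v) \subseteq 1(u)$. Nothing more is needed.
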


\subsection{Morishita-Kudo Bounds and Their Examples}\label{mkb_sec}

Theorem \ref{maxmin} and Corollary \ref{gbnd} give us a general idea to
obtain pruning bounds for arbitrary function $f$ in the depth-first
traversal of an enumeration tree. Furthermore, applying Theorem
\ref{maxmin} to \textit{separable} functions
enables us to obtain the following easy-to-compute bounds that we call
\textit{Morishita-Kudo bounds}. Note that the target functions appeared
in the existing studies such as \cite{Morishita:2002,
Kudo:2005,Tsuda:2007,Saigo:2008a, Saigo:2009} are all separable
functions.

\begin{lemma}[Morishita-Kudo bounds]\label{mkbounds}
Assume that a real-valued function of $n$-dimensional Boolean vector
 $f:\{0,1\}^n \to \mathbb{R}$ is separable, meaning that there exists a set
 of $n$ functions $f_i:\{0,1\}\to\mathbb{R},i=1,2,\dots,n$ and
\[
 f(u_1,u_2,\dots,u_n) = \sum_{i=1}^n f_i(u_i), \quad u_i \in \{0,1\}.
\]
Then, for given $u=(u_1,u_2,\dots,u_n)\in\{0,1\}^n$, we have
\[
 \overline{f}(u) \geqslant f(v) \geqslant \underline{f}(u)
\]
for any $v = (v_1,v_2,\dots,v_n)\in\{0,1\}^n$ such that $1(v)\subseteq
 1(u)$, where
\begin{align*}
\underline{f}(u) &:= \sum_{i \in 1(u)} \min\{f_i(0),f_i(1)\} \,\,+
 \sum_{i \in 0(u)}f_i(0) \\
\overline{f}(u) &:= \sum_{i \in 1(u)} \max\{f_i(0),f_i(1)\} \,\,+
 \sum_{i \in 0(u)}f_i(0) 
\end{align*}
\end{lemma}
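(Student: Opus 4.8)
The plan is to reduce the claim to Theorem \ref{maxmin} and then evaluate the resulting optimizations in closed form by exploiting separability. By Theorem \ref{maxmin}, for any $v \in \{0,1\}^n$ with $1(v) \subseteq 1(u)$ we already have $\overline{f}(u) \geqslant f(v) \geqslant \underline{f}(u)$, where $\overline{f}(u)$ and $\underline{f}(u)$ are the maximum and minimum of $f$ over the finite feasible set
\[
 \{(\alpha_1,\dots,\alpha_n) \in \{0,1\}^n \mid \alpha_j = 0,\ j \in 0(u)\}.
\]
Hence the inequalities themselves require no new argument, and it remains only to verify that, when $f$ is separable, these max and min agree with the explicit expressions stated in the lemma.

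First I would substitute the separable form $f(\alpha_1,\dots,\alpha_n) = \sum_{i=1}^n f_i(\alpha_i)$ into the objective. On the feasible set each index $j \in 0(u)$ is pinned to $\alpha_j = 0$ and so contributes the fixed constant $f_j(0)$; the objective therefore splits as
\[
 \sum_{i=1}^n f_i(\alpha_i) = \sum_{i \in 1(u)} f_i(\alpha_i) + \sum_{j \in 0(u)} f_j(0),
\]
in which the second sum does not depend on the optimization variables $\{\alpha_i : i \in 1(u)\}$ and may be pulled outside both the maximization and the minimization.

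The key step is to observe that the remaining term $\sum_{i \in 1(u)} f_i(\alpha_i)$ is a sum of functions of disjoint variables: each $\alpha_i$ with $i \in 1(u)$ occurs in exactly one summand and ranges over $\{0,1\}$ independently of the others. Consequently the joint optimization over the free coordinates decouples into a sum of one-dimensional optimizations,
\[
 \max_{\alpha_i \in \{0,1\},\, i \in 1(u)} \sum_{i \in 1(u)} f_i(\alpha_i) = \sum_{i \in 1(u)} \max_{\alpha_i \in \{0,1\}} f_i(\alpha_i) = \sum_{i \in 1(u)} \max\{f_i(0), f_i(1)\},
\]
and symmetrically the minimum equals $\sum_{i \in 1(u)} \min\{f_i(0), f_i(1)\}$. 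Adding back the constant $\sum_{j \in 0(u)} f_j(0)$ recovers exactly the claimed $\overline{f}(u)$ and $\underline{f}(u)$.

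I do not anticipate a genuine obstacle. The only point meriting care is the interchange of the sum and the optimization, which is justified precisely because the feasible region is a Cartesian product over the coordinates indexed by $1(u)$ and the objective contains no cross-terms coupling distinct coordinates; optimizing each factor separately therefore yields the global optimum. Finiteness of the feasible set, already noted in Theorem \ref{maxmin}, guarantees that every max and min is attained, so no supremum/infimum subtleties enter.
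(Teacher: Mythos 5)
Your proposal is correct and follows essentially the same route as the paper: the paper likewise derives Lemma \ref{mkbounds} by specializing Theorem \ref{maxmin} to separable $f$, noting that the coordinates in $0(u)$ are pinned to $0$ and contribute the constant $\sum_{i\in 0(u)} f_i(0)$, while the free coordinates in $1(u)$ decouple so that the joint max/min reduces to $\max\{f_i(0),f_i(1)\}$ or $\min\{f_i(0),f_i(1)\}$ term by term. Your explicit justification of interchanging the sum with the optimization over the Cartesian-product feasible set is exactly the (implicit) content of the paper's argument.
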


Thus, for separable functions, if we have some fixed $u \in
\{0,1\}^n$, then
we can limit the possible range of $f(v)$ for any $v$ such that
$1(v)\subseteq 1(u)$, and the upper and lower bounds for $f(v)$,
i.e. $\underline{f}(u)$ and $\overline{f}(u)$, are easy to compute 
just by comparing $f_i(0)$ and $f_i(1)$ for each $i\in 1(u)$.
Since $1(v)\subseteq 1(u)$, we have $v_i=0$ for $i\in 0(u)$ and
the amount of $\sum_{i \in 0(u)} f_i(0)$ is unchanged and cannot be
further improved.
Only elements that can differ are $v_i$s for $i \in
1(u)$, and therefore we have the maximum or
minimum of $f(v)$ for $v$ such that $1(v)\subseteq 1(u)$.

The original bounds by \cite{Morishita:2002,Kudo:2005} target only the
specific objective function of Adaboost, but many existing methods for
other objectives, such as \cite{Tsuda:2007,Saigo:2008a,Saigo:2009},
which share the common idea as described in Lemma \ref{mkbounds}. Indeed
most existing approaches are 
based on a branch-and-bound strategy with the Morishita-Kudo bounds for each
class of learning problems with separable loss functions. We see this
fact in the three examples below.

\subsubsection{Example 1: Find a subgraph that maximizes the gain}

Assume that we have a certain subgraph $x$ at hand. For the problem of
classification with $y_i \in \{-1,1\}$, consider the upper bound of the
gain attained by subgraph $z$ ($\supseteq x$)
\[
 \sum_{i=1}^n w_i y_i (2 I(z \subseteq g_i) -1)
\]
 by $-1/1$-valued features of $2 I(z \subseteq g_i) -1$,
where $w_i \geqslant 0$ \citep{Kudo:2005}. Then, from Lemma \ref{boolprop}, we have 
$x \subseteq z \Rightarrow 1(z) \subseteq 1(x)$. Thus we obtain a
Morishita-Kudo upper bound
 \begin{align*}
 \sum_{i=1}^n w_i y_i (2 I(z \subseteq g_i) -1) &\leqslant
 \sum_{i \in 1(x)} \max\{w_i y_i (2\cdot 0-1),w_i y_i (2\cdot 1-1)\} + \sum_{i \in 0(x)}
  w_i y_i (2\cdot 0-1) \\
 &=\sum_{i \in 1(x)} \max\{-w_i y_i,w_i y_i\} - \sum_{i \in 0(x)} w_i y_i
 =\sum_{i \in 1(x)} w_i - \sum_{i \in 0(x)} w_i y_i.
\end{align*}
In the original paper of \cite{Kudo:2005}, this bound was derived as
\begin{align*}
 \sum_{i=1}^n w_i y_i (2 I(z \subseteq g_i) -1) &=
 2 \sum_{i=1}^n w_i y_i I(z \subseteq g_i) - \sum_{i=1}^n w_i y_i \\
&\leqslant 2 \left\{
\sum_{i \in 1(x)} \max\{w_i y_i,0\}
+ \sum_{i \in 0(x)} w_i y_i \cdot 0
\right\} - \sum_{i=1}^n w_i y_i \\
&= 2 \sum_{i \in 1(x) \wedge y_i=1} w_i - \sum_{i=1}^n w_i y_i.
\end{align*}
We can confirm that these two are equivalent as 
$\sum_{i \in 1(x)} w_i - \sum_{i \in 0(x)} w_i y_i = 2 \sum_{i \in 1(x) \wedge y_i=1} w_i - \sum_{i=1}^n w_i y_i$
by subtracting the right-hand side from the left-hand side.

\subsubsection{Example 2: Find a subgraph that minimizes the
   classification error}

Assume that we have a certain subgraph $x$ at hand. For the problem of
classification with $y_i \in \{0,1\}$, consider the lower bound of the
weighted classification error attained by $z$ ($\supseteq x$)
\[
 \sum_{i=1}^n w_i I(I(z \subseteq g_i) \neq y_i),
\]
where $w_i \geqslant 0$. Similarly we have a Morishita-Kudo lower bound
\begin{align*}
 \sum_{i=1}^n w_i I(I(z \subseteq g_i) \neq y_i) &\geqslant
 \sum_{i \in 1(x)} \min\{w_i I(0 \neq y_i), w_i I(1 \neq y_i)\} + \sum_{i \in
 0(x)} w_i I(0 \neq y_i) \\
&=\sum_{i \in 0(x)} w_i I(0 \neq y_i) = \sum_{i \in 0(x) \wedge y_i=1} w_i.
\end{align*}
In other words, the classification error depends only on elements taking
$y_i=1$ but predicted as $I(x \subseteq g_i)=0$. Once the predictor
value becomes $0$, the objective function cannot be further improved by
searching $z$ such that $z \supseteq x$. We can conclude that for $x$
such that the amount $\sum_{i \in 0(x) \wedge y_i=1} w_i$ is large, the
possible value with $z$ by further searching $z$ such that $z \supseteq
x$ is upper bounded by this amount.

\subsubsection{Example 3: Find a subgraph that maximizes the correlation}

Assume we have a certain subgraph $x$ at hand. 
As seen in regression problems \citep{Saigo:2009}, consider the upper
bound of the correlation
between response $w_i \in \mathbb{R}$ and indicator $I(z\subseteq g_i)$
by subgraph feature $z$ ($\supseteq x$) 
\[
 \left| \sum_{i=1}^n w_i I(z\subseteq g_i)\right|.
\]
From $|a|=\max\{a,-a\}$, the larger amount of 
\[
 \sum_{i=1}^n w_i I(z\subseteq g_i)\qquad\text{or}\qquad
 -\sum_{i=1}^n w_i I(z\subseteq g_i)
\]
gives the upper bound of the correlation. Here we have a Morishita-Kudo
upper bound as
\[
 \sum_{i=1}^n w_i I(z\subseteq g_i) \leqslant
 \sum_{i \in 1(x)} \max\{w_i \cdot 0, w_i \cdot 1\} + \sum_{i \in 0(x)}
 w_i \cdot 0 = \sum_{i \in 1(x)} \max\{0, w_i\} = \sum_{i \in 1(x)
 \wedge w_i >0} w_i.
\]
We can always have $w_i I(x \subseteq g_i)=0$ for $i \in 0(x)$, and
rewrite as
\[
  \sum_{i=1}^n w_i I(z\subseteq g_i) \leqslant \sum_{i \in 1(x)
 \wedge w_i >0} w_i = \sum_{w_i>0} w_i I(x\subseteq g_i).
\]
Therefore, we can easily have the pruning bound in \cite{Saigo:2009} as
\[
 \left| \sum_{i=1}^n w_i I(z\subseteq g_i)\right| \leqslant \max\left\{
\sum_{w_i>0} w_i I(x\subseteq g_i), -\sum_{w_i<0} w_i I(x\subseteq g_i)
\right\}.
\]

\section{Learning Sparse Liner Models by Block Coordinate Gradient Descent}

Now we are back to our original problem, shown as Problem~\ref{prob1}. 
In this section, we describe the optimization process for Problem
\ref{prob1}, and in order to focus on this purpose, we first rewrite the problem in
terms of parameters to be estimated, $\theta := (\beta_0,\beta)$, as
nonsmooth optimization with a smooth part $f(\theta)$ and a nonsmooth
part $R(\theta)$ as follows.

\begin{problem}\label{prob2}
Find the minimizer
\[
 \theta^* = \arg\min_{\theta} F(\theta), \theta:=
 (\beta_0,\beta),
\]
where $F(\theta) := f(\theta)+R(\theta), \theta_0=\beta_0,
 \theta_1=\beta_1, \theta_2=\beta_2,\dots$, and
\begin{align*}
f(\theta) &:= 
\sum_{i=1}^n
L\bigl(y_i, \mu(g_i;\beta,\beta_0)\bigr) + \frac{\lambda_2}{2}
\|\beta\|_2^2\\
R(\theta) &:= \lambda_1 \|\beta\|_1 
\end{align*}
\end{problem}

Our basic premise behind Problem \ref{prob2} is that we need a
\textit{sparse} solution for $\beta$, meaning that only a few of the
coordinates $\beta_1,\beta_2,\dots$ are nonzero. Furthermore, whenever
we have intermediate solutions for 
searching the optimal $\theta^*$, we
also need to \textit{keep them sparse throughout
the entire optimization process}. This is simply because 
we have an intractably large number of parameters as $\beta_1,\beta_2,\dots$
even though Lemma \ref{finiteness} and \ref{enumprop1} suggest that they
are finite. 

In addition, these
parameters $\beta_1,\beta_2,\dots$ are associated with the
corresponding subgraph features $x_1,x_2,\dots$ that also need to be
estimated. In other words, in order to solve Problem \ref{prob2}, we
need to perform \textit{simultaneous learning} of the model coefficient
$\beta_i$ and corresponding subgraph feature $x_i$, which are correspondingly
paired as
\[
 \begin{array}{c}
  \beta_1 \\
  \updownarrow \\
  x_1
 \end{array}
 \begin{array}{c}
  \beta_2 \\
  \updownarrow \\
  x_2
 \end{array}\dots
 \begin{array}{c}
  \beta_i \\
  \updownarrow \\
  x_i
 \end{array}\dots
 \begin{array}{c}
  \beta_m \\
  \updownarrow \\
  x_m
 \end{array}
\]
where $m = |\mathcal{X}(\mathcal{G}_n)|$. Unlike standard situations in
optimization, this problem has two remarkably challenging difficulties:
\begin{itemize}
 \item We cannot estimate $\beta_i$ without finding the corresponding
       subgraph $x_i$. At the same time, however, it is practically
       impossible to find all subgraphs $x_1,x_2,\dots,x_m$ in advance.
 \item The number of parameters $m$ is unknown in advance and even not
       computable in most practical cases (intractably large) because of
       combinatorial explosion. 
\end{itemize}
On the other hand, we notice that we do not need to check subgraph
$x_i$ if we know $\beta_i=0$ beforehand.

Thus our idea for these two points is to start an iterative optimization
process with all zero initial values as
$\beta_i=0, i=1,2,\dots,m$, and search and trace subgraph features $x_i$ with
nonzero $\beta_i$ after each iteration, avoiding
the check for $x_j$ with $\beta_j=0$ as much as possible.
Moreover, when we first start with the zero vector, i.e. $\theta(0)=0$,
and then improve $\theta$ step by step as 
\[
 \theta(0) \to \theta(1) \to \theta(2) \to \dots \to \theta^*,
\]
we kept all $\theta(t)$ \textit{sparse}, meaning that only a few coordinates of
every $\theta(t)$ are nonzero. More precisely, we need to meet the
following two requirements.
\begin{enumerate}
 \item First, we need an iterative optimization
       procedure with keeping all intermediate $\theta(t)$ sparse such that
\[
 F(\theta(t)) > F(\theta(t+1))\quad t=0,1,2,\dots
\]
that implies convergence for any lower
 bounded function $F$ from the monotone convergence theorem. Furthermore
       we also need to make sure that the solution converges to the optimal solution
       $\theta^*$ as $\lim_{t \to \infty} F(\theta(t)) =
 F(\theta^*)$.
\item Second, 
       unlike standard cases, we need to perform the
       optimization with only the nonzero part of $\beta$ without even seeing
       the zero part of $\beta$ explicitly. In addition, we need to identify the
      nonzero part of $\theta$ at each step without checking subgraph $x_i$ that
      corresponds to $\beta_i=0$ as much as possible.
\end{enumerate}
We present our approach for solving these two issues, the first and
second points being in Section \ref{optim1} and in Section \ref{optim2},
respectively. 

\subsection{Tseng-Yun Class of Block Coordinate Gradient Descent}\label{optim1}

Suppose hypothetically that our features are explicitly given beforehand as in the
standard machine-learning situations. Then, there are many optimization
procedures available for the nonsmooth objective functions in Problem
\ref{prob2} such as
Proximal Gradient Method \citep{Fukushima:1981}, Iterative Shrinkage/Thresholding
\citep{Figueiredo:2003,Daubechies:2004}, Operator Splitting \citep{Combettes:2005}, Fixed Point Continuation
\citep{Hale:2008a,Hale:2008b}, FISTA \citep{Beck:2009}, SpaRSA
\citep{Wright:2009}, and glmnet \citep{Friedman:2007,Friedman:2010}.
Most of these methods are based on coordinate descent for efficiently
finding the optimal sparse solution. 

However, our problem is defined by the implicitly-given
subgraph-indicator features, and we need to simultaneously learn the necessary
subgraphs. To make things worse, since our problem has an 
intractably large number of parameters, we cannot even access to all
parameter values. Therefore, it is impossible to
simply apply these algorithms to our setting.
Moreover, we need to keep all intermediate
solutions sparse as well. Any generic solver based on interior-point
methods, which can generate non-sparse intermediate solutions, also does
not fit to our setting.

We thus extend a framework of block coordinate
descent with small nonzero coordinate blocks to our simultaneous learning of
features and parameters. \cite{Tseng:2009,Yun:2011} studied a class of block
coordinate gradient descent for nonsmooth optimization appeared in
Problem \ref{prob2}. They presented a fast algorithm and also established the
global and linear convergence under a local error bound
condition, as well as Q-linear convergence for 1-norm regularized convex
minimization. Moreover, when we
choose the coordinate block by a Gauss-Southwell-type rule, we can make
all the intermediate solutions sparse without breaking the theoretical
convergence property as we see below.

Their algorithm is based on gradient descent by applying 
local second-order approximation at the current $\theta(t)$ to only the smooth part $f(\theta)$ of the
objective function $F(\theta)=f(\theta)+R(\theta)$ as
\begin{align*}
 \min_{\theta} \left[ f(\theta)+R(\theta) \right] & =  \min_{\theta} \left[f(\theta) -
 f(\theta(t)) + R(\theta) \right]\\
& \approx \min_{\theta} \left[
\langle \nabla f(\theta(t)), \theta-\theta(t) \rangle + \frac{1}{2}
 \langle \theta - \theta(t), H(t) (\theta - \theta(t))\rangle
 + R(\theta)
\right]
\end{align*}
where $H(t)\succ 0$ is a positive-definite matrix approximating the Hessian
$\nabla^2 f(\theta(t))$. The main idea is to solve this local
minimization by block coordinate descent
instead of directly optimizing the original objective function by
coordinate descent, which may be viewed as a hybrid of
gradient projection and coordinate descent. The coordinate block to be
updated at each iteration are chosen in a Gauss-Southwell way, which can
be the entire coordinates or a small block of coordinates satisfying a
required condition.

The algorithm first computes a descent direction $d(t) := T(\theta(t))-\theta(t)$ where
\begin{equation}\label{Ttheta}
T(\theta(t)) := \arg \min_{\theta} \left[
\langle \nabla f(\theta(t)), \theta-\theta(t) \rangle
+ \frac{1}{2} \langle \theta - \theta(t), H(t) (\theta - \theta(t))
\rangle + R(\theta)
\right]
\end{equation}
which is optimized by block coordinate descent with a Gauss-Southwell
type rule of coordinate block for $d(t)$. We choose the coordinate block
by setting $d(t)_j = 0$ for $j$ such that $v(t) \|d(t)\|_{\infty} >
|d(t)_j|$ with some predefined scaling factor $v(t)$, which can adjust
the sparsity of intermediate solutions. This coordinate block satisfies
the Gauss-Southwell-r rule defined in \cite{Tseng:2009}, which ensures the
sufficient descent and global convergence. Next, the algorithm performs
one-dimensional minimization of stepsize $\alpha(t)$ for the update of
\[
 \theta(t+1) \leftarrow \theta(t) + \alpha(t)\, d(t), \quad 
\]
by a line search with the following Armijo rule: choose $\alpha_{\mathrm{init}}(t)>0$ and
let $\alpha(t)$ be the largest element of $\{\alpha_{\mathrm{init}}(t)\,
s^j\},j=0,1,\dots$, where $s$ is a scaling parameter such that $0 < s < 1$, satisfying
\begin{equation}\label{armijo}
 F(\theta(t)+\alpha(t)d(t)) \leqslant F(\theta(t)) + \alpha(t)\sigma \Delta(t)
\end{equation}
where $0 < \sigma < 1, 0 < \gamma < 1$, and
\[
 \Delta(t) := \langle \nabla f(\theta(t)),d(t) \rangle
+ \gamma \langle d(t), H(t) d(t)
\rangle + R(\theta(t)+d(t)) - R(\theta(t)).
\]
Note that since $\Delta(t)\leqslant (\gamma-1)\langle d(t), H(t) d(t)\rangle
<0 $, this rule guarantees that $F(\theta(t+1)) < F(\theta(t))$ after the update
of $\theta(t+1) \leftarrow \theta(t)+\alpha(t)d(t)$. The amount of
$\alpha(t)\sigma \Delta(t)$ is required for guaranteeing not only the
convergence to some point but also the convergence to the optimal
solution (See their original papers for theoretical details). This
Armijo rule is based on the following lemma (Lemma 1 in
\cite{Tseng:2009}).

\begin{lemma}[Tseng and Yun, 2009]
For any $\theta$ and $H\succ 0$, let $g=\nabla f(\theta)$ and $d_J=T(\theta)-\theta$ such that $d_j=0$
 for $j$ in the index subset $J$. Then
\begin{align*}
& F(\theta+\alpha d_J) \leqslant F(\theta) + \alpha\left(
\langle g,d_J \rangle + R(\theta+d_J) - R(\theta)
\right) + o(\alpha) \quad \forall \alpha \in (0,1]\\
& \langle g,d_J \rangle + R(\theta+d_J) - R(\theta) \leqslant - \langle d_J,
 H d_J \rangle.
\end{align*}
\end{lemma}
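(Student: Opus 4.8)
The plan is to treat the two displayed inequalities separately, since they rely on different properties: the first is a one-sided first-order expansion of $F$ along the ray $\theta + \alpha d_J$, while the second is a descent estimate that encodes the optimality of $d_J$ for the inner quadratic model. Throughout I would write $q(d) := \langle g, d\rangle + \frac{1}{2}\langle d, H d\rangle + R(\theta + d)$ for the convex model minimized in \eqref{Ttheta}, so that $d_J$ is its minimizer over the block subspace $V := \{d : d_j = 0,\ j \in J\}$, and I would use that $R$ is convex (here $R(\theta)=\lambda_1\|\beta\|_1$) and that $f$ is differentiable with $g=\nabla f(\theta)$. Note that convexity of $f$ is never invoked, which is consistent with the generic smooth $f$ allowed in Problem \ref{prob2}.

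For the first inequality I would expand each part of $F(\theta + \alpha d_J) = f(\theta + \alpha d_J) + R(\theta + \alpha d_J)$ for $\alpha \in (0,1]$. Differentiability of $f$ gives the Taylor expansion $f(\theta + \alpha d_J) = f(\theta) + \alpha\langle g, d_J\rangle + o(\alpha)$. For the nonsmooth term I would write $\theta + \alpha d_J = (1-\alpha)\theta + \alpha(\theta + d_J)$, which is a genuine convex combination precisely because $\alpha \in [0,1]$, and apply convexity of $R$ to obtain $R(\theta + \alpha d_J) \leqslant (1-\alpha)R(\theta) + \alpha R(\theta + d_J)$. Adding the two estimates and collecting the $\alpha$-linear terms yields the claimed bound, with the whole $o(\alpha)$ remainder coming only from the smooth part; the restriction to $\alpha \in (0,1]$ is exactly what the convexity step needs.

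The second inequality is where the real work lies, and the main obstacle is to extract the \emph{full} coefficient $-\langle d_J, H d_J\rangle$ rather than the weaker $-\tfrac{1}{2}\langle d_J, H d_J\rangle$ that a naive comparison $q(d_J)\leqslant q(0)$ would give. I would instead use the first-order optimality condition for the minimizer of the convex model over $V$: there exists a subgradient $s \in \partial R(\theta + d_J)$ with $\langle g + H d_J + s, w\rangle = 0$ for every $w \in V$, and since $d_J \in V$ itself, taking $w = d_J$ gives $\langle g, d_J\rangle + \langle d_J, H d_J\rangle + \langle s, d_J\rangle = 0$. Combining this with the subgradient inequality $R(\theta) \geqslant R(\theta + d_J) - \langle s, d_J\rangle$, i.e. $\langle s, d_J\rangle \geqslant R(\theta + d_J) - R(\theta)$, and substituting, I obtain $\langle g, d_J\rangle + R(\theta + d_J) - R(\theta) \leqslant -\langle d_J, H d_J\rangle$, as required. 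The delicate points are that $V$ is a linear subspace, so that $d_J$ is itself an admissible test direction and the optimality condition holds as an equality over all of $V$, and that $H \succ 0$ ensures the minimizer defining $d_J$ is well defined; the rest is routine rearrangement.
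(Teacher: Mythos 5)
Your proof is correct. Note first that the paper itself offers no proof of this statement: it is quoted verbatim as Lemma~1 of \cite{Tseng:2009} and used as a black box to justify the Armijo rule \eqref{armijo}, so there is nothing in the paper to compare against line by line; the relevant comparison is with Tseng and Yun's original argument. Your treatment of the first inequality (Taylor expansion of $f$ along the fixed direction $d_J$ plus convexity of $R$ applied to the convex combination $(1-\alpha)\theta+\alpha(\theta+d_J)$, which is where $\alpha\in(0,1]$ enters) is exactly the standard one. For the second inequality you take a genuinely different route: you invoke the first-order optimality condition for the minimizer of the model $q(d)=\langle g,d\rangle+\tfrac12\langle d,Hd\rangle+R(\theta+d)$ over the subspace $V$, pick a subgradient $s\in\partial R(\theta+d_J)$ with $g+Hd_J+s\perp V$, test against $w=d_J\in V$, and close with the subgradient inequality for $R$. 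Tseng and Yun instead avoid subdifferential calculus entirely: they compare $q(\alpha d_J)\geqslant q(d_J)$ for $\alpha\in(0,1)$, bound $R(\theta+\alpha d_J)$ by convexity, divide by $\alpha-1<0$, and let $\alpha\to 1^-$ to recover the full coefficient $-\langle d_J,Hd_J\rangle$ rather than the weaker $-\tfrac12\langle d_J,Hd_J\rangle$. Both arguments are valid; yours is shorter but needs the sum rule for subdifferentials and the normal-cone characterization of constrained minimizers, while theirs is more elementary and extends verbatim to the case where $R$ is merely convex lower semicontinuous without any qualification condition.

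One point worth flagging, though it is a defect of the paper's paraphrase rather than of your proof: the statement ``$d_J=T(\theta)-\theta$ such that $d_j=0$ for $j$ in the index subset $J$'' is ambiguous between (a) $d_J$ being the minimizer of the model restricted to the subspace $V$, which is what your proof (and Tseng--Yun's lemma) requires, and (b) $d_J$ being the unrestricted minimizer with coordinates in $J$ zeroed out afterwards, which is what the algorithm in Figure~\ref{algo} literally does in Steps~1--3. The two coincide here only because the paper takes $H(t)$ diagonal and $R=\lambda_1\|\cdot\|_1$ separable, so the model decouples coordinatewise; for a non-diagonal $H$ your optimality-condition step (and the lemma itself, as applied) would fail under reading (b). You correctly adopted reading (a), which is the one under which the lemma is true in general.
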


\subsection{Tracing Nonzero Coefficient Block of Subgraph Indicators}\label{optim2}

Now we consider how we can use the Tseng-Yun block coordinate gradient descent
in our subgraph-search-required situations of Problem \ref{prob1} (or
equivalently Problem \ref{prob2}). Remember that the parameter
space is intractably large due to the size of the accompanying subgraph
space $\mathcal{X}(\mathcal{G}_n)$. Each $j$-th coordinate of $\theta(t)$ is
associated with the corresponding subgraph feature $x_j \in
\mathcal{X}(\mathcal{G}_n)$. Thus, we cannot even have $\theta(t)$
explicitly, and we need to perform the optimization process
with tracing only the nonzero part of $\theta(t)$ without even seeing
the zero part explicitly. Since we start with setting all coordinates of
initial $\theta(0)$ to zero, it is enough to consider the update rule to
compute $\theta(t+1)$ from the current $\theta(t)$ at hand. More
precisely, we need a procedure to obtain the nonzero part of
$\theta(t+1)$ from the nonzero part of $\theta(t)$ at hand.

Suppose that we have all nonzero coordinates of $\theta(t)$. 
The Tseng-Yun's iterative procedure requires the following
computations at each step of $\theta(t) \to \theta(t+1)$ until convergence:
\begin{enumerate}
 \item[] \textbf{Step 1.} Compute $T(\theta(t))$ by coordinate descent for \eqref{Ttheta}.
 \item[] \textbf{Step 2.} Compute a descent direction by $d(t) = T(\theta(t)) - \theta(t)$.
 \item[] \textbf{Step 3.} Set Gauss-Southwell-r block by $d(t)_j=0$ for
       $\{j \mid v(t) \|d(t)\|_{\infty} > |d(t)_j|\}$.
 \item[] \textbf{Step 4.} Do a line search for $\alpha(t)$ with the
	 modified Armijo rule of \eqref{armijo}.
 \item[] \textbf{Step 5.} Update the parameter by $\theta(t+1) \leftarrow \theta(t) + \alpha(t) d(t)$.
\end{enumerate}
Since we already have all nonzero coordinates of $\theta(t)$, the
determining step to detect the nonzero part of $\theta(t+1)$ is
\textbf{Step 1}. Indeed, if we obtain the nonzero part of $T(\theta(t))$,
\textbf{Step 2} to \textbf{5} follow just as in familiar optimization
situations (See Figure \ref{thetatree} for a schematic example). 


\begin{figure}[t]
\centering
\includegraphics[width=140mm]{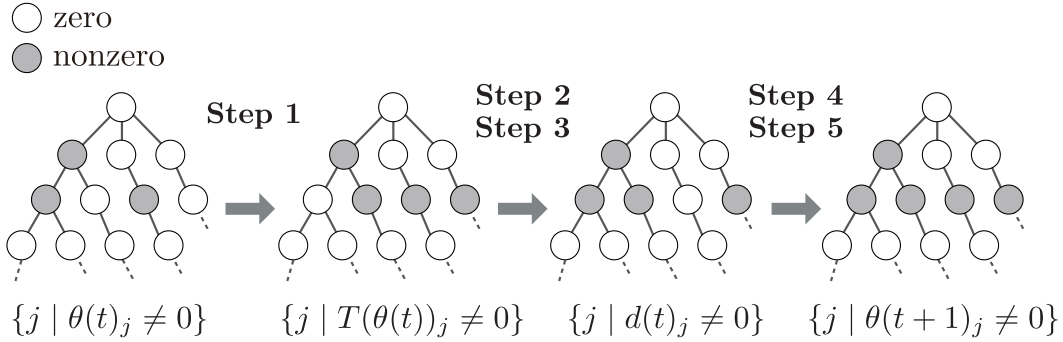}
\caption{
A schematic example of corresponding nonzero coordinates of $\theta(t)$,
 $T(\theta(t))$, $d(t)$, and $\theta(t+1)$ in the enumeration tree.
The nonzero coordinates of $\theta(t+1)$ is
 computed from those of $\theta(t)$ through $T(\theta(t))$ and
 $d(t)$, and identifying $T(\theta(t))_j \neq 0$ is the determining step.
Since $d(t)=T(\theta(t))-\theta(t)$, we have
 $\{j\mid d(t)_j
 \neq 0\} \subseteq \{j\mid \theta(t)_j \neq 0\} \cup
 \{j\mid T(\theta(t))_j \neq 0\}$. 
Since $\theta(t+1)=\theta(t)+\alpha(t)d(t)$, we can also see 
 $\{j\mid 
 \theta(t+1)_j \neq 0\} 
 \subseteq \{j\mid \theta(t)_j \neq 0\} \cup
 \{j\mid T(\theta(t))_j \neq 0\}$.
} \label{thetatree}
\end{figure}

We thus focus on how to identify the nonzero indices $\{j
\mid T(\theta(t))_j \neq 0 \}$ from the current $\theta(t)$ at hand
(\textbf{Step 1}). Note that \textbf{Step 1} is based on coordinate
descent and the computation of each $i$-th coordinate $T(\theta(t))_j$
can be separately estimated. Also note that \textbf{Step 3},
 \textbf{4} and \textbf{5} can be carried out only after \textbf{Step 1}
 and \textbf{2} for all nonzero coordinates are obtained because \textbf{Step
 3} requires  $\|d(t)\|_\infty$. 

The starting point to realize \textbf{Step 1} in our ``graph'' situations
is the following lemma, which shows that $T(\theta(t))_j$ for the $j$-th
coordinate has the closed-form solution.

\begin{lemma}\label{cond1}
In Problem \ref{prob1}, when we solve \eqref{Ttheta} by coordinate
 descent, the following closed-form solution exists for each $j=1,2,\dots$:
\[
 T(\theta(t))_j = 
\begin{cases}
-H(t)_{jj}^{-1}(b_j(t) + \lambda_1) & b_j(t) < -\lambda_1 \\
-H(t)_{jj}^{-1}(b_j(t) - \lambda_1) & b_j(t) > \lambda_1 \\
0 & |b_j(t)| \leqslant \lambda_1
\end{cases}
\]
where 
\[
 b_j := 
\sum_{i=1}^n \frac{\partial L(y_i,\mu(g_i;\theta(t)))}{\partial
 \theta(t)_j} + (\lambda_2 - H(t)_{jj}) \theta(t)_j.
\]
\end{lemma}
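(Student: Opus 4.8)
The plan is to recognize the bracketed objective in \eqref{Ttheta} as a quadratic-plus-$\ell_1$ program and to exhibit the per-coordinate update as the proximal map of the absolute value, i.e. a soft-thresholding operation. First I would freeze every coordinate except the $j$-th in \eqref{Ttheta} and write the objective as a function of the scalar $\phi_j$ alone. Because $R(\theta)=\lambda_1\|\beta\|_1$ penalizes only the coordinates $j\geqslant 1$ and is separable across them, the sole nonsmooth contribution to the $j$-th subproblem is the single term $\lambda_1|\phi_j|$; the remaining quadratic cross terms are constant once the other coordinates are fixed, and they vanish identically when $H(t)$ is taken diagonal, as we do here to keep \eqref{Ttheta} feasible over infinitely many features. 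In that case \eqref{Ttheta} separates completely and a single coordinate sweep already returns the exact minimizer. The $j$-th subproblem is then $\min_{\phi_j}\bigl[\, g_j(\phi_j-\theta(t)_j)+\tfrac12 H(t)_{jj}(\phi_j-\theta(t)_j)^2+\lambda_1|\phi_j|\,\bigr]$ with $g_j:=\partial f(\theta(t))/\partial\theta_j$.

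The second step is to make $g_j$ explicit and to isolate the quantity $b_j$. Differentiating $f(\theta)=\sum_i L(y_i,\mu(g_i;\beta,\beta_0))+\tfrac{\lambda_2}{2}\|\beta\|_2^2$ shows that for $j\geqslant 1$ the elastic-net term contributes exactly $\lambda_2\theta(t)_j$, so $g_j=\sum_i \partial L(y_i,\mu(g_i;\theta(t)))/\partial\theta_j+\lambda_2\theta(t)_j$. The key algebraic observation, which is really the only content of the lemma, is that the combination $g_j-H(t)_{jj}\theta(t)_j$ coincides with the stated $b_j$, since the $\lambda_2\theta(t)_j$ coming from $\nabla f$ merges with the $-H(t)_{jj}\theta(t)_j$ term to produce the coefficient $(\lambda_2-H(t)_{jj})\theta(t)_j$.

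Finally I would solve the one-dimensional convex subproblem through its subgradient optimality condition. Away from $\phi_j=0$ the objective is differentiable with derivative $g_j+H(t)_{jj}(\phi_j-\theta(t)_j)+\lambda_1\operatorname{sign}(\phi_j)$, while at $\phi_j=0$ the subdifferential of $|\phi_j|$ is $[-1,1]$. Setting stationarity to zero and substituting $g_j-H(t)_{jj}\theta(t)_j=b_j$ gives, on the branch $\phi_j>0$, the candidate $\phi_j=-H(t)_{jj}^{-1}(b_j+\lambda_1)$, which is genuinely positive exactly when $b_j<-\lambda_1$; the symmetric computation yields $\phi_j=-H(t)_{jj}^{-1}(b_j-\lambda_1)$ precisely when $b_j>\lambda_1$; and $0$ satisfies the inclusion iff $|b_j|\leqslant\lambda_1$. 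These three cases are mutually exclusive and exhaustive, reproducing the claimed formula. The point to watch is the threshold-and-sign bookkeeping, together with the consistency check that each branch's candidate actually carries the sign its branch assumes; positive definiteness $H(t)\succ 0$ ensures $H(t)_{jj}>0$, so every denominator is well defined and the minimizer is unique.
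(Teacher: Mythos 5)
Your proposal is correct and follows essentially the same route as the paper's Appendix A1: freeze all coordinates but the $j$-th at $\theta(t)$, reduce \eqref{Ttheta} to a univariate quadratic-plus-$\lambda_1|z|$ problem, and read off the soft-thresholding solution, with the identity $b_j = \nabla f(\theta(t))_j - H(t)_{jj}\theta(t)_j$ absorbing the elastic-net term. The only cosmetic difference is that you resolve the one-dimensional subproblem via subgradient optimality while the paper completes the square; your explicit remarks on the vanishing cross terms and on the sign-consistency of each branch make the argument slightly more careful than the paper's, but the substance is identical.
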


\begin{proof}
 See Appendix A1.
\end{proof}

Combined with the structured search space
of $\mathcal{X}(\mathcal{G}_n)$ which is equal to the enumeration tree
$\mathcal{T}(\mathcal{G}_n)$,
Lemma \ref{cond1} provides a way to examine if $T(\theta(t))_k=0$ for
unseen $k$ such that $x_k \in \mathcal{T}(x_j)$:
When we know $T(\theta(t))_k=0$ for all unseen $x_k \in
\mathcal{T}(x_j)$, we can skip the check of all subgraphs in the
subtree below $x_j$, i.e. $x_k \in \mathcal{T}(x_j)$

As the lemma claims, $T(\theta(t))_k=0$ is controlled by whether
$|b_k(t)| \leqslant \lambda_1$ or not. If we know the largest possible
value $b_j^*$ of $|b_k(t)|$ for any $k$ such that $x_k \in
\mathcal{T}(x_j)$ and also know that $b_j^* \leqslant \lambda_1$, 
then we can conclude $T(\theta(t))_k=0$ for all of such $k$s.
To consider how to obtain this bound $b_j^*$, let $b_k^{(1)}(t)$ and
$b_k^{(2)}(t)$ be the first and second terms of $b_k(t)$, respectively, as
\[
 b_k^{(1)}(t) := \sum_{i=1}^n \frac{\partial L(y_i,\mu(g_i;\theta(t)))}{\partial
 \theta(t)_k}, \qquad
 b_k^{(2)}(t) := (\lambda_2 - H(t)_{kk}) \theta(t)_k.
\]
Then, since
\begin{equation}\label{ineq}
\text{$|b^{(1)}+b^{(2)}| \leqslant
 \max\{\overline{b}^{(1)}+\overline{b}^{(2)},-\underline{b}^{(1)}-\underline{b}^{(2)}\}$ for any
 $\underline{b}^{(1)}\leqslant b^{(1)} \leqslant \overline{b}^{(1)}, \underline{b}^{(2)}\leqslant b^{(2)} \leqslant \overline{b}^{(2)}$},
\end{equation}
we can obtain the bounds for $|b_k(t)|$ if we have the individual bounds
for $b_k^{(1)}$ and $b_k^{(2)}$. 
Since $b_k^{(1)}(t)$ is
separable as we see later in Appendix A2, we can have Morishita-Kudo
bounds by Lemma \ref{mkbounds}. 
On the other hand, the second term $b_k^{(2)}(t)$ is not the case, to
which Morishita-Kudo bounds can be applied. However, since we already have
all $\theta(t)_j$ for the
nonzero indices $\{j\mid \theta(t)_j \neq 0\}$, we already have
$H(t)_{kk}$ and $\theta(t)_k$ for $\theta(t)_k \neq 0$. We might not have the value
of $H(t)_{kk}$ for $\theta(t)_k = 0$, but in this case we have
$b_k^{(2)}=0$ regardless of the value of $H(t)_{kk}$. Then,
provided that we have the index-to-set mapping 
\[
 j \mapsto \{k \mid x_k \in \mathcal{T}(x_j), \theta(t)_k \neq 0\}
\]
at each $j$, we can also have the exact upper and lower bounds for
$|b_k(t)|$ such that $x_k \in \mathcal{T}(x_j)$. We will see in the next
section how to construct this mapping efficiently. 
By Lemma \ref{cond1} and the technique we call
\textit{depth-first dictionary passing} described in the next
subsection, we have the following result. Note that this also confirms
that we can control the sparsity of $\theta(t)$ by the parameter $\lambda_1$.

\begin{theorem}\label{cond2}
Suppose we have $x_j \in \mathcal{T}(\mathcal{G}_n)$. Then, for any $x_k \in
 \mathcal{T}(x_j)$, there exist upper and lower bounds
\begin{align*}
 & \underline{L}_j(t) \leqslant \sum_{i=1}^n \frac{\partial L(y_i,\mu(g_i;\theta(t)))}{\partial
 \theta(t)_k} \leqslant \overline{L}_j(t)\\
 & \underline{B}_j(t) \leqslant (\lambda_2 - H(t)_{kk}) \theta(t)_k \leqslant \overline{B}_j(t),
\end{align*}
that are only dependent on $x_j$, and $T(\theta(t))_k=0$ if $
 \max\{\overline{L}_j(t)+\overline{B}_j(t),-\underline{L}_j(t)-\underline{B}_j(t)\}
 \leqslant \lambda_1$.
\end{theorem}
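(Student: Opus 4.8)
The plan is to decompose the quantity $b_k(t)$ that controls the vanishing of $T(\theta(t))_k$ into its two summands $b_k^{(1)}(t)$ and $b_k^{(2)}(t)$, bound each separately over the entire subtree $\mathcal{T}(x_j)$ in a manner depending only on $x_j$, and then recombine these through the elementary inequality \eqref{ineq} to bound $|b_k(t)|$. Once $|b_k(t)| \leqslant \lambda_1$ is established from the combined bound, Lemma \ref{cond1} immediately yields $T(\theta(t))_k = 0$. Thus the statement reduces to producing the four quantities $\underline{L}_j(t),\overline{L}_j(t),\underline{B}_j(t),\overline{B}_j(t)$ and checking that they bound the respective terms for all $x_k \in \mathcal{T}(x_j)$.

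First I would handle the gradient term $b_k^{(1)}(t) = \sum_{i=1}^n \partial L(y_i,\mu(g_i;\theta(t)))/\partial \theta(t)_k$. By the chain rule and the linear form of the model \eqref{model}, the partial derivative with respect to $\beta_k$ factors as $\partial L/\partial \mu \cdot I(x_k \subseteq g_i)$ evaluated at the current iterate. Writing $c_i$ for the constant $\partial L(y_i,\mu)/\partial \mu$ at $\mu = \mu(g_i;\theta(t))$, which is independent of $k$, we obtain $b_k^{(1)}(t) = \sum_i c_i\, I(x_k \subseteq g_i)$, a separable function of the Boolean vector $I_{\mathcal{G}_n}(x_k)$ with coordinate maps $f_i(0)=0,\ f_i(1)=c_i$. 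Since $x_k \in \mathcal{T}(x_j)$ gives $x_j \subseteq x_k$ and hence $1(x_k)\subseteq 1(x_j)$ by Lemma \ref{boolprop}, Lemma \ref{mkbounds} applied with $u=I_{\mathcal{G}_n}(x_j)$ furnishes the Morishita-Kudo bounds $\underline{L}_j(t)=\sum_{i\in 1(x_j)}\min\{0,c_i\}$ and $\overline{L}_j(t)=\sum_{i\in 1(x_j)}\max\{0,c_i\}$, both computable at node $x_j$ alone.

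The harder term is $b_k^{(2)}(t)=(\lambda_2 - H(t)_{kk})\theta(t)_k$, which is not separable in the indicator vector and so lies outside the reach of Lemma \ref{mkbounds}. The crux is to exploit sparsity: whenever $\theta(t)_k = 0$ the term is exactly $0$, irrespective of the possibly unknown $H(t)_{kk}$, so only the finitely many indices $k$ with $\theta(t)_k \neq 0$ can contribute. Provided the index-to-set mapping $j \mapsto \{k \mid x_k \in \mathcal{T}(x_j),\ \theta(t)_k \neq 0\}$ is at hand, I would take $\overline{B}_j(t)$ and $\underline{B}_j(t)$ to be the maximum and minimum of the finite set $\{0\}\cup\{(\lambda_2 - H(t)_{kk})\theta(t)_k \mid x_k \in \mathcal{T}(x_j),\ \theta(t)_k \neq 0\}$; these are exact bounds depending only on $x_j$ and on the already-stored nonzero part of $\theta(t)$. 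The main obstacle is not the existence of these bounds but the feasibility of maintaining the mapping during a depth-first traversal without materializing the whole subtree, which is precisely the role of the depth-first dictionary passing of the next subsection; for this theorem I would simply invoke the availability of the mapping.

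Finally, applying inequality \eqref{ineq} with $b^{(1)}=b_k^{(1)}(t)$, $b^{(2)}=b_k^{(2)}(t)$ and the four bounds above gives $|b_k(t)| \leqslant \max\{\overline{L}_j(t)+\overline{B}_j(t),\,-\underline{L}_j(t)-\underline{B}_j(t)\}$. Whenever this right-hand side is $\leqslant \lambda_1$, it follows that $|b_k(t)| \leqslant \lambda_1$ for every $x_k \in \mathcal{T}(x_j)$, and Lemma \ref{cond1} then forces $T(\theta(t))_k = 0$ throughout the subtree, which is the claim.
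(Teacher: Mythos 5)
Your proposal is correct and follows essentially the same route as the paper's Appendix A2: the same decomposition into the separable gradient term (bounded by the Morishita--Kudo bounds of Lemma \ref{mkbounds} via Lemma \ref{boolprop}) and the non-separable term (bounded exactly using only the finitely many nonzero coordinates of $\theta(t)$), recombined through inequality \eqref{ineq} and concluded with Lemma \ref{cond1}. The explicit formulas you give for $\underline{L}_j(t)$, $\overline{L}_j(t)$, $\underline{B}_j(t)$, and $\overline{B}_j(t)$ coincide with those in the paper.
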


\begin{proof}
 See Appendix A2.
\end{proof}

\begin{remark}
If we observe $
 \max\{\overline{L}_j(t)+\overline{B}_j(t),-\underline{L}_j(t)-\underline{B}_j(t)\}
 \leqslant \lambda_1$ at $x_j$, we can conclude that there are no $x_k
 \in \mathcal{T}(x_j)$ such
 that $T(\theta)_k \neq 0$, and therefore prune
 this entire subtree $\mathcal{T}(x_j)$ in search for $T(\theta)_k \neq 0$.
\end{remark}

\subsection{Depth-First Dictionary Passing}\label{sec_dfcp}

\begin{figure}[t]
\centering
\includegraphics[height=0.23\textheight]{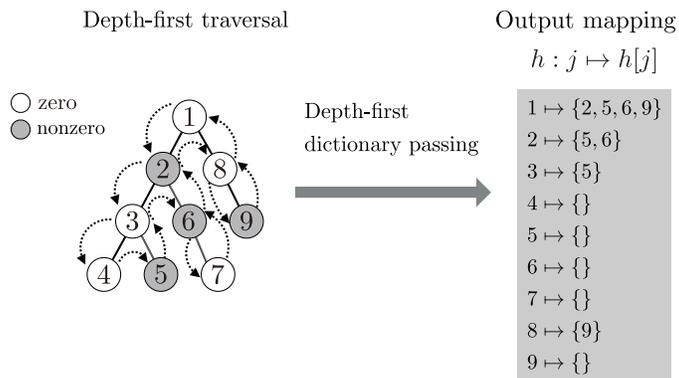}
\caption{
An example of depth-first dictionary passing. Depth-first
 dictionary passing is a procedure to produce the
 mapping $h$ on the right after the depth-first traversal of the tree on
 the left. Each $h[j]$ is the indices of nonzero nodes below node $j$.
} \label{dfcp}
\end{figure}

In this subsection, we describe how to obtain $\underline{B}_j(t)$ and
$\overline{B}_j(t)$ in Theorem \ref{cond2} which are needed for
computing $\theta(t+1)$ from $\theta(t)$. 
Since 
we already have the nonzero coordinates of $\theta(t)$ for the index subset
$\{j\mid\theta(t)_j\neq 0\}$ at hand, these bounds are defined simply
as the maximum and minimum of the finite candidates: 
\begin{align*}
\underline{B}_j(t) &:= \min_{l \in \{k\mid x_k \in \mathcal{T}(x_j)\}}
 (\lambda_2 - H(t)_{ll}) \theta(t)_l = \min \left(
0, \min_{l \in \{k\mid x_k \in \mathcal{T}(x_j), \theta(t)_k \neq 0\}}
 (\lambda_2 - H(t)_{ll}) \theta(t)_l
\right),\\
\overline{B}_j(t) &:= \max_{l \in \{k\mid x_k \in \mathcal{T}(x_j)\}}
 (\lambda_2 - H(t)_{ll}) \theta(t)_l = \max \left(
0, \max_{l \in \{k\mid x_k \in \mathcal{T}(x_j), \theta(t)_k \neq 0\}}
 (\lambda_2 - H(t)_{ll}) \theta(t)_l
\right).
\end{align*}
We can compute these values in a brute-force manner by checking $x_j
\subseteq x_k$ for all $k$ such that $\theta(t)_k \neq 0$, 
but this is practically inefficient due to subgraph isomorphism test.

What we need instead is an efficient way to obtain, for each $j$, the index subset
\begin{equation}\label{tmap}
\{k\mid x_k \in \mathcal{T}(x_j), \theta(t)_k \neq 0\}.  
\end{equation}
Recall that in order to obtain $\{j\mid \theta(t)_j \neq 0\}$, we
traverse the enumeration tree $\mathcal{T}(\mathcal{G}_n)$ to
find all $j$ such that $T(\theta(t-1))_j \neq 0$ using the bounds 
in Theorem \ref{cond2} (Figure \ref{thetatree}). During this traversal,
we can also record, for every visited node $x_j$, the mapping
\begin{equation}\label{maps}
 j \mapsto \{k\mid x_k \in \mathcal{T}(x_j), T(\theta(t-1))_k \neq 0\}.
\end{equation}
If we have this mapping, then we can recursively obtain
\eqref{tmap} for each $j$ as
\begin{align}
&\{k\mid x_k \in \mathcal{T}(x_j), \theta(t)_k \neq 0\}\nonumber \\ 
&= \left(\{k\mid x_k \in \mathcal{T}(x_j), T(\theta(t-1))_k \neq 0\}\cup
\{k\mid x_k \in \mathcal{T}(x_j), \theta(t-1)_k \neq 0\}\right)\nonumber
 \\ 
&\hspace{2em}\cap \{k\mid\theta(t)_k \neq 0\}. \label{recursive}
\end{align}
Hence in what follows, we present the procedure, which we
call \textit{depth-first dictionary passing}, to build the mapping of
\eqref{maps} over all necessary $j$. Figure \ref{dfcp} shows an example
of depth-first dictionary passing for the search tree on the left (The
output is shown on the right).

First, we define
$\mathcal{T}_{\mathrm{visited}}(\mathcal{G}_n)$ as the tree
consisting of all visited nodes during the depth-first traversal of
$\mathcal{T}(\mathcal{G}_n)$ with pruning based on Theorem
\ref{cond2}. Note that $\mathcal{T}_{\mathrm{visited}}(\mathcal{G}_n)$
is a subtree of $\mathcal{T}(\mathcal{G}_n)$, and it holds that $x_j \in
 \mathcal{T}_{\mathrm{visited}}(\mathcal{G}_n)$ for all $x_j$ such that
 $T(\theta(t))_j \neq 0$. We then define an auxiliary mapping as 
\begin{equation}\label{dict}
h'(t) := \left\{ 
 (j \mapsto \{k\mid x_k \in \mathcal{T}(x_j), T(\theta(t))_k \neq 0\})
\mid x_j \in \mathcal{T}_{\mathrm{visited}}(\mathcal{G}_n)
\right\}
\end{equation}
with four operations of \textsc{Get}, \textsc{Put}, \textsc{Keys}, and
\textsc{DeleteKey}: For a mapping $h: j \mapsto z$, \textsc{Get} returns
the target $z$ of indicated $j$, which we write by $h[j]$. 
\textsc{Put} registers a new pair $(j\mapsto z)$ to $h$, which we write
simply by  $h[j] \leftarrow z$. \textsc{Keys} returns
all registered keys in $h$ as $\text{\textsc{Keys}}(h) = \{j \mid (j \mapsto z)
\in h\}$. \textsc{DeleteKey} deletes the pair $(j\mapsto z)$
 indicated by $j$ from $h$ as $\text{\textsc{DeleteKey}}(h,j) = h -
 \{(j\mapsto z)\}$.

During the depth-first traversal of $\mathcal{T}(\mathcal{G}_n)$ at time
$t$, we keep a tentative mapping, $h_{\mathrm{tmp}}(t)$, to
build $h'(t)$ of \eqref{dict} at the end. We start with an empty
$h_{\mathrm{tmp}}(t)$. Then we update and pass $h_{\mathrm{tmp}}(t)$ to
the next node of the depth-first traversal. In the pre-order operation,
if we encounter $x_j \in \mathcal{T}_{\mathrm{visited}}(\mathcal{G}_n)$
such that $T(\theta(t))_j \neq 0$, we add this $j$ to all
$h_{\mathrm{tmp}}[i]$ for $i \in
\text{\textsc{Keys}}(h_{\mathrm{tmp}})$ by $h_{\mathrm{tmp}}[i]
\leftarrow  h_{\mathrm{tmp}}[i]\cup\{j\}$. This informs all the
 ancestors that $T(\theta(t))_j \neq 0$. Then, we register $j$ itself to
 $h_{\mathrm{tmp}}$ by initializing as $h_{\mathrm{tmp}}[j] \leftarrow \{\}$. 
In the post-order operation, if we have $h_{\mathrm{tmp}}[j] \neq \{\}$,
 then it implies that $x_j$ has descendants $x_k \in \mathcal{T}(x_j)$
 such that $T(\theta(t))_k \neq 0$ and $k\in h_{\mathrm{tmp}}[j]$.
Thus, this is an element of our target set $h'(t)$, and we finalize the $j$-th
element $h'(t)[j]$ by setting $h'(t)[j] \leftarrow
h_{\mathrm{tmp}}[j]$. At this point, we also remove $j$ from
$h_{\mathrm{tmp}}$ by \textsc{DeleteKey}$(h_{\mathrm{tmp}},j)$ because
$j$ cannot be an ancestor of any forthcoming nodes in the subsequent
depth-first traversal after $x_j$. In this way, we can obtain $h'(t)$
defined in \eqref{dict} at the end of traversal.

The depth-first dictionary passing finally gives us the mapping \eqref{tmap} as
\[
h(t) := \left\{ 
 (j \mapsto \{k\mid x_k \in \mathcal{T}(x_j), \theta(t)_k \neq 0\})
\mid x_j \in \mathcal{T}_{\mathrm{visited}}(\mathcal{G}_n)
\right\}
\]
from $h'(t-1), h(t-1)$ and the nonzero coordinates of $\theta(t)$ by
using a recursive formula of \eqref{recursive}. This also gives
$\underline{B}_j(t)$ and $\overline{B}_j(t)$ in Theorem \ref{cond2}
which are needed for computing $\theta(t+1)$ from $\theta(t)$.

\begin{figure}[t]
\centering
\includegraphics[width=150mm]{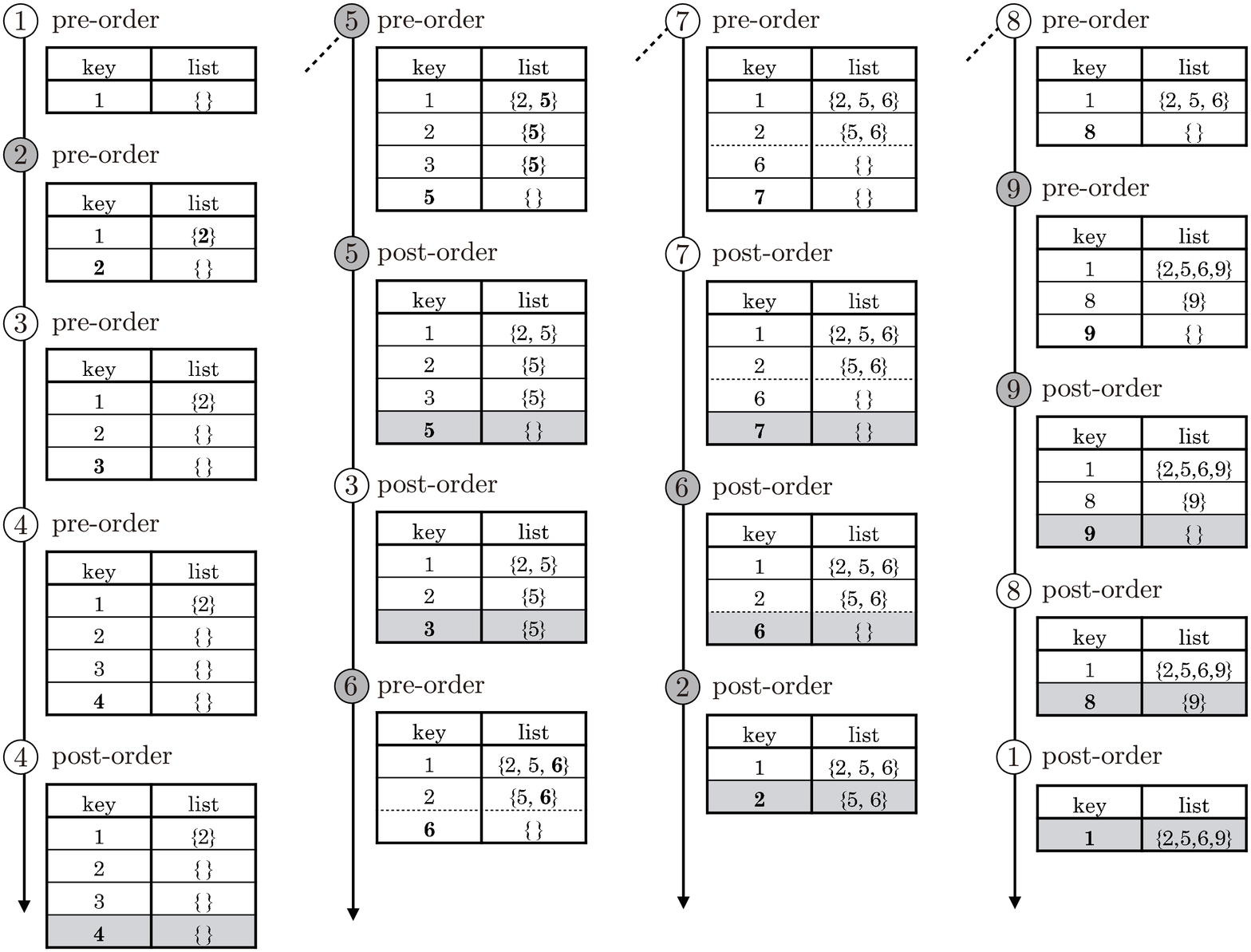}
\caption{
The transition of tentative mapping $h_{\mathrm{tmp}}$ in the
 depth-first dictionary passing in Figure \ref{dfcp}. The order follows
 the depth-first order of traversal, and each shaded mapping at
 post-order operations constitutes the final output $h$. At the end of
 traversal, we have the mapping $h$ on the right in Figure \ref{dfcp} as
 the collection of all shaded mappings.
} \label{dfcp_flow}
\end{figure}

Figure \ref{dfcp_flow} shows an example of building $h_{\mathrm{tmp}}$
(each key-list table) and registration to $h'$ (shown as the shaded
colored key-list pairs) for the case of Figure \ref{dfcp}. First, we
start a depth-first traversal from node $1$. In the pre-order operation at
node $1$, we just register key $1$ to $h_{\mathrm{tmp}}$ as
$h_{\mathrm{tmp}}[1] \leftarrow \{\}$. The next node $2$ is shaded, which
indicates that $T(\theta(t))_2 \neq 0$, then we inform node $2$ to all
ancestors by $h_{\mathrm{tmp}}[1] \leftarrow
h_{\mathrm{tmp}}[1] \cup \{2\}$ and also initialize as
$h_{\mathrm{tmp}}[2] \leftarrow \{\}$. At node $3$ and $4$, we just add
$h_{\mathrm{tmp}}[3] \leftarrow \{\}$ and $h_{\mathrm{tmp}}[4]
\leftarrow \{\}$, but at node $4$, we also need the post-order operation of
registering the mapping of $4$ to $h'$ and then removing key $4$. As a result,
at the shaded node of $5$, we have keys of $1,2$
and $3$ in $h_{\mathrm{tmp}}$. In the pre-order operation at node
$5$, we add $5$ to all of these as $h_{\mathrm{tmp}}[1] \leftarrow
h_{\mathrm{tmp}}[1]\cup\{5\}$, $h_{\mathrm{tmp}}[2] \leftarrow
h_{\mathrm{tmp}}[2]\cup\{5\}$ and $h_{\mathrm{tmp}}[3] \leftarrow
h_{\mathrm{tmp}}[3]\cup\{5\}$. In this way, we keep
$h_{\mathrm{tmp}}$, and obtain $h'$ at the end (as the set of the shaded
key-list pairs in tables of Figure \ref{dfcp_flow}).

\subsection{Algorithm}

Figure \ref{algo} shows the pseudocode for the entire procedure.

\IncMargin{1em}

\begin{algorithm}
\hrule
\vspace*{0.5ex}
\SetKwInput{KwAlgo}{Algorithm}
\KwAlgo{}
\vspace*{0.5ex}
\hrule
\vspace*{0.5ex}
$\theta(0) \leftarrow 0$\;
Build empty $h, h'$\;
\For{$t=0,1,2,\dots$}{
 Build an empty $hes, h_{\mathrm{tmp}}$\;
 \lForEach{$i \in \{i\mid\theta(t)_i\neq 0\}$}{$hes[i] \leftarrow
 H(t)_{ii}$\;}
 \ForEach{$x_j \in \mathcal{T}(\mathcal{G}_n)$ in the depth-first traversal}{
 \Begin(\emph{pre-order operation}){
 Compute $T(\theta(t))_j$ \tcp*[l]{by Lemma \ref{cond1}}
 \If{$T(\theta(t))_j\neq 0$}{
 \lForEach{$i\in$\textsc{Keys}$(h_{\mathrm{tmp}})$}{$h_{\mathrm{tmp}}[i] \leftarrow h_{\mathrm{tmp}}[i] \cup \{j\}$\;}
 }
 $h_{\mathrm{tmp}}[j] \leftarrow \{\}$\;
 $h[j] \leftarrow (h[j]\cup h'[j]) \cap \{i\mid \theta(t)_i \neq 0\}$
\tcp*[l]{by formula \eqref{recursive}}
 \eIf{$h[j] = \{\}$}{
 $\underline{B}_j \leftarrow 0, \overline{B}_j \leftarrow 0$\;
}{
 $\overline{B}_j \leftarrow \max\left\{\max_{k \in h[j]} \{
 (\lambda_2- hes[k]) \theta(t)_k\},0\right\}$\;
 $\underline{B}_j \leftarrow \min\left\{\min_{k \in h[j]} \{ (\lambda_2- hes[k]) \theta(t)_k\},0\right\}$\;
 }
 Compute $\overline{L}_j, \underline{L}_j $
\tcp*[l]{by Lemma \ref{mkbounds}}
 \eIf(\texttt{// by Theorem \ref{cond2}}){$\max\{\overline{L}_j+\overline{B}_j,-\underline{L}_j-\underline{B}_j\} \leqslant \lambda_1$
}{\emph{Prune} $\mathcal{T}(x_j)$\;}{\emph{Visit children of $x_j$}\;}
}
 \Begin(\emph{post-order operation}){
 \If{$h_{\mathrm{tmp}}[j] \neq \{\}$ }{
 $h'[j] \leftarrow h_{\mathrm{tmp}}[j]$\;
 }
 \textsc{DeleteKey}$(h_{\mathrm{tmp}},j)$\;
 }
}
$d(t)_i \leftarrow T(\theta(t))_i - \theta(t)_i$ for $i \in \{i\mid
 T(\theta(t))_i \neq 0 \vee \theta(t)_i \neq 0\}$\;
\emph{Gauss-Southwell-r: $d(t)_i \leftarrow
 0$ for $i$ such that $|d(t)|_i \leqslant v(t)\cdot\|d(t)\|_\infty$}\;
\emph{Armijo:} $\theta(t+1) \leftarrow \theta(t) + \alpha d(t)$ \tcp*[l]{by formula \eqref{armijo}}
\emph{Convergence test:} \lIf{\emph{$\|H(t)d(t)\|_\infty \leqslant \epsilon$}}{quit\;}
}
\vspace*{0.5ex}
\hrule
\caption{The entire algorithm for solving Problem \ref{prob1}.}\label{algo}
\end{algorithm}\DecMargin{1em}

\section{Numerical Results}

The proposed approach not only estimates the model parameters, but also
simultaneously searches and selects necessary subgraph features from
all possible subgraphs. This is also the case with existing approaches
such as Adaboost \citep{Kudo:2005} and LPBoost \citep{Saigo:2009}.

In this section, we thus numerically investigate the following three points
to understand the difference from these previously developed methods.
\begin{itemize}
 \item \textit{Convergence properties}: Contribution of the selected
       subgraph features at each iteration to the convergence to the
       optimal solution.
 \item \textit{Selected features}: Difference in the final selected
       subgraph features of three methods.
 \item \textit{Search-tree size}: Enumeration tree size at each
       iteration for searching necessary subgraph features. 
\end{itemize}

In order to systematically analyze these three points, we develop a
benchmarking framework where we can control the property, size, and
number of graph samples as well as we can measure the training and test
error of the model at each iteration. We
generate two sets of graphs by probabilistically combining a
small random subgraphs as \textit{unobserved} discriminative features,
and prepare a binary classification task of these two sets in a
supervised-learning fashion (the details are described in Section
\ref{datagen}). This generative model allows us to simulate the situation
where observed graphs have discriminative subgraph features behind, 
and we can compare the selected subgraph features by each learning
algorithm to these unobserved ``true'' features. In addition, we can
also evaluate not only the training error but also the test error. 
These two points are required since our goal here is to understand
the basic properties for convergence and subgraph feature selection.
For this binary classification task, we compare the following three methods. We
set the same convergence tolerance of $10^{-3}$ for both logistic
regression and LPBoost.

\begin{enumerate}
 \item \textbf{1-norm penalized logistic regression for graphs (with the 
       proposed learning algorithm)}. 
Logistic regression is one of the most standard methods for binary classification,
       which is often considered as a baseline for performance evaluation.
       We optimize the logistic regression with 1-norm regularization by
       the proposed algorithm with $\lambda_2 =
       0$ and 
\begin{equation}\label{logregloss}
 L(y,\mu) = y \log (1+\exp(-\mu)) + (1-y) \log (1+\exp(\mu)), \quad y
 \in \{0,1\}
\end{equation}
(or equivalently $L(y,\mu) = \log(1+\exp(- y \mu)), y \in
       \{-1,1\}$). For the loss function of \eqref{logregloss}, we have
       the derivative and the Hessian as
\[
 \frac{\partial L(y,\mu)}{\partial \mu} = \frac{1}{1+\exp(-\mu)} - y,\qquad
 \frac{\partial^2 L(y,\mu)}{\partial^2 \mu} = \frac{1}{1+\exp(-\mu)} \cdot \frac{1}{1+\exp(\mu)},
\]
respectively. 
Moreover, according to the example of \cite{Yun:2011}, we set
\[
  H(t)_{jj} := \min\left\{\max\{\nabla^2 f(\theta(t))_{jj},10^{-10}\},10^{10}\right\}
\]
and
\[
 \sigma = 0.1, c = 0.5, \gamma = 0, \alpha_{\mathrm{init}}(0) = 1,
 \alpha_{\mathrm{init}}(t) =
       \min\left\{\frac{\alpha(t-1)}{c^5},1\right\}, v(t)=0.9.
\]
Note that at each iteration, our optimization method adds multiple
       subgraph features at once to the selected feature set, which is
       different from the following two methods.
 \item \textbf{Adaboost for graphs \citep{Kudo:2005}}. This is a
       learning method by a sparse linear model over all possible
       subgraphs by adding the single best subgraph feature to
       improve the current model at each step, by using the framework of
       Adaboost. Adaboost can be asymptotically
       viewed as a greedy minimization of the exponential loss function.
 \item \textbf{LPBoost for graphs \citep{Saigo:2009}}. Adaboost adds a
       single feature at each iteration, and does not update
       the indicator coefficients already added to the current model.
       By contrast, LPBoost updates all previous coefficients at each
       iteration of adding a new single feature. This property is known as
       the totally-corrective property which can accelerate the convergence.
       LPBoost minimizes the hinge loss function with 1-norm
       regularization. Since the hinge loss is not twice
       differentiable, LPBoost is complementary to our framework.
\end{enumerate}
We emphasize that classification performance depends on the
objective function and has nothing to do with the optimization
method. Our interest is not necessarily in comparing the performance and
property of Logistic regression, Adaboost, and LPBoost, which have long
been discussed in the machine-learning community.
Rather, as we already mentioned above, we are interested in the
convergence properties, selected subgraph features, and search-tree
sizes along with adaptive feature learning from all possible subgraph
indicators.

L1-LogReg was implemented by C++ entirely from the
scratch. For Adaboost and LPBoost, we used the implementations in
C++ by \cite{Saigo:2009} which are faster than other implmentations. The
source code was obtained from author's
website\footnote{\url{http://www.bio.kyutech.ac.jp/~saigo/publications.html}}, 
and we added a few lines carefully to count the number of
visited nodes. We also set minsup=1 and maxpat=$\infty$ in the original
code, to run Adaboost and LPBoost over all possible subgraph features. 

\subsection{Systematic Binary Classification Task}\label{datagen}

Figure \ref{randgen} presents the procedure for generating two set of
graphs\footnote{In the pseudo code of Figure \ref{randgen}, the subroutine
 \texttt{MinDFSCode}$(g)$ computes a canonical representation of graph
 $g$ called the minimum DFS code (See \cite{Yan:2002} for technical
 details). }.
These two sets correspond to graph sets (i) and (ii) in Figure
\ref{matb} which are generated by a consistent probabilistic rule: each
graph in (i) includes each subgraph feature in subgraph set (A) with a
probability of $p_1$ and similarly each feature in (B) with $q_1$; Each
graph in (ii) includes each subgraph feature in (A) with a probability
of $p_2$ and each feature in (B) with $q_2$ (as summarized on the right in
Figure \ref{matb}), where a seed-graph pool of (A) and (B) are also
generated randomly.
When we set $p_1 > q_1$ and $q_2 > p_2$, 
we can assume $V$ and $W$ dominant subgraph features that can
discriminate between (i) and (ii). 
By using these two sets (i) and (ii) as positive and negative samples
respectively, we define a systematic binary classification task with
control parameters $V, W, N, M, p_1, p_2, q_1, q_2$ and the Poisson mean 
parameter. 
This procedure, which was inspired by \cite{Kuramochi:2004}, first
generates $V+W$ random graphs as $X:=\{x_1,x_2,\dots,x_{V+W}\}$ for a
seed-graph pool, and then generates $N+M$ output graphs by selecting and
combining these seed graphs in $X$ with the probabilistic rule: 
as indicated in Figure \ref{matb}, the probability that output graph $i$
contains subgraph feature $j$ or not ($I_{i,j}=1$ or $0$) is determined
block by block. 
To generate each output graph, the selected seed subgraphs are combined by
the procedure \texttt{combine} in Figure \ref{combine} which finally 
produces the output graphs (i) and (ii) having the data matrix in Figure
\ref{matb}\footnote{\cite{Kuramochi:2001} proposed
another way to combine the seed graphs to maximize the overlapped
subgraphs. In either way, the procedure in Figure \ref{randgen} produces
the data matrix as in Figure \ref{matb}.}.
The mean parameter for the Poisson distribution is set to $3$,
and the number of node and edge labels to $5$ throughout the numerical
experiments. 

\begin{figure}[h]
\centering
\includegraphics[width=\textwidth]{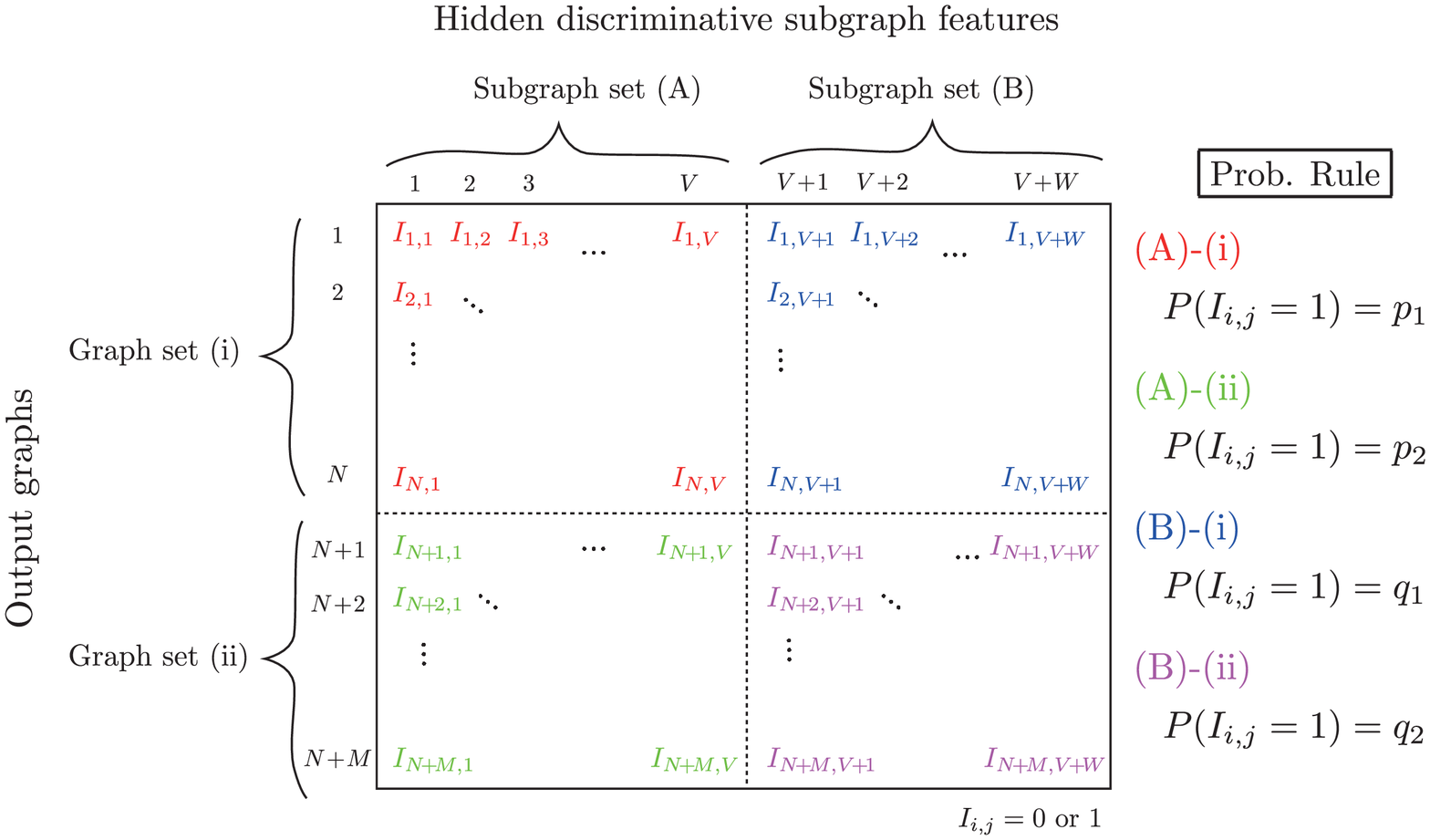}
\caption{
The data matrix: two graph sets (i) and (ii) which are generated by the
 procedure in Figure \ref{randgen}. 
These graphs are generated by combining graphs in random subsets of a
 seed-graph pool of (A) and (B) with the procedure in Figure
 \ref{combine}. Each subset is randomly generated, following the
 probabilistic rule (on the right) that is defined for each block.
} \label{matb}
\end{figure}

\begin{figure}[h]
\centering
\includegraphics[width=150mm]{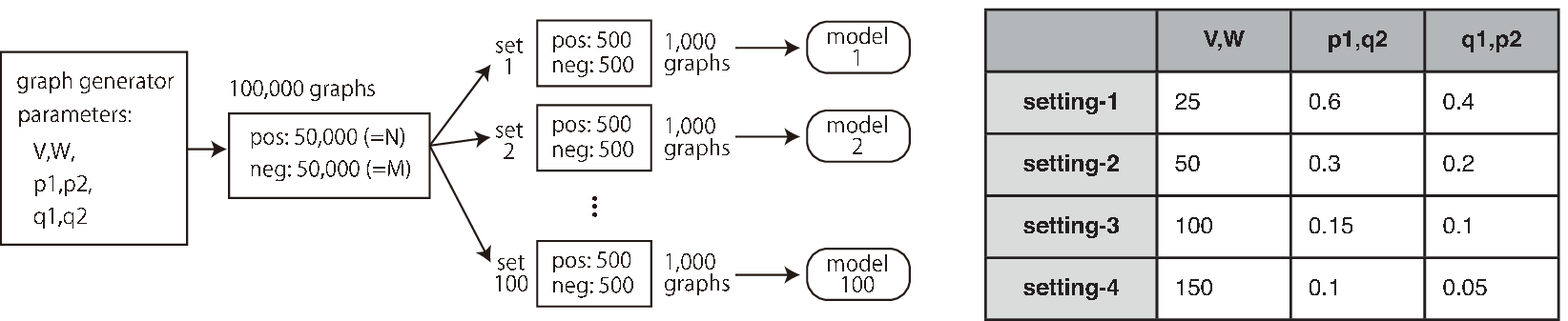}
\caption{The scheme and settings of data generation for numerical
 evaluations. We generated 100 sets of 1,000 random graphs
 consisting of 500/500 graphs for (i)/(ii) in Figure \ref{matb},
 respectively, sharing the same generative rule. 
Thus, we can examine the test error of model $i$ using set $j$ ($i \neq
 j$) in addition to the training error.} \label{datasetting}
\end{figure}

\begin{table}[h]
\caption{Statistics on the generated datasets.}\label{stat}
\centering
\begin{tabular}[t]{lccccccc}
\hline
data set & \# graphs & \multicolumn{3}{c}{\# edges} &
 \multicolumn{3}{c}{\# nodes} \\
 &  & max & min & avg & max & min & avg\\ \hline
setting-1 & 100,000 & 193 & 49 & 119.54 & 162 & 44 & 101.53 \\
setting-2 & 100,000 & 219 & 32 & 120.28 & 182 & 28 & 103.03 \\
setting-3 & 100,000 & 246 & 33 & 115.91 & 202 & 29 & 100.78 \\
setting-4 & 100,000 & 221 & 25 & 104.46 & 187 & 23 & 90.61 \\ \hline
\end{tabular}
\end{table}

\IncMargin{1em}
\begin{algorithm}
\hrule
\vspace*{0.5ex}
\SetKwInput{KwAlgo}{Algorithm}
\KwAlgo{}
\vspace*{0.5ex}
\hrule
\SetKwFor{RepeatTimes}{repeat}{do}{end}
Generate $V+W$ Poisson random numbers $a_1,a_2,\dots,a_{V+W}$ (resample
 if $a_i < 2$)\;
Initialize dictionary $d$\;
Initialize seed-graph pool $X \leftarrow \{\}$\;
\For{$i=1,2,\dots,V+W$}{
 Generate a graph $g$ with two node and an edge between them. (node and edge
 are labeled at random)\;
 \RepeatTimes{$a_i$ times}{
 Select a node $\alpha$ from $g$ at random\;
 Select one of the two choice at random (to ensure the connectivity of $g$)\;
 \Indp
 i) Add an edge from $\alpha$ to a node of $g$ that is not adjacent to
 $\alpha$\;
 ii) Add a new node $\alpha'$ to $g$ and an edge between $\alpha$ and $\alpha'$\;
 \Indm
 }
 \uIf{$\mathtt{MinDFSCode}(g)$ in $d$} 
 {continue\;}
 \Else{
 Register $\mathtt{MinDFSCode}(g)$ to $d$\;
 $X \leftarrow X \cup \{g\}$\;
 }
 }
 Set the first $V$ graphs in $X$ as $A$, and the last $W$ graphs in $X$ as $B$\;
 \RepeatTimes{$N$ times to generate $N$ positive samples}{
 Select $S_V \subseteq A$ with $P(x_i \in S_V)=p_1$ for
 $x_i \in A,i=1,2,\dots,V$\; 
 Select $S_W \subseteq B$ with $P(x_i \in S_W)=q_1$ for $x_i\in B,i=1,2,\dots,W$\; 
 $g \leftarrow$\texttt{combine}($S_V \cup S_W$)\;
 Add $g$ to a set of positive examples\;
 }
 \RepeatTimes{$M$ times to generate $M$ negative samples}{
 Select $S_V \subseteq A$ with $P(x_i \in S_V)=p_2$ for
 $x_i \in A,i=1,2,\dots,V$\; 
 Select $S_W \subseteq B$ with $P(x_i \in S_W)=q_2$ for $x_i\in B,i=1,2,\dots,W$\; 
 $g \leftarrow$\texttt{combine}($S_V \cup S_W$)\;
 Add $g$ to a set of negative examples\;
 }
\vspace*{0.5ex}
\hrule
\caption{The procedure for class-labeled random graph generation.
}\label{randgen}
\end{algorithm}\DecMargin{1em}

\begin{algorithm}\DecMargin{1em}
\hrule
\vspace*{0.5ex}
\SetKwInput{KwAlgo}{Function}
\KwAlgo{\texttt{combine}($x_1,x_2,x_3,\dots$)}
\vspace*{0.5ex}
\hrule
\vspace*{0.5ex}
 $g' \leftarrow x_1$\;
 \ForEach{$g_i$ in $x_2,x_3,\dots$}{
   Select a node of $g'$ and a node of $g_i$ at random\;
   Add an edge between them, and replace $g'$ by the combined graph\;
 }
 \Return{$g'$}
\vspace*{0.5ex}
\hrule
\caption{The algorithm for combining several graphs into a single
 connected graph.}\label{combine}
\end{algorithm}\DecMargin{1em}

\subsection{Evaluating Learning Curves}

We first investigate the convergence property by the learning curves of
the three methods on the same dataset.
Figure~\ref{datasetting} shows a schematic manner of generating an
evaluation dataset by using the random graph generator in Figure \ref{randgen}.
First we generate 100,000 graphs with the fixed parameters of
$p_1,p_2,q_1$, and $q_2$, 
and then divide them into 100 sets, each containing 1,000 graphs (500
positives and 500 negatives). 
Out of each set of 1,000 graphs, we train the model by
either of Adaboost, LPBoost, and 1-norm penalized logistic regression
(denoted by L1-LogReg hereafter). 
We estimate the expected \textit{training error}
by computing each
training error of model $i$ with data set $i$ that is used
to train the model, and averaging over those 100 values obtained from
100 sets. 
Moreover, since all 100 sets share the same probabilistic rule behind
its generation, we can also estimate the expected \textit{test error} by
first randomly choosing 100 pairs of set $i$ and model $j$ ($i \neq j$),
and computing the test error of the model $i$ with data set $j$ that is
not used to train the model, and averaging over those 100 values
obtained from 100 pairs. We use the same fixed 100 
pairs for evaluating all three methods of Adaboost, LPBoost, and
L1-LogReg. Since we have the intermediate model at each iteration, we
can obtain the training and test error at each iteration, which gives us
the learning curves of the three methods regarding the training error as
well as the test error. We use four different settings for control
parameters, which are shown in Figure \ref{datasetting}.
Table \ref{stat} shows the statistics on the datasets generated by these
four settings.
It should be noted that when we draw an averaged learning curve, we need
to perform $100$ trainings and $100$ testings for $1,000$ graphs
\textit{at each iteration}. 
For example, the learning curves of $150$ iterations requires $150 \times
100$ trainings and $150 \times 100$ testings for $1,000$ graphs.

First, for each setting of the data generation in
Figure~\ref{datasetting}, the estimated learning curves are shown in
Figures \ref{result1_1} (setting-1), \ref{result1_2} (setting-2),
\ref{result1_3} (setting-3), and \ref{result1_4} (setting-4). On the
left three panels in each figure, we show 100 learning curves from each
of 100 sets, and their averaged curves (in red color). On the right-most
panel, we show only three averaged curves for comparison.

From these figures, we can first see that the convergence rate was clearly
improved by the proposed algorithm compared to Adaboost and
LPBoost. 
Also we can see that LPBoost accelerated the convergence rate of
Adaboost by totally corrective updates. The convergence behavior of
LPBoost was unstable at the beginning of iterations, which was already
pointed out in the literature 
\citep{Warmuth:2007,Warmuth:2008}, whereas Adaboost and the proposed
method were more stable. LPBoost however often achieved slightly higher
accuracy than L1-LogReg and Adaboost, implying that the hinge loss
function (LPBoost) fits better to the task compared to the logistic
loss (L1-LogReg) or the exponential loss (Adaboost). 

Second, Figures \ref{result2} (setting-1), \ref{result3} (setting-2),
\ref{result4} (setting-3), and \ref{result5} (setting-4) show the
estimated learning curves (averaged over 100 sets) for different
parameters of each learning model. 
Adaboost has only one parameter for the number of iterations, and the
difference in settings did not affect the learning curves, while LPBoost
and L1-LogReg have a parameter for regularization. The parameter of
L1-LogReg affects the training error and test error at convergence, but
in any cases, the result showed more stable and faster 
convergence than the other two methods. Also we could see that
the parameter of LPBoost also affected the instability at the beginning of
iterations. Regarding performance, LPBoost was in many cases slightly
better than the other two, but the error rate at the best parameter
tuning was almost the same in practice for this task.

\subsection{Evaluating the difference in selected subgraph features}\label{featcomp}

Third, we compared the resultant selected subgraph features at
convergence of the three models. To make this evaluation as fair as
possible, we used setting-2 with the parameter of 325 for Adaboost, 0.335
for LPBoost, and 0.008 for L1-LogReg, which were all carefully adjusted
so as to have almost the same number of non-redundant selected features. 
As shown in Table \ref{table1}, we had about
240 features on average, and we also observed that the three methods gave
mostly the same test error of around 0.17. We could see that the original
dataset of setting-2 was generated by combining 100 small graphs in the
seed-graph pool ($V+W=100$), but the number of learned features in Table
\ref{table1} exceeded the number of features to generate the data. This
is because the presence of any graph can be confirmed by the presence of
smaller co-occurring subgraphs that would be contained in unseen samples
with high probability. For the setting of 
Table \ref{table1}, Figure \ref{result9} shows the number of features
with nonzero coefficients at each iteration (averaged over 100 data
sets), all of which converged to around 240 features. 
The number of features of L1-LogReg
increased quickly at the beginning of iterations and then slightly dropped
to the final number. In addition, the variance of the number was larger
than Adaboost and LPBoost. On the other hand, the number of features of
Adaboost and LPBoost increased almost linearly. Note that we count
only non-redundant features by unifying the identical features that are
added at different iterations. We also ignore features with zero
coefficients.

Next, we checked the size of individual subgraph features, which is the
number of edges of each subgraph feature. In Figure \ref{result6}, we
show the size distribution of subgraph features in the seed-graph pool
in the left-most panel that is used to generate the data, and those of
Adaboost, LPBoost, and L1-LogReg to the right. Interestingly, even
though the number of features ($\approx 240$) and the performance of
the three methods ($\approx 0.17$) were almost similar under the setting
of Table \ref{table1}, the selected set of subgraph features was quite 
different. First we can conclude that these learning algorithms do not directly
choose the discriminative subgraph features. Compared to the original
subgraphs stored in the seed-graph pool (up to size 7), all three
methods chose much smaller subgraphs and tried to represent the data by
combining those small subgraphs. In particular, Adaboost and LPBoost
focused on selecting the subgraph features, where their size was less than
and equal to three, mostly the subgraph feature of size two (graphs with
three nodes and two edges). By contrast, L1-LogReg had more balanced size
distribution, and the most frequent subgraph features were of size
three. Figure \ref{result7} shows the number of overlapped subgraph
features between different methods (averaged over 100
sets). Figure~\ref{result6} might give a misleading impression that the
selected features of Adaboost and LPBoost would be similar when the
obtained number of features is similar. Yet they are remarkably
different as we see in Figure \ref{result7}. By looking more closely, 
LPBoost and Adaboost shared a much larger number of features than those
between either of them and L1-LogReg. 
For example, out of 240, 71.8 + 64.42 = 136.22 features are shared
between Adaboost and LPBoost, whereas 91.71 and 99.15 features 
are shared between L1-LogReg and Adaboost, and between L1-LogReg and
LPBoost, respectively.
This implies that L1-LogReg captured a more unique and different set of
subgraph features compared to the other two, as already implied by the
size distribution of Figure \ref{result6}, while as shown in Figure
\ref{result7}, the overlap between the subgraph features obtained by
L1-LogReg and the seed-graph pool was larger than that by Adaboost and
LPBoost.
This suggests that L1-LogReg optimized by our method captured subgraph
features which are relevant to given training data, which would be more
advantageous when we try to interpret the given data. 

\subsection{Evaluating the difference in search space size}

Lastly, Figure \ref{result8} shows the size of search space in the three
methods, i.e. the number of visited nodes in the enumeration tree,
$\mathcal{T}_{\mathrm{visited}}(\mathcal{G}_n)$. Adaboost and LPBoost
start with relatively shallower traversal at the beginning of
iterations, and then search deeper as the iteration proceeds. On the
other hand, the proposed method starts with deep search, and then updates
only a few necessary coefficients by further narrowing search at the
subsequent iterations. Thus, intensive search of necessary features was
done within the beginning of iterations, and the the number of nonzero
coefficients that need to be updated was small at the later iterations. 

\clearpage

\begin{figure}[t]
\centering
\includegraphics[width=\textwidth]{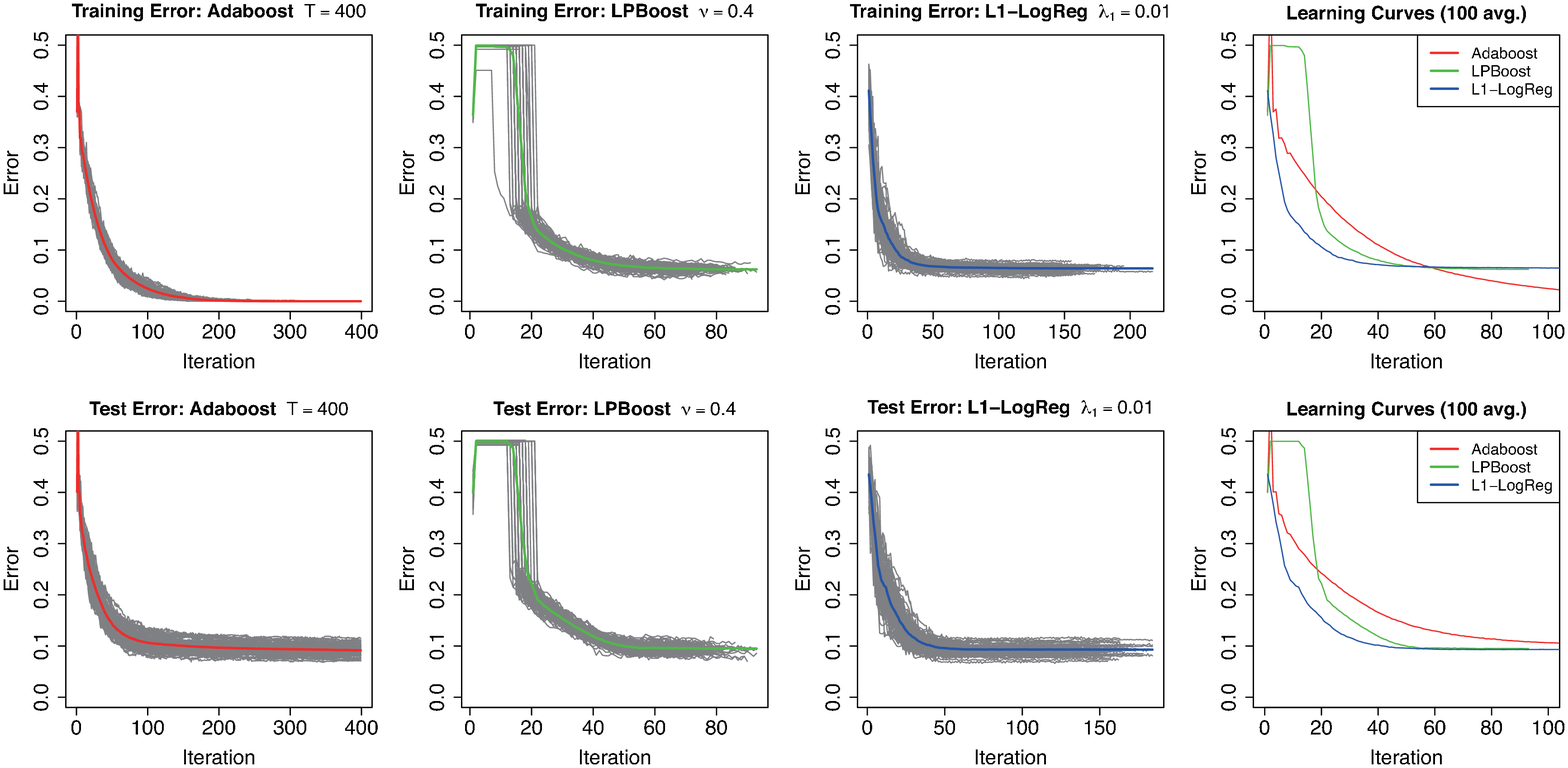}
\caption{Learning curves for setting-1 (average over 100 trials).} \label{result1_1}
\end{figure}
\begin{figure}[t]
\centering
\includegraphics[width=\textwidth]{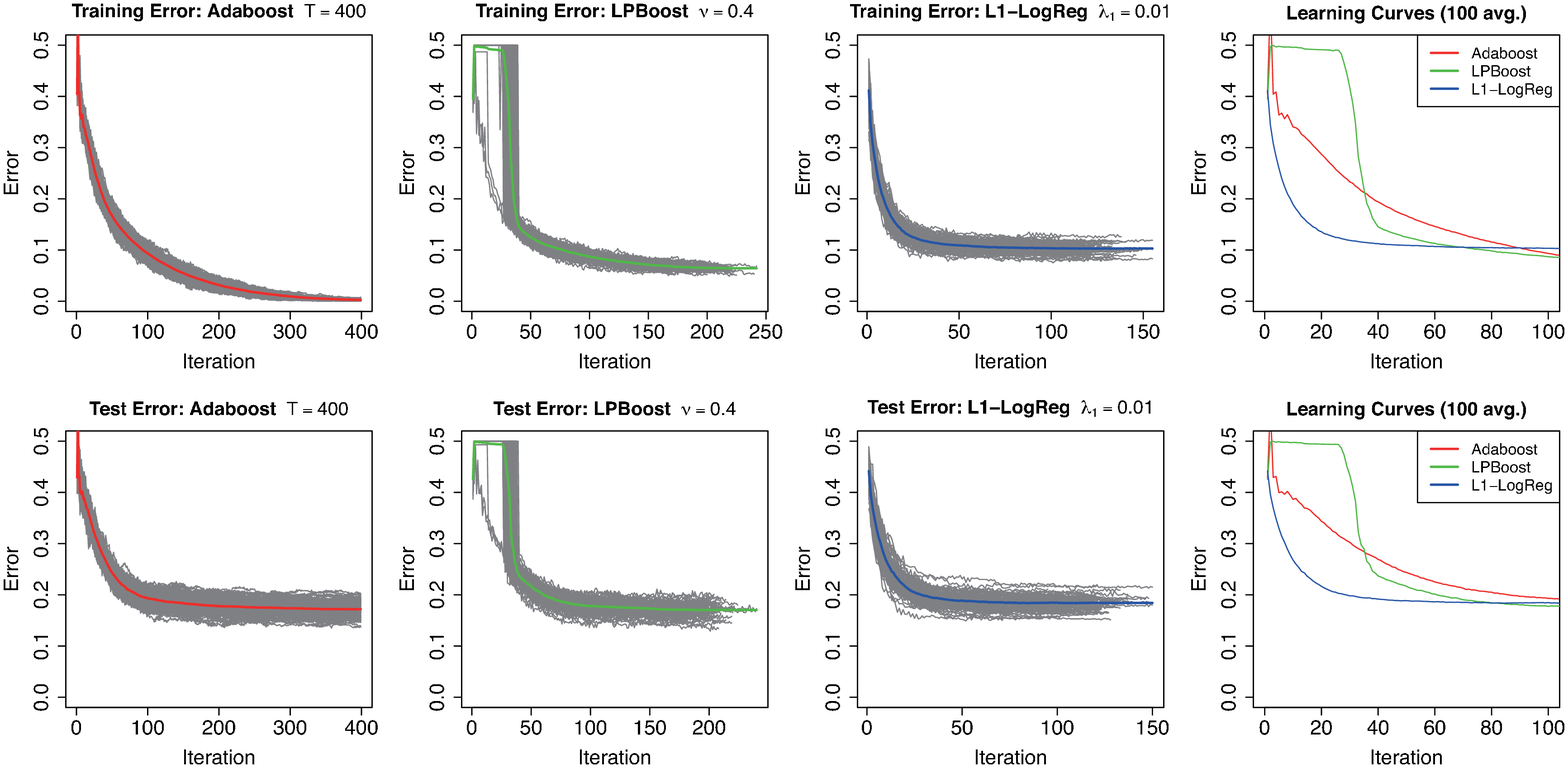}
\caption{Learning curves for setting-2 (average over 100 trials).} \label{result1_2}
\end{figure}
\begin{figure}[t]
\centering
\includegraphics[width=\textwidth]{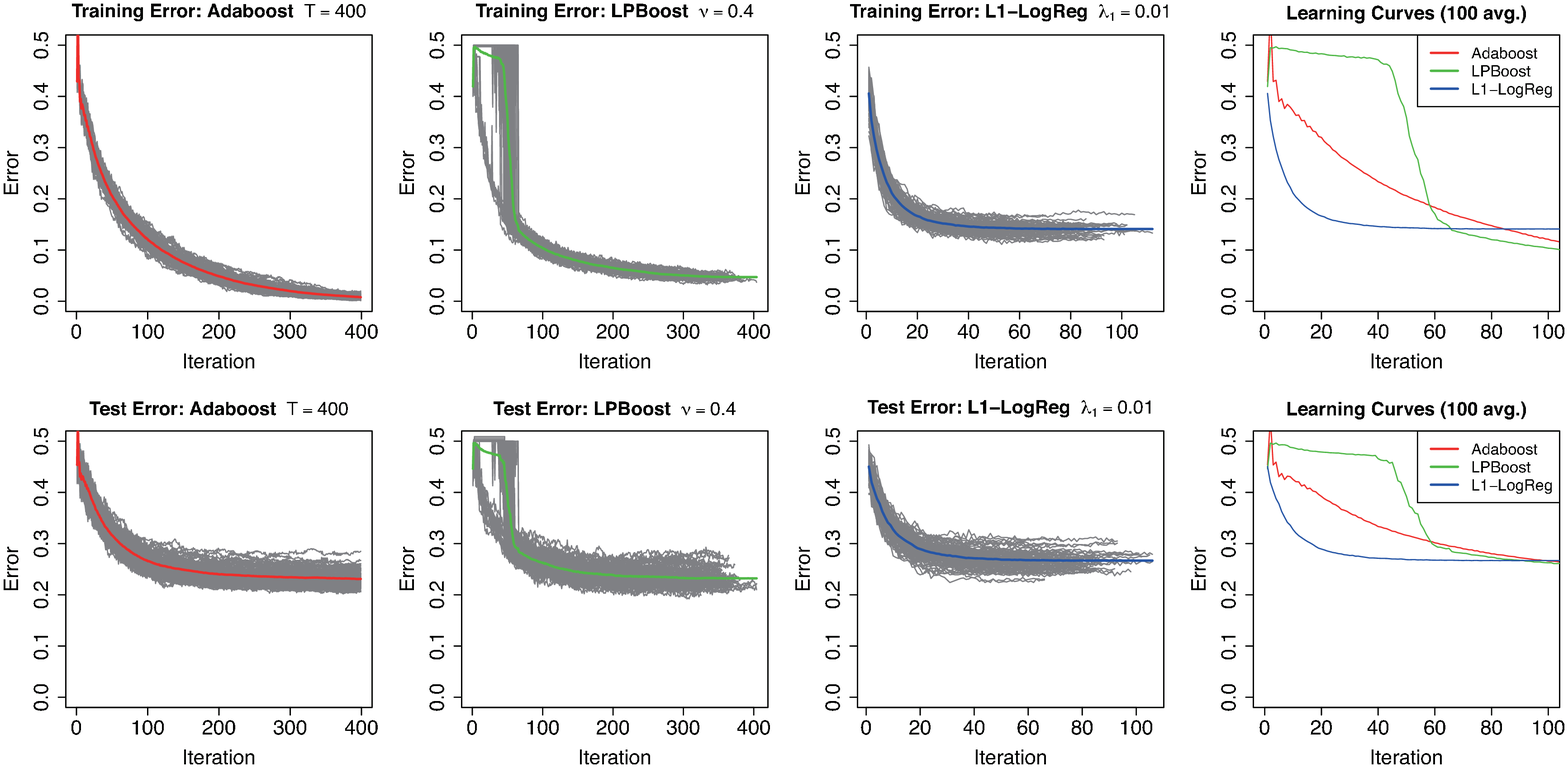}
\caption{Learning curves for setting-3 (average over 100 trials).} \label{result1_3}
\end{figure}
\begin{figure}[t]
\centering
\includegraphics[width=\textwidth]{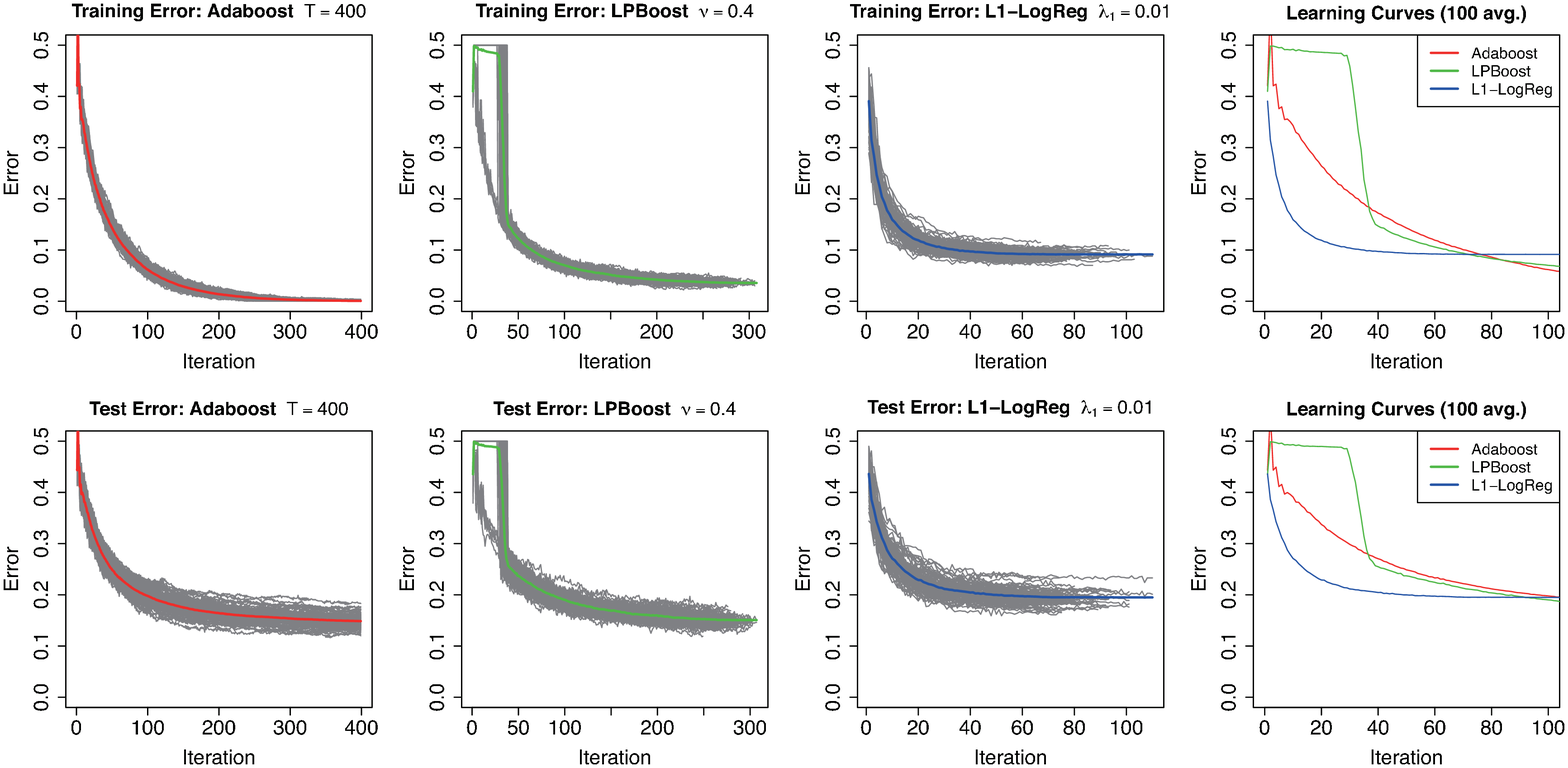}
\caption{Learning curves for setting-4 (average over 100 trials)).} \label{result1_4}
\end{figure}

\clearpage

\begin{figure}[h]
\centering
\vspace*{-2em}
\includegraphics[height=0.42\textheight]{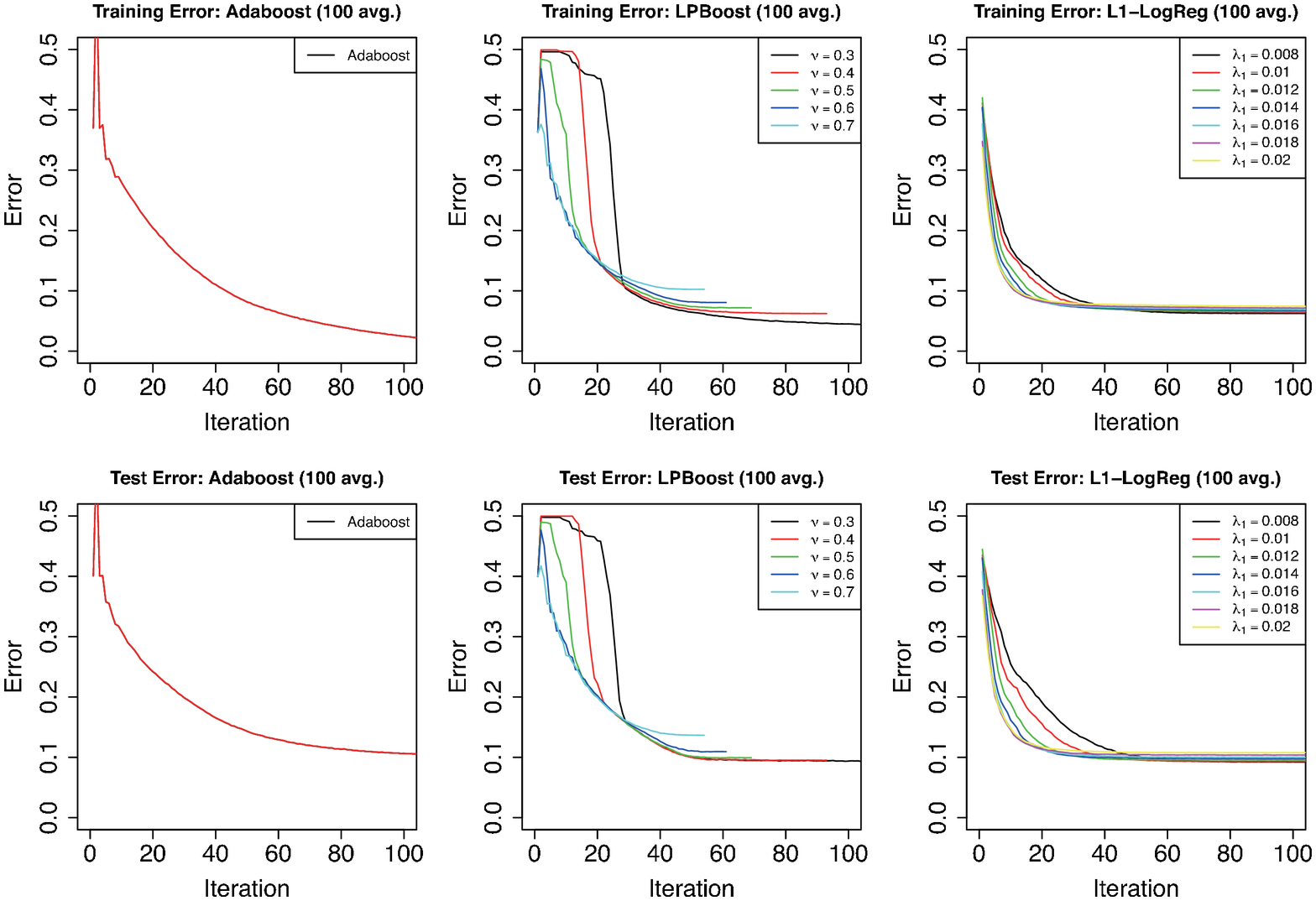}
\caption{Average learning curves for different parameters
 (setting-1).} \label{result2}
\end{figure}
\begin{figure}[h]
\centering
\includegraphics[height=0.42\textheight]{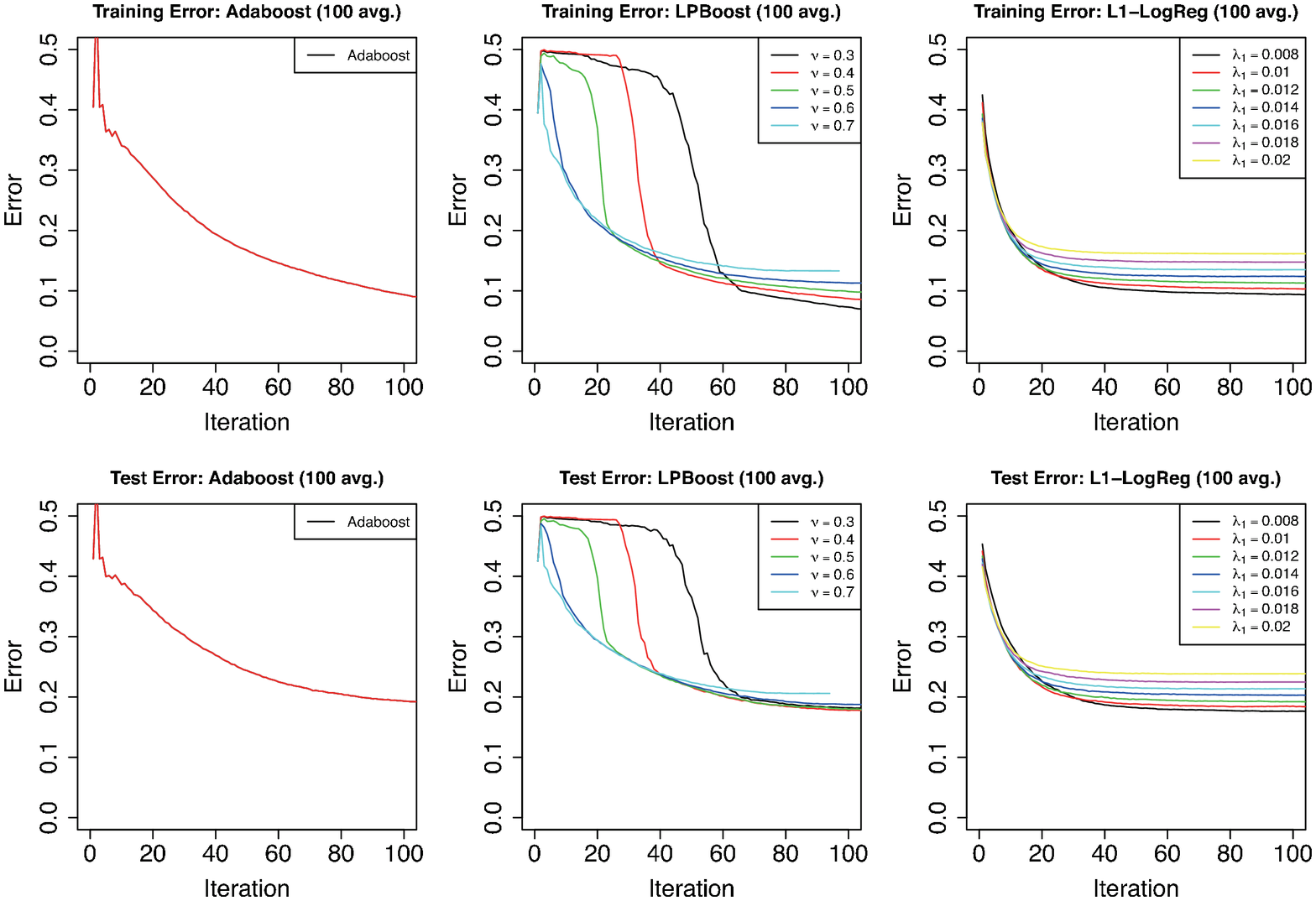}
\caption{Average learning curves for different parameters
 (setting-2).} \label{result3}
\end{figure}

\begin{figure}[h]
\vspace*{-2em}
\centering
\includegraphics[height=0.42\textheight]{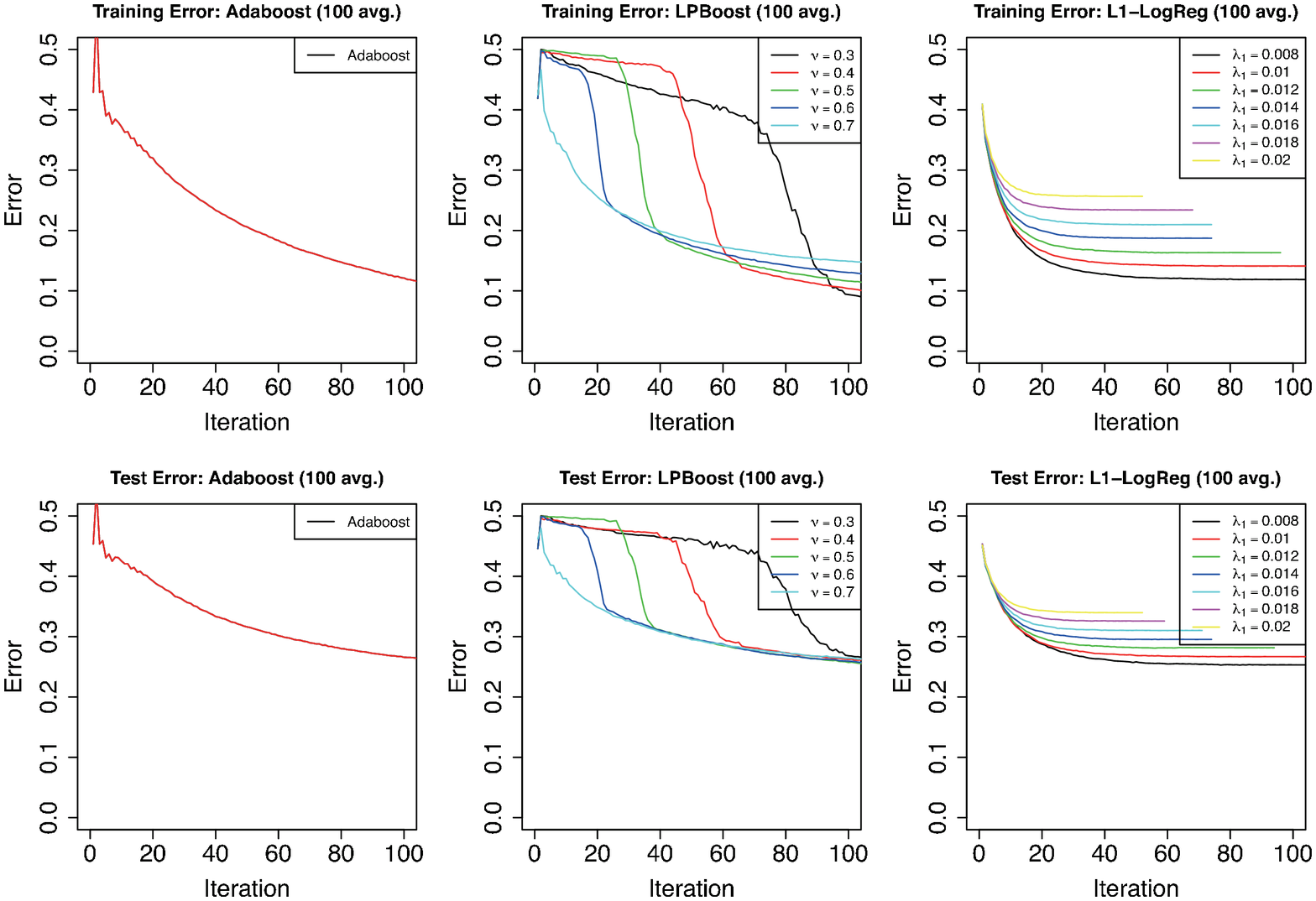}
\caption{Average learning curves for different parameters
 (setting-3).} \label{result4}
\end{figure}
\begin{figure}[h]
\centering
\includegraphics[height=0.42\textheight]{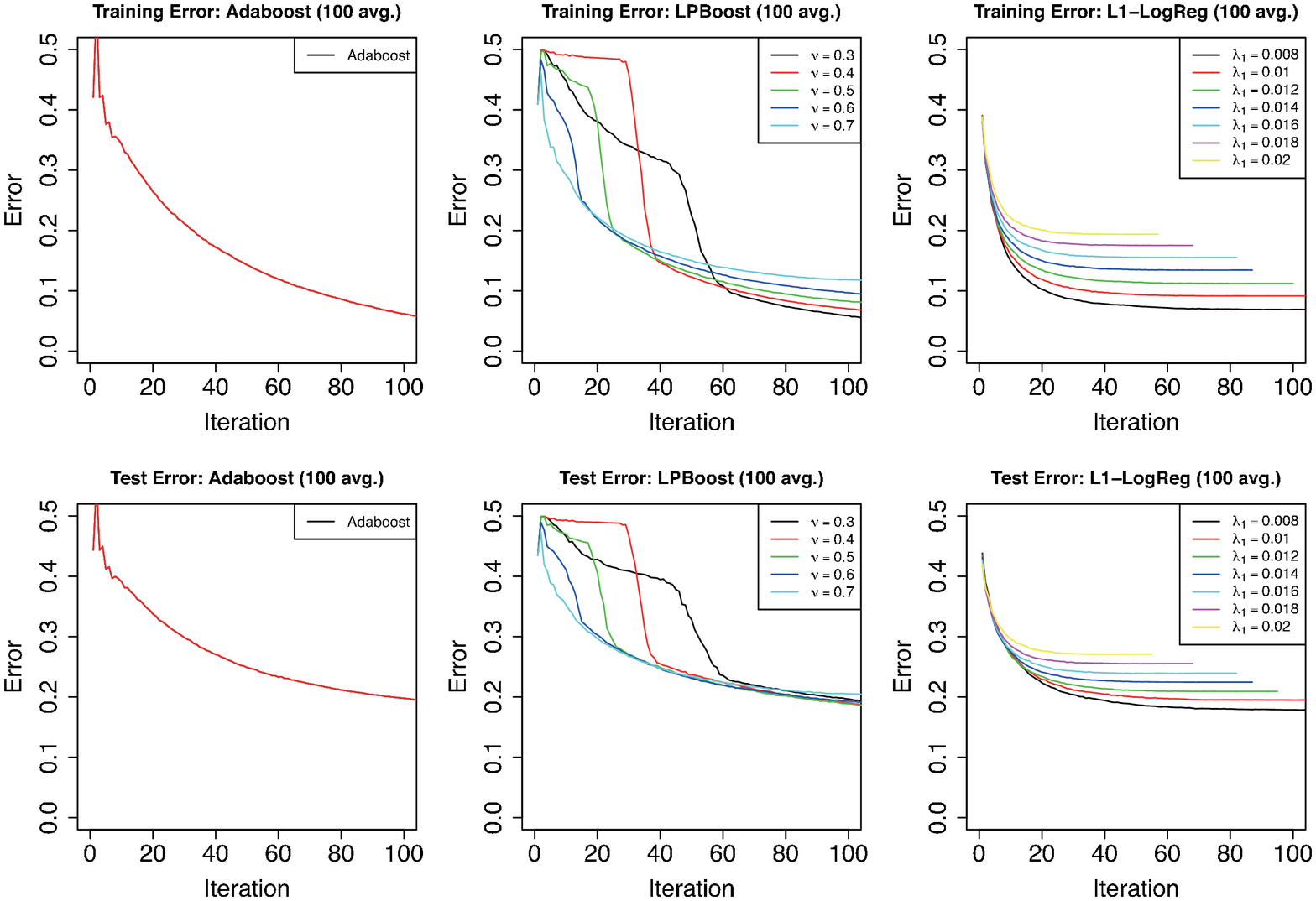}
\caption{Average learning curves for different parameters
 (setting-4).} \label{result5}
\end{figure}

\clearpage

\begin{table}[h]
\caption{Statistics for setting-2 at convergence (average over 100 trials).}\label{table1}
\centering
\begin{tabular}[t]{llllll}
\hline
method & param & \# feat & \# iter & \multicolumn{2}{c}{error} \\
 &  & &  & \multicolumn{1}{c}{(train)} & \multicolumn{1}{c}{(test)}\\ \hline

Adaboost & 325 & 239.94 $\pm$ 8.50  & 325 & 0.0068 & 0.1736 \\
LPBoost & 0.335 &  239.69 $\pm$ 21.80 & 275.91 & 0.0405 &  0.1704 \\
L1-LogReg & 0.008 & 239.36 $\pm$ 29.16  & 122.52 & 0.0931 & 0.1758 \\
 \hline
 &  & $\approx$ 240 & &  & $\approx$ 0.17
\end{tabular}
\end{table}

\begin{figure}[h]
\centering
\includegraphics[width=\textwidth]{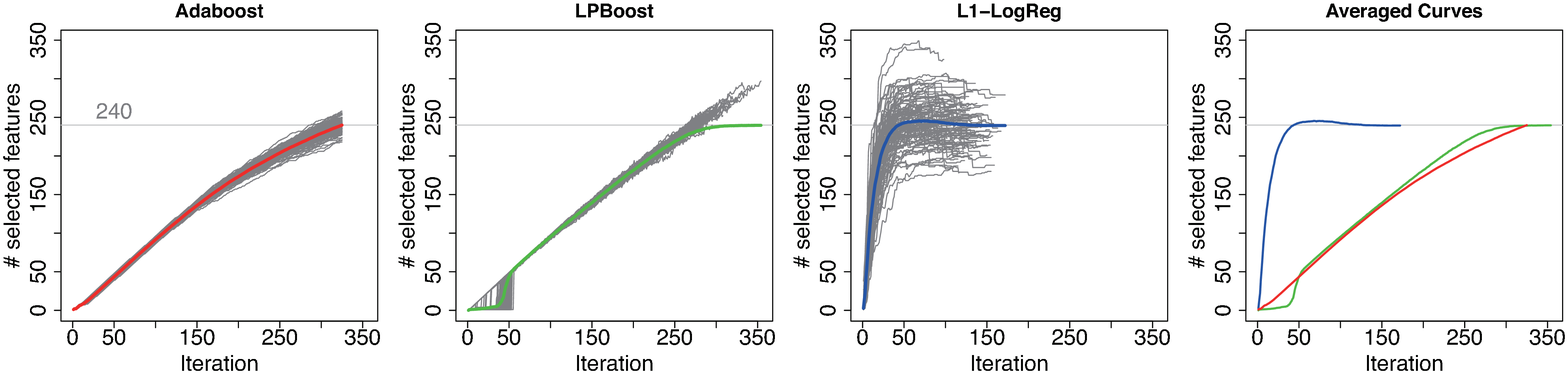}
\caption{The number of unique subgraph features along the iterations.} \label{result9}
\end{figure}

\begin{figure}[h]
\centering
\includegraphics[width=\textwidth]{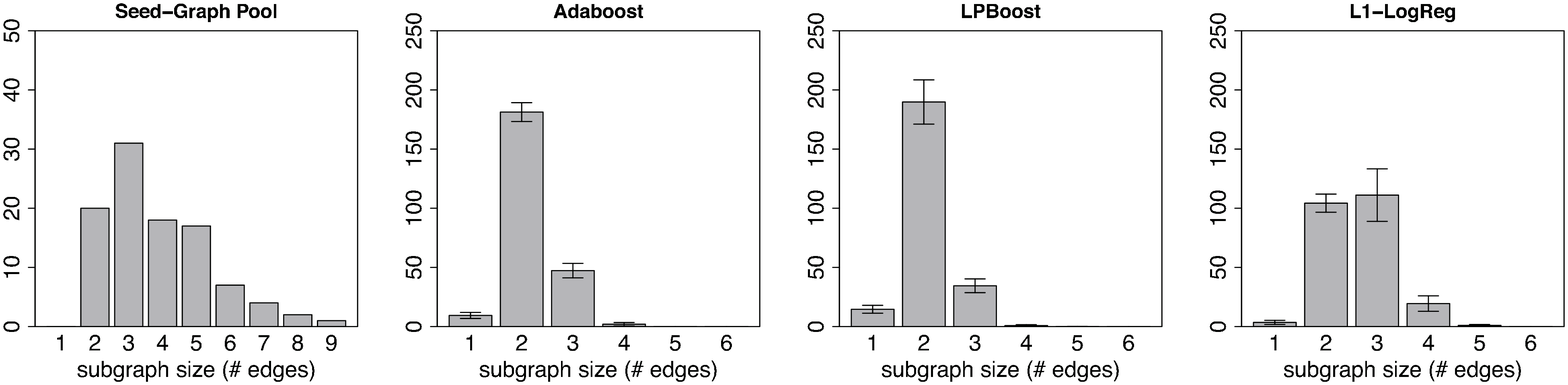}
\caption{Distribution of the size of non-redundant subgraph features (average over 100 trials).} \label{result6}
\end{figure}

\begin{figure}[h]
\centering
\includegraphics[width=120mm]{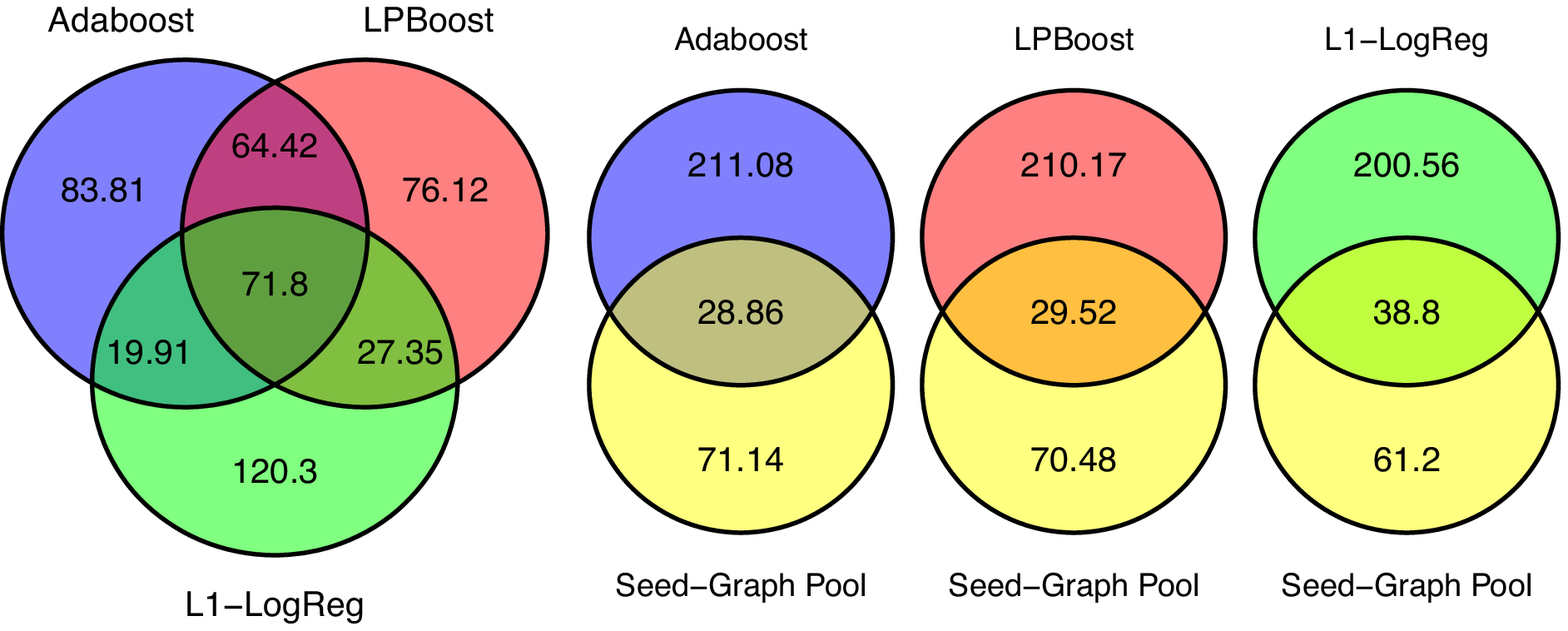}
\caption{The number of isomorphic subgraph features (average over 100 trials).} \label{result7}
\end{figure}

\begin{figure}[h]
\centering
\includegraphics[width=\textwidth]{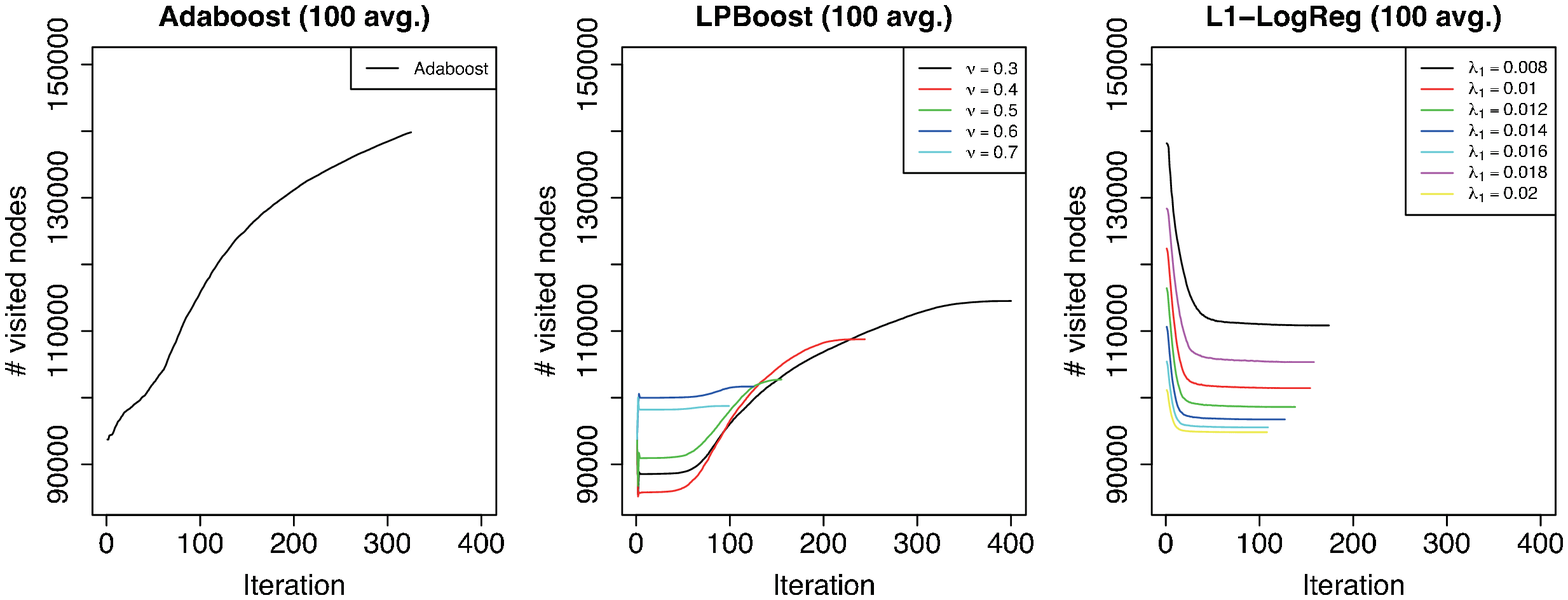}
\caption{The number of visited nodes in the enumeration tree.} \label{result8}
\end{figure}

\begin{figure}[h]
\centering
\includegraphics[width=120mm]{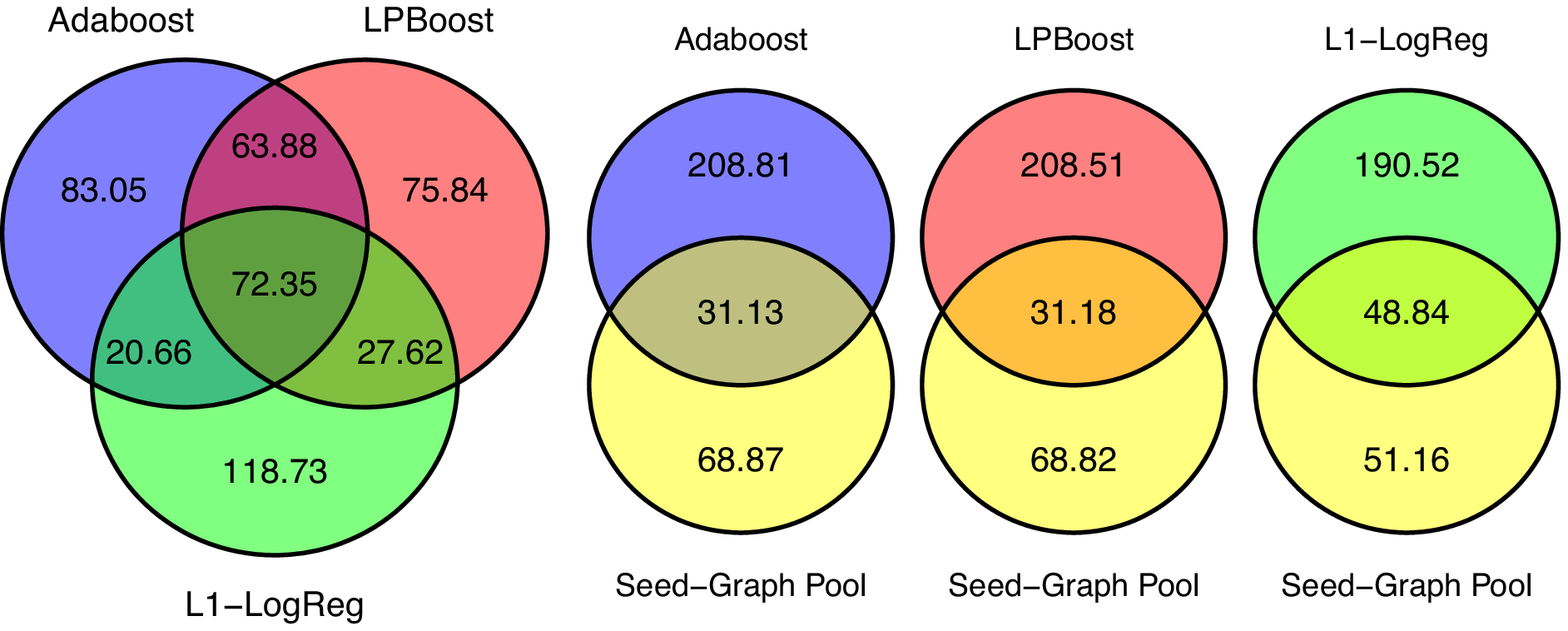}
\caption{The number of overlapping subgraph features by the
 equivalence classes (average over 100 trials).} \label{result10}
\end{figure}

\clearpage

\begin{table}[h]
\caption{Classification accuracy for the CPDB dataset (10-fold CV).}\label{real1}
\centering
\begin{tabular}[t]{cccccr}
\hline
method & & param & \multicolumn{2}{c}{ACC} & \multicolumn{1}{c}{\# feat} \\ 
   & &  & (train) & (test) &  \\ \hline
L1-LogReg& & 0.005 & 0.813 & 0.774 & 80.3 \\
Adaboost& & 500 & 0.945 & 0.772 & 180.3 \\ 
LPBoost && 0.4 & 0.895 & 0.784 & 101.1 \\ \hline
glmnet & FP2 & 0.02 & 0.828 & 0.739 & 1024 \\
(L1-LogReg) & FP3 & 0.03 & 0.676 & 0.628 & 64\\
  & FP4 & 0.02 & 0.785 & 0.721 & 512 \\
 & MACCS & 0.01 & 0.839 & 0.771 & 256 \\ \hline
\end{tabular}
\end{table}

\begin{table}[h]
\caption{Performance and search-space size for the CPDB dataset (10-fold CV).}\label{real2}
\centering
\footnotesize
\begin{tabular}[t]{lcccrrrrrcc}
\hline
method & param & \multicolumn{2}{c}{ACC} & \multicolumn{1}{c}{\# feat} & \multicolumn{1}{c}{\# iter} & \multicolumn{1}{c}{time} & 
\multicolumn{2}{c}{\# visited} & \multicolumn{2}{c}{\% skipped} \\
 &  & (train) & (test) &  & & \multicolumn{1}{c}{(sec)} & \multicolumn{1}{c}{avg} & \multicolumn{1}{c}{max} & \multicolumn{1}{c}{avg} & \multicolumn{1}{c}{max} \\ \hline
L1-LogReg & 0.004 & 0.825 & 0.755 & 95.0 & 57.5 & 8918.94 & 54250.2 & 415079.5 & 96.17\% & 96.22\% \\
          & 0.005 & 0.813 & \textbf{0.774} & 80.3 & 62.3 & 2078.88 & 31472.4 & 128906.7 & 90.40\% & 90.57\%\\
          & 0.006 & 0.803 & 0.762 & 67.6 & 66.6 & 1012.93 & 23354.2 & 85428.7  & 85.74\% & 85.88\% \\
          & 0.008 & 0.779 & 0.750 & 50.9 & 50.6 &  277.69 & 16587.7 & 46213.1  & 76.78\% & 76.82\% \\
          & 0.010 & 0.756 & 0.733 & 39.1 & 46.4 &   97.29 & 12524.3 &
				 26371.4  & 69.95\% & 70.02\% \\ \hline
Adaboost & 600 & 0.950 & 0.769 & 190.0 & 600.0 & 40.67 & 67907.2 & 108425.0 &  &  \\
         & 500 & 0.945 & \textbf{0.772} & 180.3 & 500.0 & 36.21 & 61643.9 & 107421.7 &   &  \\
         & 400 & 0.938 & 0.769 & 168.8 & 400.0 & 29.38 & 53657.1 &
				 96412.1 &   &  \\ \cline{1-9}
LPBoost & 0.3 & 0.939 & 0.748 & 141.4 & 185.9 & 16.02 & 19977.8 & 41921.5 &   &  \\
         & 0.4 & 0.895 & \textbf{0.784} & 101.1 & 126.4 &  7.76 & 15431.7 & 28905.9 &   &  \\
         & 0.5 & 0.858 & 0.767 &  67.1 &  80.2 &  4.39 & 12242.6 &
				 22290.1 &   &  \\ \cline{1-9}
\end{tabular}
\end{table}

\clearpage

\section{Discussion}

In this section, we point out and discuss some new issues about
learning from graphs that we have noticed during developing the proposed
framework. 

\subsection{Bias towards selecting small subgraph features}

As we see in Figure \ref{result9}, all three models of Adaboost,
LPBoost, and L1-LogReg tend to choose smaller subgraph features.
The generated graphs by the algorithm of Figure \ref{randgen} are based
on combining the graphs in a random seed-graph pool. The size of graphs
in the seed-graph 
pool follows a Poisson distribution, which is shown in the histogram
in Figure \ref{result9}, which shows the distributions of subgraphs
captured by the three methods as well.
The distributions by the three methods show that small subgraphs were
captured, because large subgraph features can be identified by checking
multiple smaller subgraphs, and moreover small subgraphs would
occur in the data with higher probability than larger ones. 
Yet there is still remarkable difference in characteristics
between the subgraph features of Adaboost, LPBoost, and
L1-LogReg even when the performance are similar as in Table
\ref{table1}. 
That is, a closer look shows that Adaboost and LPBoost choose subgraphs
of size two rather than size three, and L1-LogReg choose subgraphs of
size both two and three.
This means that comparing to a random seed-graph pool, L1-LogReg comes
with more balanced size of subgraph features, whereas Adaboost and
LPBoost mostly choose the subgraphs of size two only. 
Also note that the features of Adaboost and LPBoost are similar in size
(Figure \ref{result6}), but they are still very different (Figure
\ref{result7}). This finding would suggest that we need to be careful
about the result when we interpret the learned subgraphs in real
applications. 

\subsection{High correlation and equivalence class by perfect
  multicollinearity of subgraph indicators}

The design matrix of Figure \ref{fig:feat_table} may have high
correlations between column vectors (or regressors).  When subgraph
features $x_i$ and $x_j$ are very similar, (for example, only one or two
edges are different between them), the corresponding subgraph indicators
$I(x_i\subseteq g)$ and $I(x_j \subseteq g)$ would take very similar
values. Moreover, the number of samples are generally far less than the
number of possible subgraph features. Therefore, we can have many
exactly identical column vectors that correspond to different subgraph
features, which causes \textit{perfect multicollinearity}.

In most practical cases, we have a particular set of subgraph features,
say $X$, such that any $x \in X$ has the same Boolean vector of
$I_{\mathcal{G}_n}(x)$. 
These subgraph features form an equivalence class
\[
 [x] := \{x'\in \mathcal{X}(\mathcal{G}_n) \mid 
I_{\mathcal{G}_n}(x') = I_{\mathcal{G}_n}(x)
\},
\]
where any
representative subgraph feature $x$ has the same $I_{\mathcal{G}_n}(x)$.
Note that two graphs with very different structures can be in the same
equivalence class if they perfectly co-occur (For example,
disconnected-subgraph patterns).

Therefore, for given $x_i, x_j \in [x]$, we cannot distinguish if either
of two subgraph indicators is better than the other, just by using
$\mathcal{G}_n$. 
A heuristics is that the smallest subgraph in $[x]$ might be
good as the representative subgraph, because smaller subgraphs are expected
to occur in unseen graphs with a higher probability than that of larger ones.
This point should be carefully treated when we are interested in
interpreting the selected set of subgraph features for the application
purpose. We can always have many other candidates in the equivalence
class that has exactly the same effect as a subgraph feature. 

The existing methods such as Adaboost and LPBoost add a single \text{best}
subgraph feature at each iteration with branch and bound, and thus
usually ignore this problem, because they just take the first found
best subgraph. 
On the other hand, the proposed method allows to add multiple subgraph
features at each iteration. Hence we need to handle this point
explicitly. In this paper, for easy comparisons, we just take the first
found subgraph $x$ (following the other methods) and ignore the
subsequently found subgraph in the same equivalence class $[x]$ by a
hashing based on $I_{\mathcal{G}_n}(x)$. This is simply because in this
paper we are
interested in the number of unique subgraphs. However we can
output the complete set of the equivalence classes and analyze them when we
are interested in interpreting the selected subgraphs. In addition, when we
are interested in prediction performance only, we can instead use 2-norm
regularization in our framework of Problem \ref{prob1} to average over
highly correlated features, although this method may increase the number
of selected subgraph features.

Lastly, it should be noted that this equivalence class problem, being
caused by perfect multicollinearity, might cast a question regarding the
validity of comparing feature sets by graph isomorphism as was done in
Section \ref{featcomp} and Figure \ref{result7}.  
We conducted the same experiments by replacing
the isomorphism test with an equivalence-class test based on the
identity of $I_{\mathcal{G}_n}(x)$. The result is shown
in Figure \ref{result10} and, from the comparison between Figures
\ref{result7} and \ref{result10}, we can confirm that our discussion in
Section \ref{featcomp} would be still valid even when we take the
equivalence class into account. 

\subsection{Performance on real dataset: The CPDB dataset}

Despite the faster convergence in theory, our straightforward
implementation of the algorithm of Figure \ref{algo} can be slow
or even hard to compute for real-world datasets, at least in the current 
form. Thus future work includes improving the practical running time by
pursuing more efficient algorithms and data structures.

In order to discuss this point, we show the performance result for a
real dataset. We use the mutagenicity data for carcinogenic potency database
(CPDB) compounds \citep{CPDB:2004}, which consists of 684 graphs
(mutagens: 341, nonmutagens: 343) for a binary classification task. This
dataset has been widely used for benchmarking (See, for example,
\cite{Saigo:2009}). Table \ref{real1} shows the comparison result of the
classification accuracy (ACC) and the number of obtained subgraph
features to attain that accuracy. We included a standard method in
chemoinformatics as a baseline, in which we first computed the
fingerprint bit vector and then applied standard 1-norm penalized
logistic regression optimized by glmnet~\citep{Friedman:2010} to the
fingerprint. For fingerprints, we used 
four different fingerprints, FP1, FP2, FP3, and MACCS, generated by
Open Babel\footnote{Open Babel v2.3.0 documentation: Molecular
fingerprints and similarity searching. \url{http://openbabel.org/docs/dev/Features/Fingerprints.html}}. In terms of classification performance,
L1-LogReg achieved similar accuracy to the best one by Adaboost and
LPBoost with a much smaller number of subgraph features, whereas our
method examined a larger search space than Adaboost and 
LPBoost as we see in Table \ref{real2}. 
Table \ref{real2} shows the detailed statistics behind Table
\ref{real1}, where ``time (sec)'', ``\# visited'' and ``\% skipped''
denote the CPU time\footnote{The CPU time is measured in a workstation
with 2 $\times$ 2.93 GHz 6-Core Intel Xeon CPUs and 64 GB Memory.} in
seconds, the number of visited nodes in the enumeration tree, and the
percentage of skipped nodes due to perfect multicollinearity,
respectively.
Note that this CPU time cannot be a consistent measure, because the
implementations of Adaboost and LPBoost maintain (or \textit{cache}) the 
whole enumeration tree in the main memory (See Section 2.2 in
\cite{Saigo:2009}), whereas we did not use such implementation
techniques. 
In this light the number of visited nodes would be the only measure to
methodologically compare methods in Table \ref{real2} with each other.

When we focus on our method only, from the results of ``time
(sec)'', we can find that we need to decrease the regularization
parameter $\lambda_1$ to a very small value, which would directly loosen
the pruning condition of Theorem \ref{cond2}.
As a result, the CPU time grows exponentially as the size of
search space increases. One interesting point however is that as shown
in ``\% skipped'', the redundant visits constitute a significant
fraction of the total number of visits, which entails an exponentially
long running time.
If we can remove this bottleneck by skipping these unnecessary visits in
some way, the performance can be largely improved. Another interesting
point is that our framework needed the longest time at the first
iteration as shown in Figure \ref{result8}, but after several iterations
the search-space size quickly decreases. If we adopt some warm start
such as a \textit{continuation (homotopy)}
procedure \citep{Yun:2011,Wen:2010,Hale:2008a,Hale:2008b}, 
it would also improve the performance. It would be future work to
provide more realistic implementation by improving these points from 
algorithmic viewpoints.

\subsection{Correctness of implementations: How should we debug them?}

It is generally hard to make sure that a learning algorithm for graphs works
correctly. There is no easy way to visualize a large set of graphs and
subgraph features in a consistent form. This originates from the
difficulty to visually find out graph isomorphism and subgraph isomorphism.
Thus it would be an important problem how to check the correctness of
the implemented algorithm. We know that the complete check is impossible
in practice, and thus here we simply emphasize the importance of careful
double check with systematically generated dataset.

For example, our implementation, fully written in C++ from the scratch, is
based on the gSpan algorithm to implicitly build and traverse an
enumeration tree $\mathcal{T}(\mathcal{G}_n)$.
We ensure that our implementation of gSpan works in the exactly same way to
three other existing implementations with randomly generated graphs
of various types\footnote{We developed this faster implementation in the previous
project. See \cite{Takigawa:2011a} for details.}. Similarly, we
ensure that our implementation of Figure \ref{algo} works as expected
using a simple double check as follows.
When we limit the enumeration tree down to the $i$-th level, it is
relatively easy to enumerate all bounded-size subgraphs in
$\mathcal{X}(\mathcal{G}_n)$ to a certain level of $i$. Given the
results in Figure~\ref{result6}, even if we can check only up to $i=5$ or so, 
it should be valuable to check the identity to the case
where we first generate the data matrix explicitly and apply the block
coordinate gradient descent to the data matrix. Thus, for this
level-limited enumeration tree up to the $i$-th level, we can explicitly
generate the data matrix in the form of Figure
\ref{fig:feat_table}. Applying the block coordinate gradient descent to
this matrix is supposed to produce exactly the same coefficients and
function values at each iteration as those which can be obtained when we
limit $\mathcal{T}(\mathcal{G}_n)$ by size. 
Using the original implementation of block coordinate gradient descent
by \cite{Yun:2011}, we double-check that the nonzero coefficient values
and their indices are exactly the same at each iteration as our
implementation with the level limitation. In addition, we double-check
by applying glmnet \citep{Friedman:2010} to the data matrix and
confirming if the values of the objective function at convergence are
the same (because both glmnet and block coordinate gradient
descent can be applied to the same problem of 1-norm penalized logistic
regression). Note that glmnet in R first automatically standardizes the
input variables, and thus we need to ignore this preprocessing part.

\section{Conclusions}

We have developed a supervised-learning algorithm from graph data---a
set of graphs---for arbitrary twice-differentiable loss functions and
sparse linear models over all possible subgraph features. To date, it
has been shown that we can perform, under all possible subgraph
features, several specific types of sparse learning such as 
Adaboost \citep{Kudo:2005}, LPBoost \citep{Saigo:2009}, LARS/LASSO
\citep{Tsuda:2007}, and sparse PLS 
 regression \citep{Saigo:2008b}. 
We have investigated the underlying idea common to these
 preceding studies, including that we call \textit{Morishita-Kudo
 bounds}, and generalized this idea to a unifying bounding technique for
 arbitrary separable functions. 
Then, combining this generalization with further algorithmic
considerations including \textit{depth-first dictionary passing}, we
have showed that we can solve a wider class of sparse learning
problems by carefully carrying out block coordinate gradient descent
over the tree-shaped search space of infinite subgraph features. We then 
 numerically studied the difference from the existing approaches in a
 variety of viewpoints, including convergence property, selected
 subgraph features, and search-space sizes. 
As a result, from our experiments, we could observe several unnoticed
issues including the bias towards selecting small subgraph features, the
high correlation structures of subgraph indicators, and the perfect
multicollinearity caused by the equivalence classes. We believe 
that our results and findings contribute to the advance of
understanding in the field of general supervised learning from a set of
graphs with infinite subgraph features, and also restimulate this
direction of research. 

\section*{Acknowledgments}

This work was supported in part by JSPS/MEXT KAKENHI Grant Number 23710233 and 24300054. 

\appendix

 \section*{Appendix A1: Proof of Lemma \ref{cond1}}

For applying coordinate descent to \eqref{Ttheta}, we first set only the
$j$-th element of $\theta$ free, i.e. $\theta_j = z$, and fix all the
remaining elements by the current value of $\theta(t)$ as $\theta_k =
\theta(t)_k$ for all $k\neq j$. 
Then, since \eqref{Ttheta} can be written as
\[
\arg\min_{\theta_1,\theta_2,\dots} \biggl[ \,\,
\sum_{k=0}^\infty \nabla f(\theta(t))_k (\theta_k - \theta(t)_k)
+ \frac{1}{2} \sum_{k=0}^\infty \sum_{l=0}^\infty H(t)_{kl}
(\theta_k - \theta(t)_k)(\theta_l - \theta(t)_l) + \sum_{k=1}^\infty
 \lambda_1 |\theta_k| 
\,\, \biggr],
\]
we have the following univariate (one-dimensional) optimization
problem for $z$
\begin{equation}\label{univar1}
  T(\theta(t))_j = \arg \min_{z} 
\biggl[ \,\,
\nabla f(\theta(t))_j (z - \theta(t)_j)
 + \frac{1}{2} H(t)_{jj}
(z -\theta(t)_j)^2 + \lambda_1 |z|
\,\, \biggr].
\end{equation}
On the other hand, for any $a \geqslant 0$, completing the square gives us
\begin{equation}\label{univar2}
\arg\min_{z \in \mathbb{R}} \bigl[\,\, a z^2 + b z + c + \lambda
 |z|\,\,\bigr] =
\begin{cases}
-\frac{1}{2a}(b + \lambda) & b < -\lambda \\
-\frac{1}{2a}(b - \lambda) & b > \lambda \\
0 & -\lambda \leqslant b \leqslant \lambda.
\end{cases}
\end{equation}
Then, by rewriting \eqref{univar1} in the form of \eqref{univar2}, we obtain
the result in the lemma.

 \section*{Appendix A2: Proof of Theorem \ref{cond2}}

Let $L'_\mu(y,\mu) := \partial L(y,\mu)/\partial \mu$.  Then since $\partial \mu(g_i,\theta(t))/\partial \theta(t)_k =
I(x_k \subseteq g_i)$, we have
\begin{equation}\label{a2func}
\sum_{i=1}^n \frac{\partial L(y_i,\mu(g_i;\theta(t)))}{\partial
\theta(t)_k} =
\sum_{i=1}^n L'_{\mu}(y_i,\mu(g_i,\theta(t))) I(x_k \subseteq g_i).
\end{equation}
It will be noticed that this function is separable in terms of values of
$I_{\mathcal{G}_n}(x_k)$ because each $L'_{\mu}(y_i,\mu(g_i,\theta(t)))$
is constant after we substitute the current value for $\theta(t)$. From
Lemma \ref{boolprop}, for an
$n$-dimensional Boolean vector $I_{\mathcal{G}_n}(x)$,
we have $1(I_{\mathcal{G}_n}(x_k)) \subseteq 1(I_{\mathcal{G}_n}(x_j))$
for $x_k \in \mathcal{T}(x_j)$.
Thus we can obtain the Morishita-Kudo bounds defined in Lemma
\ref{mkbounds} for \eqref{a2func} by letting $v := I_{\mathcal{G}_n}(x_k)$ and $u :=
I_{\mathcal{G}_n}(x_j)$ because $1(v)\subseteq 1(u)$. In reality, the Morishita-Kudo
bounds for \eqref{a2func} can be computed as
\begin{align*}
 \underline{L}_j(t) &:=  \sum_{i \in 1(I_{\mathcal{G}_n}(x_j))} \min\bigl(
 L'_\mu(y_i,\mu(g_i;\theta(t))), 0 \bigr),\\
 \overline{L}_j(t) &:=  \sum_{i \in 1(I_{\mathcal{G}_n}(x_j))} \max\bigl(
 L'_\mu(y_i,\mu(g_i;\theta(t))), 0 \bigr),
\end{align*} 
and these two bounds satisfy, for any $k$ such that $x_k \in \mathcal{T}(x_j)$,
\[
 \underline{L}_j(t) \leqslant \sum_{i=1}^n \frac{\partial
 L(y_i,\mu(g_i;\theta(t)))}{\partial 
\theta(t)_k} \leqslant \overline{L}_j(t).
\]
On the other hand, regarding the term $(\lambda_2 -
H(t)_{kk})\theta(t)_k$, since
\[
 (\lambda_2 -
H(t)_{jj})\theta(t)_j = \begin{cases}
			 0 & (\theta(t)_j = 0)\\
			  (\lambda_2 -
H(t)_{jj})\theta(t)_j & (\theta(t)_j \neq 0)
			\end{cases}
\]
and we already have the nonzero elements $\theta(t)_j$ for $j$ such that
$\theta(t)_j \neq 0$, $\underline{B}_j(t)$ and $\overline{B}_j(t)$ are
obtained just as the smallest and largest elements of the finite sets
indexed by $\{k\mid x_k \in \mathcal{T}(x_j), \theta(t)_k \neq 0\}$. Thus we have $\underline{B}_j(t)
\leqslant (\lambda_2 -
H(t)_{kk})\theta(t)_k \leqslant \overline{B}_j(t)$ directly by
\begin{align*}
\underline{B}_j(t) & = \min \left(
0, \min_{l \in \{k\mid x_k \in \mathcal{T}(x_j), \theta(t)_k \neq 0\}}
 (\lambda_2 - H(t)_{ll}) \theta(t)_l
\right),\\
\overline{B}_j(t) &= \max \left(
0, \max_{l \in \{k\mid x_k \in \mathcal{T}(x_j), \theta(t)_k \neq 0\}}
 (\lambda_2 - H(t)_{ll}) \theta(t)_l
\right).
\end{align*}
It should be noted that as we see in Section \ref{sec_dfcp}, the
depth-first dictionary passing is an efficient way to obtain
$\underline{B}_j(t)$ and $\underline{B}_j(t)$ during the traversal of enumeration tree $\mathcal{T}(\mathcal{G}_n)$.

Now from the inequality \eqref{ineq}, we have
\[
 \left|
\sum_{i=1}^n \frac{\partial
 L(y_i,\mu(g_i;\theta(t)))}{\partial 
\theta(t)_k}
+
(\lambda_2 -H(t)_{kk})\theta(t)_k
\right| \leqslant \max\{\overline{L}_j(t)+\overline{B}_j(t),-\underline{L}_j(t)-\underline{B}_j(t)\}.
\]
The statement in Theorem \ref{cond2} follows by combining this observation with Lemma \ref{cond1}.

\bibliographystyle{plainnat}
\bibliography{glearn}

\end{document}